%
\documentclass[runningheads]{llncs}

\usepackage[utf8]{inputenc}
\usepackage{microtype}
\usepackage{graphicx}
\usepackage{hyperref}
\usepackage{changepage}
\usepackage{hhline}
\usepackage{hyperref}


\usepackage{cite}
\usepackage{comment}
\usepackage{amsmath}
\usepackage{amssymb}
\usepackage{hyperref}
 \usepackage{xcolor}
\usepackage{hhline}
\usepackage{subfig}

\usepackage[font=small]{caption}
\usepackage{algorithm}
\usepackage{algpseudocode}

\newcommand{\reals}{\ensuremath{\mathbb{R}}}
\newcommand{\naturals}{\ensuremath{\mathbb{N}}}

\newcommand{\argmin}{\mathop{\arg\,\min}}
\newcommand{\dom}{\text{dom}}

\newcommand{\E}{\mathop{\mathbb{E}}}

\newcommand{\vc}[1]{\ensuremath{\boldsymbol{#1}}}

\newcommand{\outl}{P_{\texttt{out}}}
\newcommand{\dc}{\alpha}
\newcommand{\mboa}{\texttt{MBOSADM}}
\newcommand{\mbob}{\texttt{MBOSGD}}
\newcommand{\sgd}{\texttt{SGD}}
\newcommand{\enc}{\mathtt{enc}}
\newcommand{\dec}{\mathtt{dec}}

\DeclareMathOperator{\sign}{sign}


%

\begin{document}
\title{Robust Regression via Model Based Methods\thanks{The authors gratefully acknowledge support from the National Science Foundation (Grants  CCF-1750539, IIS-1741197, and  CNS-1717213), DARPA (Grant HR0011-17-C-0050), and a research grant from American Tower Corp.}}
%
%
\author{Armin Moharrer\orcidID{0000-0002-8374-7286} \and
Khashayar Kamran\orcidID{0000-0002-4086-1038}\and
Edmund Yeh\orcidID{0000-0002-9544-1567}
\and
Stratis Ioannidis\orcidID{0000-0001-8355-4751} 
}

\authorrunning{A. Moharrer et al.}
%
\institute{Northeastern University, Boston MA 02115, USA \\
\email{\{amoharrer,kamrank,eyeh,ioannidis\}@ece.neu.edu}}
%

\newcommand{\fullversion}[2]{#1}

\maketitle              
\begin{abstract}
The   mean squared error loss is widely used in many applications, including auto-encoders, multi-target regression, and matrix factorization, to name a few. Despite computational advantages due to its differentiability, it is not robust to outliers. In contrast, $\ell_p$ norms are known to be robust, but cannot be optimized via, e.g., stochastic gradient descent, as they are non-differentiable. We propose an  algorithm inspired by so-called model-based optimization  (MBO) \cite{Ochs2019,pmlr}, which replaces a non-convex objective with a convex model function and alternates between optimizing the model function and updating the solution. 
We apply this to robust regression,  proposing SADM, a stochastic variant of  the Online Alternating Direction Method of Multipliers (OADM) \cite{wang2013online} to solve the inner optimization in MBO. We show that  SADM converges with the rate $O(\log T/T)$. 
Finally, we demonstrate experimentally  (a) the robustness of $\ell_p$ norms to outliers and (b) the efficiency of our proposed model-based algorithms in  comparison with gradient  methods on autoencoders and multi-target regression. 

\end{abstract}
\section{Introduction}\label{sec:intro}
 Mean Squared Error (MSE) loss problems are ubiquitous  in machine learning and data mining. Such problems have the following form:
 \begin{equation}\label{eq:MSSE}
\min_{{\theta}}\frac{1}{n} \sum_{i=1}^n \|F(\vc{\theta}; \vc{x}_i)\|_2^2+ g(\vc{\theta}),
\end{equation}
where  function $F:\reals^d\times \reals^m\to \reals^N$ captures the contribution of a  sample  $\vc{x}_i\in\reals^m, i=1,\ldots, n$, to the objective under the parameter $\vc{\theta}\in \reals^d$ and $g:\reals^d\to \reals$ is a regularizer. Example applications include training auto-encoders \cite{jiang2016l2,mehta2016stacked}, matrix factorization \cite{gillis2014nonnegative}, and multi-target regression \cite{waegeman2019multi}.

\begin{figure}
    \subfloat[Avg. Non-outliers  Loss]{\includegraphics[width=0.33\textwidth]{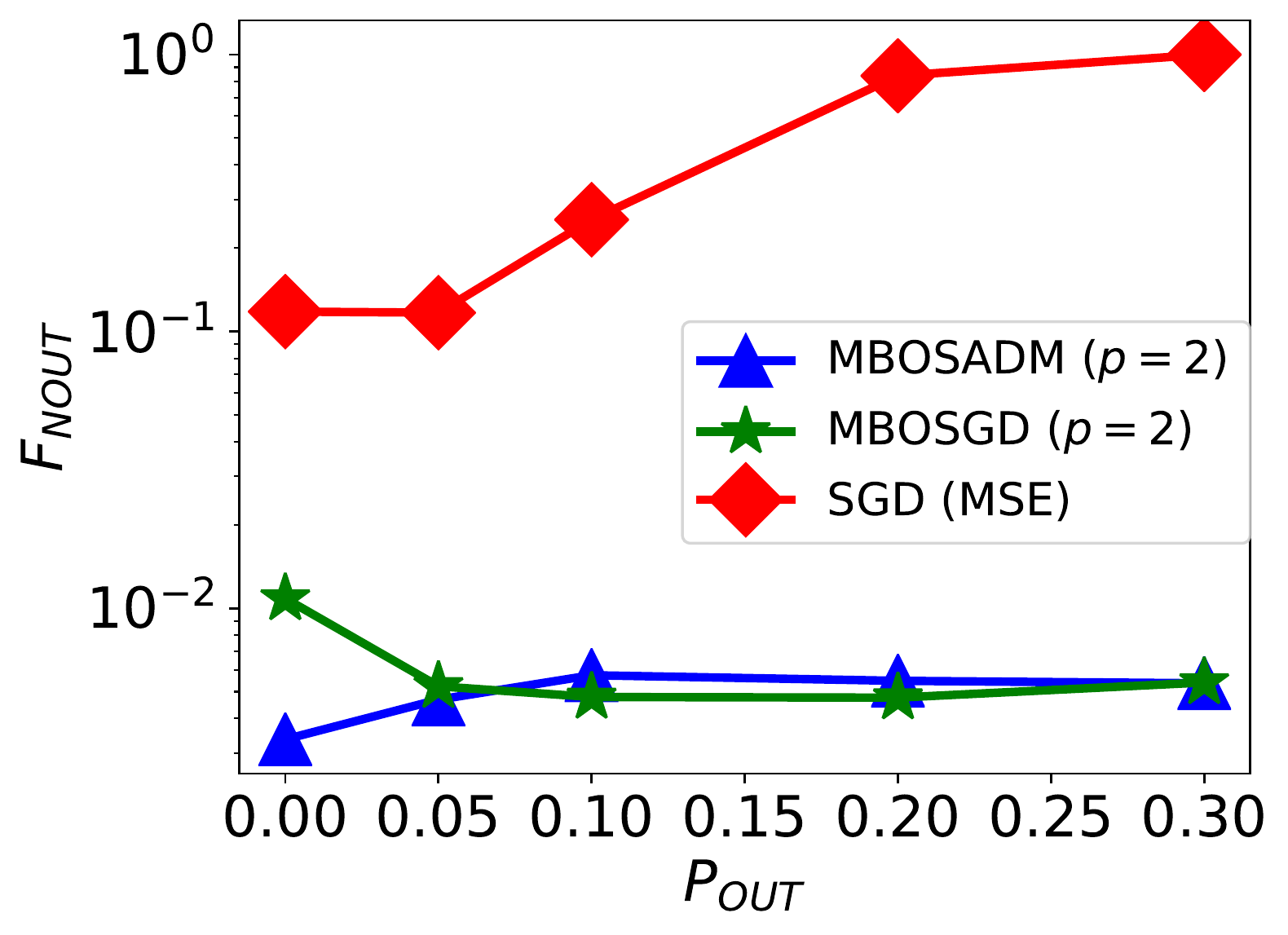}\label{fig:intro_nout}}
    \subfloat[Avg. Test Loss]{\includegraphics[width=0.33\textwidth]{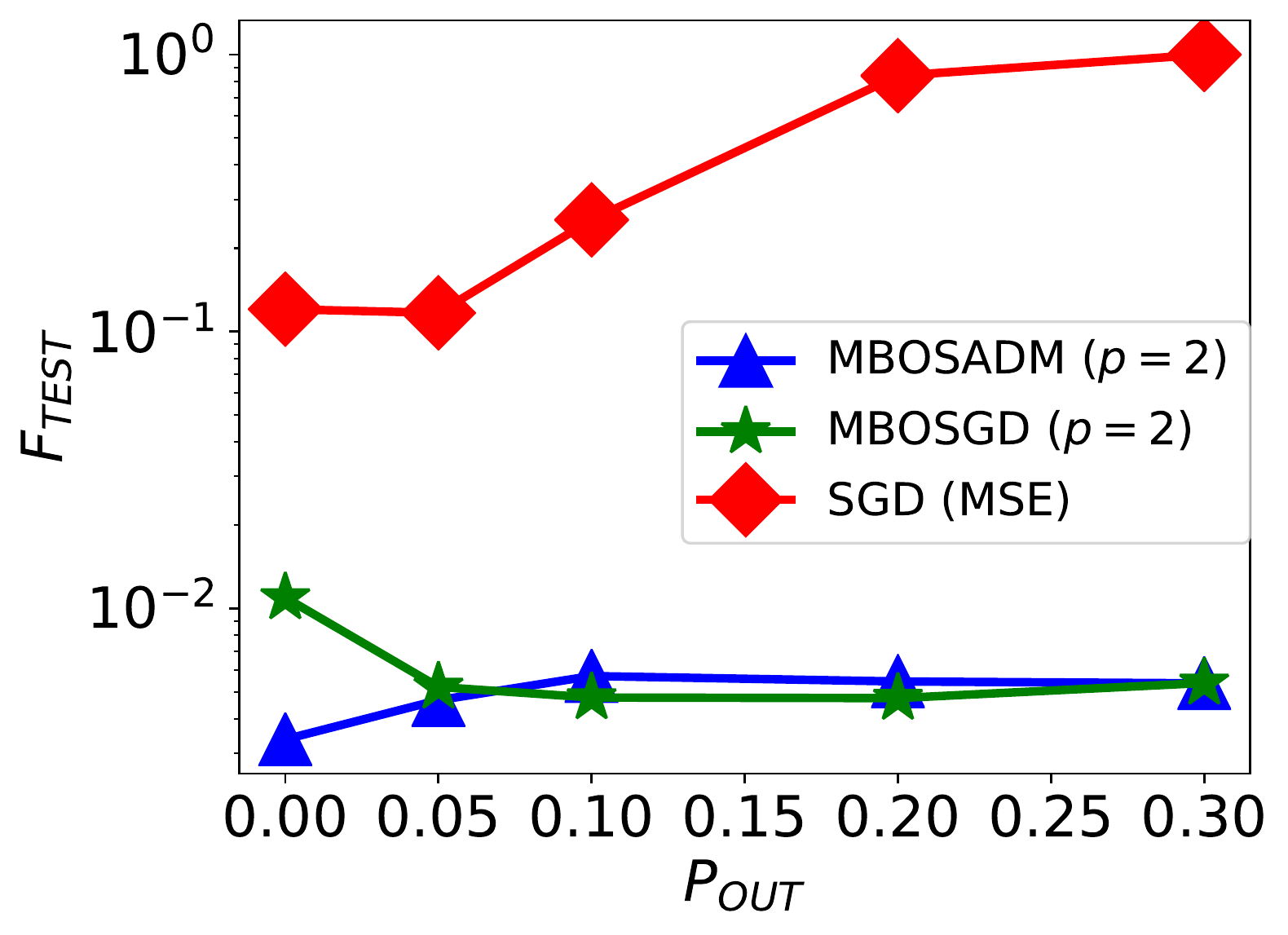}\label{fig:intro_test}}
    \subfloat[Accuracy]{\includegraphics[width=0.33\textwidth]{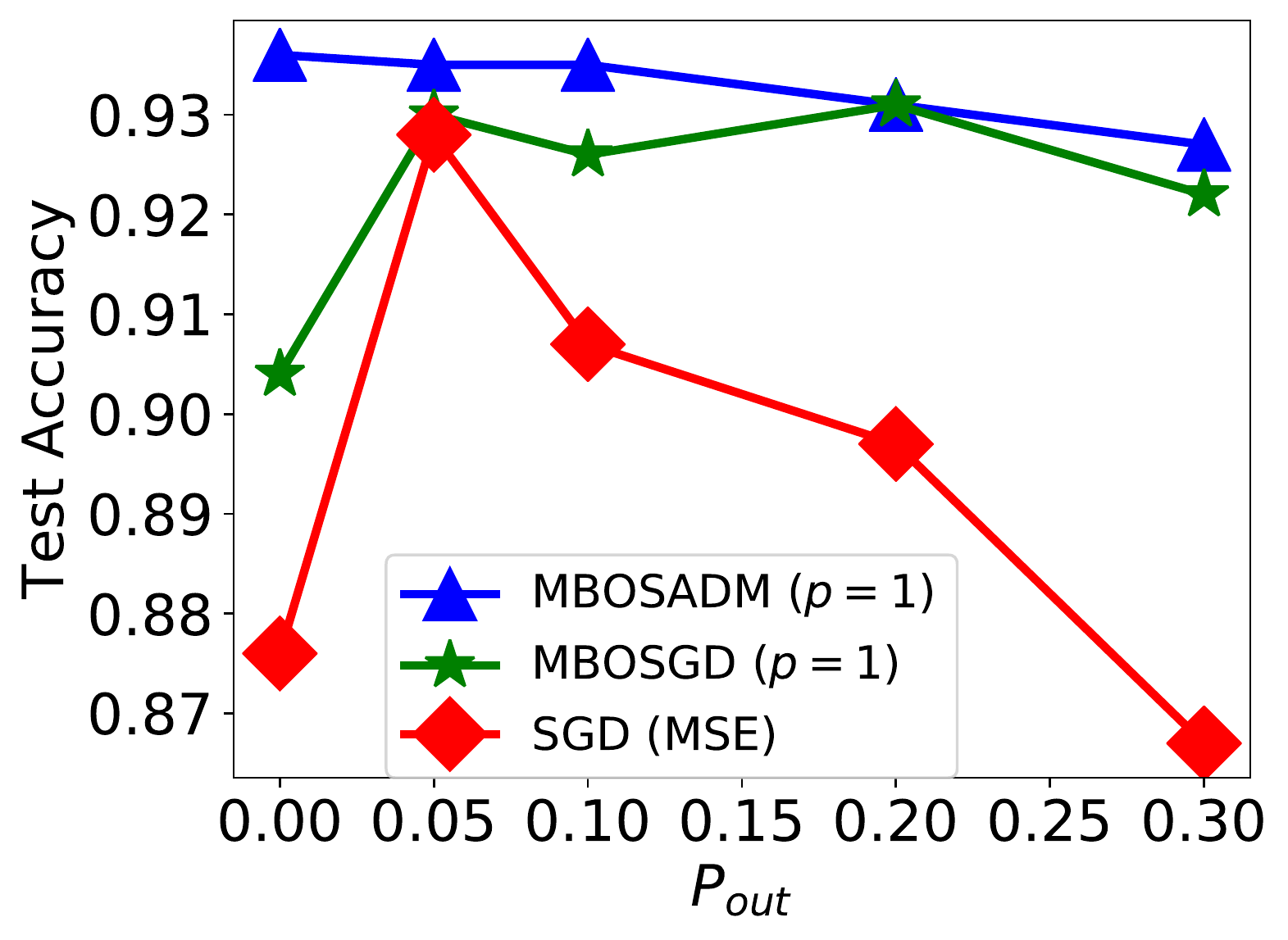}\label{fig:intro_acc}}
    \caption{Robustness of $\ell_p$ norms vs.~MSE to outliers introduced to \texttt{MNIST}  when training an autoencoder. Figures~\ref{fig:intro_nout} and \ref{fig:intro_test} show the average loss over the  non-outliers and the test set, respectively;  values in each figure are normalized w.r.t. the largest value. The test accuracy of a logistic regression on the latent features is shown in Fig.~\ref{fig:intro_acc}.  We see that, under MSE, both for the loss values and classification accuracy are significantly affected by the fraction of outliers  $\outl.$ Robust embeddings under $p=1,2$ norms optimized via our proposed MBO methods exhibit almost constant behavior w.r.t. $\outl.$}\label{fig:demo}
\end{figure}

The MSE loss in \eqref{eq:MSSE} is computationally convenient, as the resulting problem is smooth and can thus be optimized efficiently via gradient methods, such as  stochastic gradient descent (SGD). However, it is well-known that the MSE loss is not robust to \emph{outliers} \cite{friedman2001elements,ding2006r,nie2010efficient,kong2011robust,ke2005robust}, i.e., samples   far from the dataset mean. Intuitively, when squaring the error,  outliers  tend to  dominate the objective. 
 To mitigate the effect of outliers, a classic  approach is to  introduce robustness by replacing the squared error with either the $\ell_2$ norm \cite{ding2006r,nie2010efficient,qian2013robust,jiang2016l2,du2015robust,mehta2016stacked} or the $\ell_1$ norm \cite{eriksson2010efficient,croux1998robust,ke2005robust,kwak2008principal,li2010l1,baccini1996l1,pesme2020l1loss,kong2011robust}. This has been  applied to several applications, including feature selection \cite{nie2010efficient,qian2013robust},  PCA \cite{kwak2008principal,li2010l1,baccini1996l1,ding2006r}, K-means clustering \cite{du2015robust}, training autoencoders \cite{jiang2016l2,mehta2016stacked}, matrix factorization \cite{eriksson2010efficient,croux1998robust,ke2005robust,kong2011robust}, and regression \cite{pesme2020l1loss}. 
 Motivated by this approach, we study  the following robust variant of Problem~\eqref{eq:MSSE}:
  \begin{align}\label{eq:intro_problem}
 \min_{{\theta}}\frac{1}{n} \sum_{i=1}^n \|F(\vc{\theta}; \vc{x}_i)\|_p+ g(\vc{\theta}),
\end{align}
where $\|\cdot\|_p$ denotes an  $\ell_p$ norm ($p\geq1$). 
We are particularly interested in cases where $F$ is not affine and, in general, Problem~\eqref{eq:intro_problem} is non-convex.   
This includes, e.g., feature selection \cite{nie2010efficient}, matrix factorization \cite{kong2011robust,eriksson2010efficient}, auto-encoders \cite{jiang2016l2}, and deep multi-target regression \cite{pesme2020l1loss,waegeman2019multi}.

A significant challenge behind solving Prob.~\eqref{eq:intro_problem} is that its objective is not smooth, precisely because the $\ell_p$ norm is not differentiable at $\vc{0}\in \reals^N.$ For non-convex and non-smooth problems of the form \eqref{eq:intro_problem}, where the objective contains a composite function, \emph{Model-Based Optimization} (MBO) methods \cite{lewis2016proximal, davis2019proximally,drusvyatskiy2018error,davis2019stochastic,drusvyatskiy2019efficiency,Ochs2019} come with good experimental performance as well as theoretical guarantees. In particular, these MBO methods 
define a convex (but non-smooth) approximation of the  main objective, called the \emph{model function}. They then iteratively optimize this model function plus a proximal quadratic term. Under certain conditions, MBO  converges to a stationary point of the non-convex problem \cite{lewis2016proximal}. 

In this work, we use MBO to solve Problem~\eqref{eq:intro_problem} for arbitrary $\ell_p$ norms. In particular, each MBO iteration results in a convex optimization problem. We solve these sub-problems using a novel 
 stochastic variant of the Online Alternating Direction Method  (OADM) \cite{online_admm}, which we call \emph{Stochastic Alternating Direction Method} (SADM). Using SADM is appealing, as  its resulting steps  have efficient gradient-free solutions; in particular, we exploit a bisection method \cite{liu2010efficient, moharrer2020massively} for finding the proximal operator of $\ell_p$ norms. 
 We provide theoretical guarantees for  SADM. 
As an additional benefit, SADM comes with a stopping criterion, which is hard to obtain for gradient methods when the objective is  non-smooth  \cite{mai2020convergence}.

Overall, we make the following contributions: 
\begin{itemize}
    \item We study a general outlier-robust optimization that replaces the MSE with $\ell_p$ norms. We show that such problems can be solved via Model-Based Optimization (MBO) methods.  
    \item We propose SADM, i.e., a stochastic version of OADM, and show that under strong convexity of the regularizer $g$, it converges with a $O(\log T/T)$ rate  when solving the sub-problems arising at each MBO iteration.
    \item We conduct extensive experiments on training auto-encoders and multi-target regression. We show  (a) the higher robustness of $\ell_p$ norms in comparison with  MSE and (b) the superior performance of MBO, against stochastic gradient methods, both in terms of minimizing the objective and performing down-stream classification tasks. In some cases, we see that the MBO variant using SADM obtains objectives that are 29.6$\times$ smaller than the ones achieved by the competitors. 

\end{itemize}
The performance of our MBO approach is illustrated in Fig.~\ref{fig:demo}. An autoencoder trained via SGD over the MSE objective is significantly affected by the presence of outliers; in contrast, our MBO methods applied to $\ell_p$ objectives are robust to outliers. These relative benefits are also evident  in a downstream classification task over the latent embeddings.
The remainder of this paper is organized as follows. We review related work in Sec.~\ref{sec:related}. We introduce our robust formulation along with its applications in Sec.~\ref{sec:probapp}. We describe  the instance of  MBO applied to our problem in Sec.~\ref{sct:MBO}. We  introduce SADM and its convergence analysis in Sec.~\ref{sec:oadm} and present our experiments in Sec.~\ref{sec:experiments}. We finally conclude in Sec.~\ref{sec:conclusion}.

\section{Related Work}\label{sec:related}

\noindent \textbf{Robustness of $\ell_p$ Norms}: To improve the sensitivity of MSE to outliers,  Ding et al. \cite{ding2006r} first suggested replacing the MSE with the $\ell_2$ norm in the context of Principal Component Analysis (PCA). This motivated a line of research for developing robust algorithms using the $\ell_2$ norm  in different applications, e.g., 
 non-negative matrix factorization \cite{kong2011robust},  feature selection \cite{qian2013robust,nie2010efficient}, training autoencoders \cite{jiang2016l2}, and $k$-means clustering \cite{du2015robust}. Attaining robustness via the $\ell_1$ norm has also been used in matrix factorization \cite{eriksson2010efficient,croux1998robust,ke2005robust}, PCA \cite{kwak2008principal,li2010l1,baccini1996l1}, and regression \cite{pesme2020l1loss}. Robustness of  the $\ell_1$ norm can be linked to robustness of median to outliers in comparison to average value (see, e.g., 
 Friedman et al.~\cite{friedman2001elements}). 
Our problem includes robust variations considered in, e.g., \cite{pesme2020l1loss,nie2010efficient,jiang2016l2,kong2011robust,eriksson2010efficient},  as special cases. However, these earlier algorithms  are  tailored to specific $\ell_p$ norms  and/or  do not  generalize beyond the  studied  objective or application (some  works, e.g., \cite{nie2010efficient,pesme2020l1loss}, only consider convex problems). In contrast, we unify  these variations for different applications  as a non-convex and non-smooth problem, and present  a general optimization algorithm for arbitrary  $\ell_p$ norms. 
 
 \noindent \textbf{Non-smooth/non-convex Optimization}: Non-smooth and non-convex optimization problems arise in many applications, such as non-negative matrix factorization \cite{gillis2014nonnegative}, compressed
sensing with non-convex norms \cite{attouch2010proximal}, and $\ell_p$ norm regularized sparse regression problems   \cite{natarajan1995sparse,blumensath2009iterative}.
A class of non-smooth non-convex optimization problems, known as \emph{weakly convex problems} \cite{vial1983strong}, i.e.,  problems in which the objective function is the  sum of a convex function and a quadratic function, have attracted a lot of attention \cite{lewis2016proximal,duchi2018stochastic,drusvyatskiy2019efficiency,davis2019stochastic,le2020inertial, mai2020convergence}. Mai and Johansson \cite{mai2020convergence} provided novel theoretical guarantees on the convergence of stochastic gradient descent with momentum for weakly-convex functions. However, in our experiments in Sec.~\ref{sec:experiments}, we show that model-based methods considerably outperform these stochastic gradient methods with momentum. %

Our approach  falls under the class of  \emph{prox-linear} methods \cite{lewis2016proximal,duchi2018stochastic,drusvyatskiy2018error,drusvyatskiy2019efficiency,davis2019stochastic,le2020inertial}, that  solve problems where the objective is a composition of a non-smooth  convex function and a smooth function, exactly as in  Prob.~\eqref{eq:intro_problem}.  
Such methods iteratively minimize the composition of the non-smooth function with the first-order approximation of the smooth function \cite{lewis2016proximal,duchi2018stochastic,drusvyatskiy2019efficiency,davis2019stochastic}. 
Lewis and Wright \cite{lewis2016proximal}   prove convergence to a stationary point while Drusvyatskiy et al.~prove linear convergence \cite{drusvyatskiy2018error} and obtain sample complexity guarantees 
\cite{drusvyatskiy2019efficiency}. 
Ochs et al.~\cite{Ochs2019,pmlr} generalize prox-linear methods 
by proposing \emph{Model-Based Optimization} (MBO) for  both smooth and non-smooth non-convex problems. 
MBO reduces to a prox-linear method when the objective has a composite form, as in our case. Ochs et al.~further considered  non-quadratic proximal penalties in sub-problems and complemented MBO  with an  Armijo-like line search. We leverage both their  line search and  theoretical guarantees (c.f.~Prop.~\ref{prop:ochs}); our main technical departure is in solving sub-problems per iteration via  SADM, which we discuss next. 

\noindent \textbf{ADMM.}   The Alternating Direction Method of Multipliers (ADMM) \cite{boyd2011distributed} is a convex optimization algorithm that provides efficient methods for non-smooth problems. Applying ADMM  often  results in  sub-problems that can be solved efficiently via   proximal operators \cite{boyd2011distributed,peng2012rasl,tao2011recovering}. To speed up ADMM, stochastic variants \cite{zheng2016fast,liu2017accelerated,ouyang2013stochastic} have been proposed for minimizing  sum-like objectives. These stochastic variants, similar to SGD, update solutions using  the gradients of a  small batch of terms in the objective, at each iteration. 
Another group of works proposed  online variants of ADMM \cite{online_admm,suzuki2013dual,hosseini2014online}. In these  variants, the goal is to minimize the summation of loss functions that are revealed  by an adversary. 

 Wang and Banerjee  \cite{online_admm} proposed the first online variant of ADMM, termed  Online Direction Method of Multipliers (OADM). 
  Here, we  propose a  stochastic version of OADM, {Stochastic Alternating Direction Method} (SADM), to solve inner-problems in MBO iterations. SADM is  similar to OADM with the  difference that functions are sampled uniformly at random and are not given by an adversary. 
  We prove that SADM converges with a $O(\log T/T)$ rate  when the regularizer is strongly convex. Other existing stochastic or online ADMM variants either require a  smooth objective \cite{zheng2016fast,liu2017accelerated} or bounded sub-gradients \cite{ouyang2013stochastic,suzuki2013dual}, neither of which apply for the inner problems we solve. In contrast, we  show that applying SADM results in sub-problems that admit gradient-free efficient solutions  
   via a bisection method for finding proximal operators of $\ell_p$ norms \cite{liu2010efficient,moharrer2020massively}.

\section{Robust Regression and Applications}\label{sec:probapp}

\noindent\textbf{Notations.}
Lowercase boldface letters represent vectors, while capital boldface letters represent matrices. 
We also use the notation $[n] \triangleq \{1,2,\dots,n \}$.
 
\noindent\textbf{Robust Regression.} 
We first  extend Prob.~\eqref{eq:intro_problem} to include constraints via:
\begin{equation} \label{eq:problem}
 \min_{\vc{\theta}}\frac{1}{n} \sum_{i\in[n]} \|F(\vc{\theta}; \vc{x}_i)\|_p+ g(\vc{\theta}) +\chi_{\mathcal{C}}(\vc{\theta}),
\end{equation}
where, again,  $F:\reals^d \times \reals^m \rightarrow \reals^N$ is smooth, 
$|| \cdot ||_p$ is the $\ell_p$ norm,   $g:\reals^d\rightarrow \reals$ is a convex regularizer such that inf $g > -\infty$,  while $\chi_{\mathcal{C}}:\reals^d\rightarrow \{0, \infty\}$ is the indicator function of the convex set $\mathcal{C}\subseteq \reals ^d.$ In practice, we are often interested in cases where  either the regularizer or the constraint is absent.

\noindent\textbf{Applications.} For the sake of concreteness, we introduce some applications of Prob.~\eqref{eq:problem}.
Function $g$ is typically either the lasso  (i.e., the $\ell_1$ norm  $g(\vc{\theta})=\|\vc{\theta}\|_1$)  or ridge regularizer (i.e., the $\ell_2$ norm squared  $g(\vc{\theta})=\|\vc{\theta}\|_2^2$).  We thus focus on the definition of $F(\cdot;\cdot)$ and constraint set $\mathcal{C}$ in each of these applications.


\noindent\emph{Auto-encoders \cite{jiang2016l2}.} Given $n$ data points $\vc{x}_i\in\reals^m$, $i\in [n]$, auto-encoders embed them in a $m^\prime-$dimensional space, $m'\ll m$, as follows. The mapping to $\reals^{m'}$ is done by a  possibly non-linear function (e.g., a neural network) with $d_{\enc}$ parameters $F_{\enc}:\reals^{d_{\enc}}\times \reals^m\rightarrow \reals^{m^\prime}$, called the \emph{encoder}. An inverse mapping, the \emph{decoder} $F_{\dec}:\reals^{d_{\dec}}\times \reals^{m^\prime} \rightarrow \reals^ m$  with $d_{\dec}$ parameters re-constructs the original points given latent embeddings. 
 Both the encoder and the decoder are trained jointly over a dataset $\{\vc{x}_i\}_{i=1}^n$ by minimizing the reconstruction error; cast in our robust setting, this amounts to minimizing~\eqref{eq:problem} with \begin{align}F(\vc{\theta}; \vc{x}_i) = \vc{x}_i - F_{\dec}\left(\vc{\theta}_{\dec}; F_{\enc}(\vc{\theta}_{\enc}; \vc{x}_i) \right),\end{align}
 where $\vc{\theta}=[\vc{\theta}_{\dec};\vc{\theta}_{\enc}]\in \reals ^{d_{\enc} +d_{\dec}}$ comprises the parameters of the encoder and the decoder.
 Robustness here aims to ameliorate the effect of outliers in the dataset $\{\vc{x_i}\}_{i=1}^n$. 
 The constraint set can be $\reals^d$ (i.e., the problem is unconstrained)  or an $\ell_p$-norm ball (i.e., $\{\theta\mid \|\theta\|_p\leq r\},$ for some $r>0$, $p\geq 1$), when the magnitute of parameters is constrained; this can be used instead of a $\ell_1$ or $\ell_2$ norm regularizer. In stacked denoising autoencoders \cite{vincent2010stacked}, the encoder and decoder are shallow and satisfy the additional constraint $\theta_{\enc}=\theta_{\dec}$. 


\noindent \emph{Multi-target Regression   \cite{Spyromitros-Xioufis2016}.}
We are given a set of $n$ data points $\vc{x}_i\in\reals^{m}$, $i\in [n]$ and the corresponding target labels $\vc{y}_i \in \reals^{m^\prime}.$ 
The goal is to train a (again possibly non-linear) function $f:\reals^d\times \reals^m \to \reals^{m^\prime}$, with $d$ parameters, to predict target values for a given vector $\vc{x}\in \reals^m$. This maps to Prob.~\eqref{eq:problem} via: \begin{align}F(\vc{\theta};\vc{x}_i, \vc{y}_i)=\vc{y}_i - f(\vc{\theta}; \vc{x}_i).\end{align}
Robustness in this setting corresponds to ameliorating the effect of outliers in the \emph{label} space, i.e., among labels $\{\vc{y}_i\}_{i=1}^n$. The constraint set can again be $
\reals^d$ or defined through an $\ell_p$-norm ball (instead of the corresponding regularizer).


\noindent\emph{Matrix Factorization   \cite{paatero1994positive}.} Given a matrix $\vc{X}\in \reals^{n \times m}$, the goal is to express it a the product of two matrices $\vc{G}$,$\vc{H}$. Cast in our setting,  each row  $\vc{x}_i \in \reals^m, i\in [n]$, of $\vc{X}$ is mapped to a lower dimensional sub-space as a vector $\vc{h}_i \in \reals^{m ^\prime}$, where the sub-space basis is defined by the rows of the matrix $\vc{G}\in \reals^{m \times m^\prime}$. 
Function $F$ is then given by $F(\vc{\theta}; \vc{x}_i) =   \vc{x}_i - \vc{G}\vc{h}_i,$
where $\vc{\theta}=(\vc{G}, \vc{H})$ and   the rows of the matrix $\vc{H}\in \reals^{n \times m^\prime}$ are the low-dimensional embeddings $\vc{h}_i.$ Robustness here limits sensitivity to outliers in rows; a similar problem can be defined in terms of robustness to outliers in columns. Beyond usual boundedness constraints, additional constraints are introduced in so-called \emph{non-negative matrix factorization} \cite{paatero1994positive,fevotte2011algorithms}, where matrices $\vc{G}$ and $\vc{H}$ are  constrained to be non-negative.  

For all three applications, we assume that  $F$ is smooth; this requires, e.g.,  smooth activation functions in deep models. Moreover, in all three examples, Prob.~\eqref{eq:problem} is  non-convex and non-smooth, as $\|\cdot\|_p$ is non-differentiable at $\mathbf{0}\in\reals^N$. 



\section{Robust Regression via MBO}\label{sct:MBO}
In this section, we outline how non-smooth, non-convex Prob.~\eqref{eq:problem} can be solved via \emph{model-based optimization} (MBO)~\cite{Ochs2019}. 
 MBO relies on the use of a  model function, which is a convex approximation of the main objective. In short, the algorithm proceeds iteratively, approximating function $F(\cdot;\cdot)$ by it's 1st order Taylor expansion at each iteration. This approximation is affine in $\vc{\theta}$, and results in a convex optimization problem per iteration.
 
In more detail, cast into our setting, MBO proceeds as follows. Starting with a feasible solution $\vc{\theta}^0\in \mathcal{C}$, it performs the following operations in  each step $k\in\naturals$:
\begin{subequations}%
\begin{align}\label{eq:unconstainedsub}%
   \tilde{\vc{\theta}}^{k}&= \argmin_{\vc{\theta}} F_{\vc{\theta}^k} (\vc{\theta}) + \frac{h}{2}\|\vc{\theta}- \vc{\theta}^k\|_2^2, \\
   \vc{\theta}^{k+1} &= (1-\eta^k) \vc{\theta}^k + \eta^k \Tilde{\vc{\theta}}^k,\label{eq:updstep}
\end{align}
\end{subequations}
where $h>0$ is a regularization parameter, $\eta^k>0$ is a step size, and function $F_{\vc{\theta}^k}:\reals^d \to \reals$ is the so-called \emph{model function} at $\vc{\theta}^k$, defined as:
\begin{align}\label{eq:MF}
    F_{{\vc{\theta}}^k}(\vc{\vc{\theta}})\triangleq  \frac{1}{n} \sum_{i\in [n]}\|F({\vc{\theta}^k}; \vc{x}_i)+ \vc{D} _{F_i}({\vc{\theta}^k})(\vc{\theta}-{\vc{\theta}}^k)\|_p +g(\vc{\theta}) + \chi_{\mathcal{C}}(\vc{\theta}), 
\end{align}
where $\vc{D}_{F_i}(\vc{\theta})\in \reals^{N\times d}$ is the Jacobian of  $F(\vc{\theta}; \vc{x}_i)$ w.r.t. $\vc{\theta}.$ Thus, in each step, MBO replaces $F$ with its 1st-order Taylor approximation and minimizes the objective   plus a proximal penalty; the resulting $\tilde{\vc{\theta}}^k$ is interpolated with the current solution $\vc{\theta}^k$.

The above steps are summarized  in Alg.~\ref{alg:mbo}.
  The step size $\eta^k$ is computed via an Armijo-type line search algorithm, which we present  in detail in \fullversion{App.~\ref{app:lsa} in  \cite{our}.}{App.~\ref{app:lsa}.}  Moreover, the inner-step optimization via \eqref{eq:unconstainedsub} can be inexact; the following proposition shows asymptotic convergence of MBO to a stationary point using an inexact solver (see also App.~\ref{app:lsa} \fullversion{in  \cite{our}}{}):
\begin{proposition} \label{prop:ochs} (Theorem 4.1 of \cite{Ochs2019})
Suppose $\vc{\theta}^*$ is the limit point of the sequence $\vc{\theta}^k$ generated by Alg.~\ref{alg:mbo}. Assume $F_{\vc{\theta}^k}(\Tilde{\vc{\theta}}^k) + \frac{h}{2}\|\Tilde{\vc{\theta}}^k- \vc{\theta}^k\|_2^2 - \inf_{\tilde{\vc{\theta}}} F_{\vc{\theta}^k}(\tilde{\vc{\theta}}) + \frac{h}{2}\|\tilde{\vc{\theta}}- \vc{\theta}^k\|_2^2 \leq \epsilon^k$, for all iterations $k$, and that $\epsilon^k \to 0$. Then $\vc{\theta}^*$ is a stationary point of Prob.~\eqref{eq:problem}.
\end{proposition}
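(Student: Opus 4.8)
Write $\Phi(\vc{\theta})\triangleq \frac{1}{n}\sum_{i\in[n]}\|F(\vc{\theta};\vc{x}_i)\|_p + g(\vc{\theta}) + \chi_{\mathcal{C}}(\vc{\theta})$ for the objective of Prob.~\eqref{eq:problem}. The plan is to verify that the model function \eqref{eq:MF} together with the Armijo-type line search underlying Alg.~\ref{alg:mbo} meet the structural assumptions of the abstract model-based convergence theory of \cite{Ochs2019}, and then to invoke it; below I sketch the mechanism. The first step is to show that $F_{\vc{\theta}^k}$ is a \emph{first-order model} of $\Phi$ at $\vc{\theta}^k$: (i) it is convex, since each summand composes the convex, $1$-Lipschitz norm $\|\cdot\|_p$ with an affine map, and $g,\chi_{\mathcal{C}}$ are convex; (ii) it interpolates the objective, $F_{\vc{\theta}^k}(\vc{\theta}^k)=\Phi(\vc{\theta}^k)$; (iii) $\partial F_{\vc{\theta}^k}(\vc{\theta}^k)=\partial\Phi(\vc{\theta}^k)$, since the affine pieces carry the exact Jacobians $\vc{D}_{F_i}(\vc{\theta}^k)$, matching the composite chain rule for $\Phi$; and (iv) because $F$ is smooth with locally Lipschitz Jacobian and $\|\cdot\|_p$ is $1$-Lipschitz, the Taylor remainder bound yields a two-sided estimate $|F_{\vc{\theta}^k}(\vc{\theta})-\Phi(\vc{\theta})|\le \tfrac{L}{2}\|\vc{\theta}-\vc{\theta}^k\|_2^2$ on any bounded set containing the iterates; this quadratic growth condition is exactly what \cite{Ochs2019} requires of a model function.

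With these facts in place I would run the descent argument of \cite{Ochs2019}. The line search is constructed so that the interpolation \eqref{eq:updstep} produces a decrease of $\Phi$ proportional to $\eta^k\Delta^k$, where $\Delta^k\triangleq\Phi(\vc{\theta}^k)-\big(F_{\vc{\theta}^k}(\tilde{\vc{\theta}}^k)+\tfrac{h}{2}\|\tilde{\vc{\theta}}^k-\vc{\theta}^k\|_2^2\big)$, and the two-sided estimate (iv) guarantees that the accepted $\eta^k$ stays bounded away from $0$ whenever $\Delta^k$ does. Since $\Phi$ is bounded below ($\inf g>-\infty$, $\|\cdot\|_p\ge 0$, $\chi_{\mathcal{C}}\ge0$), telescoping gives $\sum_k\eta^k\Delta^k<\infty$, hence $\Delta^k\to0$. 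Writing $\Delta^k = \big(\Phi(\vc{\theta}^k)-\inf_{\vc{\theta}}\{F_{\vc{\theta}^k}(\vc{\theta})+\tfrac{h}{2}\|\vc{\theta}-\vc{\theta}^k\|_2^2\}\big)-r^k$ with $0\le r^k\le\epsilon^k$, the hypothesis $\epsilon^k\to0$ forces the \emph{exact} proximal gap of the $k$-th subproblem to $0$; by $h$-strong convexity of $\vc{\theta}\mapsto F_{\vc{\theta}^k}(\vc{\theta})+\tfrac{h}{2}\|\vc{\theta}-\vc{\theta}^k\|_2^2$ this also drives $\|\hat{\vc{\theta}}^k-\vc{\theta}^k\|_2\to0$ and $\|\tilde{\vc{\theta}}^k-\hat{\vc{\theta}}^k\|_2\to0$, where $\hat{\vc{\theta}}^k$ denotes the exact minimizer of that subproblem. (The role of $\epsilon^k\to0$ is precisely to prevent $\Delta^k$ from vanishing because of suboptimality rather than near-stationarity.)

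Finally I would pass to the limit along the subsequence $\vc{\theta}^{k_j}\to\vc{\theta}^*$, noting that then also $\hat{\vc{\theta}}^{k_j}\to\vc{\theta}^*$. The optimality condition of the convex subproblem gives $-h(\hat{\vc{\theta}}^{k_j}-\vc{\theta}^{k_j})\in\partial F_{\vc{\theta}^{k_j}}(\hat{\vc{\theta}}^{k_j})$, so the chain-rule representation of this subgradient, $\tfrac1n\sum_i\vc{D}_{F_i}(\vc{\theta}^{k_j})^{\top}\vc{u}_i^{k_j}+\vc{w}^{k_j}$ with $\vc{u}_i^{k_j}\in\partial\|\cdot\|_p\big(F(\vc{\theta}^{k_j};\vc{x}_i)+\vc{D}_{F_i}(\vc{\theta}^{k_j})(\hat{\vc{\theta}}^{k_j}-\vc{\theta}^{k_j})\big)$ and $\vc{w}^{k_j}\in\partial g(\hat{\vc{\theta}}^{k_j})+\partial\chi_{\mathcal{C}}(\hat{\vc{\theta}}^{k_j})$, converges to $\vc{0}$. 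Using continuity of the Jacobians $\vc{D}_{F_i}(\cdot)$, convergence of the norm arguments to $F(\vc{\theta}^*;\vc{x}_i)$, and outer semicontinuity of the convex subdifferentials $\partial\|\cdot\|_p$, $\partial g$, $\partial\chi_{\mathcal{C}}$, any accumulation point of these vectors lies in $\tfrac1n\sum_i\vc{D}_{F_i}(\vc{\theta}^*)^{\top}\partial\|\cdot\|_p(F(\vc{\theta}^*;\vc{x}_i))+\partial g(\vc{\theta}^*)+\partial\chi_{\mathcal{C}}(\vc{\theta}^*)=\partial\Phi(\vc{\theta}^*)$; hence $\vc{0}\in\partial\Phi(\vc{\theta}^*)$, i.e.\ $\vc{\theta}^*$ is stationary for Prob.~\eqref{eq:problem}. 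I expect the crux to be twofold: proving the two-sided quadratic model-error bound (iv), where smoothness of $F$ and $1$-Lipschitzness of $\|\cdot\|_p$ combine, and the limiting step above, where approximate stationarity of the varying, merely-convex subproblems must be transferred to exact generalized stationarity of the nonconvex, nonsmooth $\Phi$, using (iv) and $\epsilon^k\to0$ jointly. Since the statement is quoted verbatim from Theorem~4.1 of \cite{Ochs2019}, in the paper it suffices to check the structural facts above and cite that result.
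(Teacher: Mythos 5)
Your proposal is correct and takes essentially the same route as the paper: the paper's proof of this proposition consists precisely of checking that the model function \eqref{eq:MF} is an instance of Example 5.3 of \cite{Ochs2019} (your structural facts (i)--(iv)), that the Euclidean proximal term satisfies the Bregman-distance assumption, and that the inexactness hypothesis $\epsilon^k \to 0$ matches Assumption 4.1 there, after which Theorem 4.1 of \cite{Ochs2019} is invoked. Your additional sketch of the descent, telescoping, and subgradient-limit arguments reproduces the internals of that cited theorem, which the paper deliberately does not re-derive; it is consistent with the citation-based proof but not needed for it.
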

For completeness, we prove Proposition~\ref{prop:ochs}  in Appendix~\ref{append:proofpropochs}, by showing that assumptions of Theorem 4.1 of \cite{Ochs2019} are indeed satisfied.  Problem~\eqref{eq:unconstainedsub} is convex but still non-smooth; we discuss how it can be solved efficiently via  SADM in the next section.


\begin{algorithm}[!t] 
    \caption{Model-based Minimization (MBO)}\label{alg:MBO}
    \label{alg:mbo}
    \begin{algorithmic}[1]
      \State {\bfseries Input: Initial solution $\vc{\theta}^0\in \dom F$, iteration number $K$ set $\delta, \gamma \in (0,1),$ and $\tilde{\eta} > 0$}
       \For{$k\in [K]$} 
       
         \State{ $\vc{\Tilde{\vc{\theta}}}^k := \argmin_{\vc{\theta}} F_{\vc{\theta}^k} (\vc{\theta}) + \frac{h}{2}\|\vc{\theta}- \vc{\theta}^k\|_2^2$}\label{alg_line:inner_prob}
         \State {Find $\gamma^k$ via Armijo search rule}
         \State{$\vc{\theta}^{k+1} := (1-\eta^k) \vc{\theta}^k + \eta^k \Tilde{\vc{\theta}}^k$}
       \EndFor
    \end{algorithmic}
\end{algorithm}

\section{Stochastic Alternating Direction Method of Multipliers}\label{sec:oadm}

After dealing with convexity via MBO, there are still two challenges behind solving the constituent sub-problem \eqref{eq:unconstainedsub}. The first is the non-smoothness of $\|\cdot\|_p$; the second is  scaling in $n$, which calls for a the use of a stochastic optimization method, akin to SGD (which, however, is not applicable due to the lack of smoothness). We address both through the a novel approach, namely, SADM, which is a stochastic version of the OADM algorithm by Wang and Banerjee \cite{online_admm, wang2013online}. Most importantly, our approach reduces the solution of Prob.~\eqref{eq:unconstainedsub} to several gradient-free optimization sub-steps, which can be computed  efficiently.  In addition, using an SADM/ADMM variant comes with clear stopping criteria, which is challenging for  traditional stochastic subgradient methods \cite{mai2020convergence}.

\subsection{SADM}
We first describe how our SADM 
can be applied to solve Prob.~\eqref{eq:unconstainedsub}. 
 We  introduce the following notation to make our exposition more concise:%
\begin{subequations}%
\label{eq:newF}%
\begin{align}
F^{(k)}(\vc{\theta}; \vc{x}_i) &\triangleq\|F(\vc{\theta}^k; \vc{x}_i)+ \vc{D} _{F_i}(\vc{\theta}^k)(\vc{\theta}-\vc{\theta}^k)\|_p+ \frac{h}{2} \|\vc{\theta} - \vc{\theta}^k\|_2^2, \label{eq:newFrandom}\\  
    F^{(k)}(\vc{\theta})  &\triangleq \frac{1}{n}\sum_{i\in [n]}F^{(k)}(\vc{\theta}, \vc{x}_i),\\
    G(\vc{\theta})&\triangleq g(\vc{\theta}) + \chi_{\mathcal{C}}(\vc{\theta}).
\end{align}
\end{subequations}
 We can then rewrite Prob.~\eqref{eq:unconstainedsub} as the following equivalent problem:%
 \begin{subequations}\label{eq:admm_form}%
\begin{align}%
\text{Minimize}&\quad F^{(k)}(\vc{\theta}_1) + G(\vc{\theta}_2) 
\label{eq:caseaOBJ}\displaybreak[0]\\
\text{subject to:}&\quad \vc{\theta}_1 = \vc{\theta}_2,\label{eq:caseaCNST}  
\end{align}
\end{subequations}
where  $\vc{\theta}_1, \vc{\theta}_2\in \reals^d$ are auxiliary variables. 

Note that the objective in \eqref{eq:caseaOBJ} is equivalent to $F^{(k)}(\vc{\theta}_1) + G(\vc{\theta}_2)$. 
SADM starts with initial solutions, i.e., $\vc{\theta}_1^{0}=\vc{\theta}_2^{0}=\vc{u}^0=0$. At the $t$-th iteration, the algorithm performs the following steps:
\begin{subequations}\label{eq:oadm}
\begin{align}
    \vc{\theta}_1^{t+1} :=& \argmin_{\vc{\theta_1}}F^{(k)}(\vc{\theta}_1; \vc{x}_t) + \frac{\rho_t}{2}\|\vc{\theta}_1 - \vc{\theta}_2^t +\vc{u}^t\|_2^2+ \frac{\gamma_t}{2} \|\vc{\theta}_1 - \vc{\theta}_1^t\|_2^2 \label{eq:update_theta1} ,\displaybreak[0]\\
    \vc{\theta}_2^{t+1} :=&\argmin_{\vc{\theta_2}} G(\vc{\theta}_2) + \frac{\rho_t}{2}\|\vc{\theta}^{t+1}_1 - \vc{\theta}_2 +\vc{u}^t\|_2^2  \label{eq:update_theta2},\displaybreak[0] \\
    \vc{u}^{t+1}:=& \vc{u}^{t} + \vc{\theta}_1^{t+1} - \vc{\theta}_2^{t+1},  \label{eq:update_dual}
\end{align}
\end{subequations}
where variables $\vc{x}_t$ are sampled  uniformly at random from $\{\vc{x}_i\}_{i=1}^n$,  $\vc{u}^t\in \reals^d$ is the dual variable, the $\rho_t,\gamma_t>0$ are scaling coefficients at the $t$-th iteration. We explain how to set $\rho_t,\gamma_t$ in Thm.~\ref{trm:oadm}. 

The solution to Problem \eqref{eq:update_theta2} amounts to finding the proximal operator of  function $G.$ In general, given that $g$ is smooth and convex, this is a strongly convex optimization problem and can be solved via standard techniques. Nevertheless, for several of the  practical cases we described in Sec.~\ref{sec:probapp} this optimization can be done efficiently with gradient-free methods. For example, in the case where the regularizer $g$ is the either a ridge or lasso penalty, and $\mathcal{C}=\reals^d$, it is well-known that proximal operators for  $\ell_1$ and $\ell_2$ norms have closed-form solutions \cite{boyd2011distributed}. For general $\ell_p$ norms, an efficient (gradient-free) bi-section method due to Liu and Ye \cite{liu2010efficient} (see App.~\ref{app:prox_op} \fullversion{in  \cite{our}}{}) can be used to compute the proximal operator. Moreover, in the absence of the  regularizer, the proximal operator for the indicator function $\chi_{\mathcal{C}}$ is equivalent to projection on the convex set $\mathcal{C}.$ This again has closed-form solution, e.g., when $\mathcal{C}$ is the simplex \cite{michelot1986finite} or an $\ell_p$-norm ball \cite{moreau1962decomposition,liu2010efficient}. 
Problem \eqref{eq:update_theta1} is harder to solve; 
we show however that it can also reduced  to the (gradient-free) bisection method due to Liu and Ye \cite{liu2010efficient} in the next section.

\subsection{Inner ADMM}\label{sec:inner_ADMM}
 We solve Problem \eqref{eq:update_theta1} using another application of ADMM. In particular, note that \eqref{eq:update_theta1} assumes the following general form:
\begin{align}\label{eq:p_problem}
    \min_{\vc{x}}&\quad  \|\vc{A}\vc{x} + \vc{b}\|_p + \lambda \|\vc{x} - \vc{c}\|_2^2,
\end{align}
where  $\vc{A} = \vc{D}_{F_t}(\vc{\theta}^{(k)})$, the constituent parameter vectors are 
    $\vc{c} = \frac{\rho_t}{\rho_t + \gamma_t + h} (\vc{\theta}_2^t -\vc{u}^t) + \frac{\gamma_t}{\rho_t + \gamma_t +h} \vc{\theta}_1^t + \frac{h}{\rho_t + \gamma_t +h}\vc{\theta}^{(k)}, 
    \vc{b} = F(\vc{\theta}^{(k)};\vc{x}_t) - \vc{D}_{F_{t}}(\vc{\theta}^{(k)}) \vc{\theta}^{(k)},$ and
    $\lambda = \frac{\rho_t + \gamma_t +h}{2}.$

We solve \eqref{eq:p_problem} via ADMM by reformulating it as the following problem:
\begin{subequations}\label{eq:p_problem_re}
\begin{align}
\min&\quad \|\vc{y}\|_p + \lambda \|\vc{x} - \vc{c}\|_2^2\\
\text{s.t}&\quad \vc{A}\vc{x} + \vc{b} - \vc{y} = 0.
\end{align}
\end{subequations}
The ADMM steps at the $k$-th iteration for \eqref{eq:p_problem_re} are the following:
\begin{subequations}\label{eq:inner_ADMM}
\begin{align}
    \vc{y}^{k+1} &:= \argmin_{\vc{y}}  \|\vc{y}\|_p  +
    \rho'/2 \| \vc{y} - \vc{A}\vc{x}^k - \vc{b}  + \vc{z}^k\|_2^2,\label{eq:prox_p}\displaybreak[0]\\
    \vc{x}^{k+1} &:= \argmin_{\vc{x}} \lambda \|\vc{x} - \vc{c}\|_2^2 +
    \rho'/2\| \vc{y}^{k+1} - \vc{A}\vc{x} - \vc{b}   + \vc{z}^k\|_2^2,\displaybreak[0]\\
    \vc{z}^{k+1}&:= \vc{z}^k + \vc{y}^{k+1} - \vc{A}\vc{x}^{k+1} - \vc{b},
\end{align}
\end{subequations}
where  $\vc{z}^k\in \reals^N$ denotes the dual variable at the $k$-th iteration and $\rho'>0$ is a hyper-parameter of ADMM. 

Problem~\eqref{eq:prox_p} is again equivalent to  computing the proximal operator of the $\ell_p$-norm, which, as mentioned earlier, has closed-form solution for $p=1, 2$. Moreover, for general $\ell_p$-norms the proximal operator can be computed via the bisection  algorithm
by Liu and Ye \cite{liu2010efficient}. This bisection method 
yields a solution with an $\epsilon$ accuracy in $O(\log_2(1/\epsilon))$ rounds  \cite{moharrer2020massively, liu2010efficient} (see App.~\ref{app:prox_op} \fullversion{in  \cite{our}}{}).

\subsection{Convergence}\label{sec:conv}
To attain the convergence guarantee of MBO given by Proposition~\ref{prop:ochs}, we need to solve the inner problem~\eqref{eq:model_improve} within accuracy $\epsilon^k$ at iteration $k$, where $\epsilon^k\to 0$. As our major technical contribution, we ensure this by proving the convergence of SADM  when solving  Prob.~\eqref{eq:model_improve}. 

Consider the sequence $\{\vc{\theta}_1^t,\vc{\theta}_2^t, \vc{u}^t\}_{t=1}^T$ generated by our SADM algorithm \eqref{eq:oadm}, where $\vc{x}_t$, $t\in[T]$, are sampled u.a.r. from $\{\vc{x}_i\}_{i=1}^n.$ Let also \begin{align}\bar{\vc{\theta}}^T_1 \triangleq \frac{1}{T}\sum_{t=1}^T \vc{\theta}^t_1,~ \bar{\vc{\theta}}^T_2 \triangleq \frac{1}{T} \sum_{t=1}^{T}\vc{\theta}^{t+1}_2,\end{align} denote the time averages of the two solutions. Let also $\vc{\theta}^* = \vc{\theta}_1^* = \vc{\theta}_2^*$ be the optimal solution of Prob.~\eqref{eq:admm_form}.
 Finally, denote by \begin{align}R^T\triangleq F^{(k)}(\bar{\vc{\theta}}^T_1) + G(\bar{\vc{\theta}}^T_2)- F^{(k)}(\vc{\theta}^*)- G(\vc{\theta}^*)\end{align}
 the residual error of the objective from the optimal.
 Then, the following holds:
\begin{theorem}\label{trm:oadm}
Assume that $\mathcal{C}$ is convex, closed, and bounded, while $g(0) = 0$, $g(\vc{\theta})\geq 0$, and $g(\cdot)$ is both Lipschitz continuous and $\beta$-strongly convex over $\mathcal{C}$. Moreover, assume that both the function $F(\vc{\theta}; \vc{x}_i)$ and its Jacobian $\vc{D}_{F_i}(\vc{\theta})$ are bounded on the set $\mathcal{C}$, for all $i\in[n]$.  We set $\gamma_t=ht$ and  $\rho_t=\beta t$.  Then,%
\begin{subequations}\label{eq:oadm_conv}%
\begin{align}%
    \label{eq:feas_eq}
\|\bar{\vc{\theta}}^{T}_1 - \bar{\vc{\theta}}^{T}_2\|_2^2 &= O\left(\frac{\log T}{T}\right)\quad\quad\quad\\
\E [ R^T  ]&= O\left(\frac{\log T}{T}\right) \label{eq:oadm_opt}\\
  \mathbb{P}\left( R^{T} \geq  k_1 \frac{\log T}{T}   + k_2 \frac{M}{\sqrt{T}}\right) &\leq e^{-\frac{M^2}{16}}~~\text{for all}~M>0, T\geq 3, \label{eq:oadm_opt_bound}
\end{align}
\end{subequations}
where $k_1, k_2>0$ are constants  (see \eqref{eq:k1k2k3} in App.~\ref{append:proofoadm} \fullversion{in  \cite{our}}{} for exact definitions). 
\end{theorem}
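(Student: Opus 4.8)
The plan is to treat SADM as the online ADMM (OADM) of~\cite{online_admm} run on the i.i.d.\ sample functions $F^{(k)}(\cdot;\vc{x}_t)$ from~\eqref{eq:newFrandom}, to first prove a \emph{deterministic, per-sample-path} regret-type inequality for the splitting~\eqref{eq:admm_form}, and then to convert it into the in-expectation claim~\eqref{eq:oadm_opt}, the feasibility claim~\eqref{eq:feas_eq}, and the concentration claim~\eqref{eq:oadm_opt_bound} via a bounded-difference martingale. The three boundedness hypotheses are used as follows: compactness of $\mathcal{C}$ keeps every $\vc{\theta}_2^t\in\mathcal{C}$ bounded; boundedness of $F$ and of the Jacobian $\vc{D}_{F_i}$ on $\mathcal{C}$ bounds every subgradient of the $\ell_p$-term of $F^{(k)}(\cdot;\vc{x}_i)$ by a constant $L$; and Lipschitzness of $g$ bounds the $\partial g$-component of $\partial G$. (The hypotheses $g(0)=0$, $g\ge 0$ are used for initialization and to sign the energy terms below.)

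\emph{Step 1 (per-iteration variational inequality).} First I would write the optimality conditions of~\eqref{eq:update_theta1}--\eqref{eq:update_theta2}: there are $\vc{\xi}_1^{t+1}\in\partial F^{(k)}(\vc{\theta}_1^{t+1};\vc{x}_t)$ and $\vc{\xi}_2^{t+1}\in\partial G(\vc{\theta}_2^{t+1})$ with $\vc{\xi}_1^{t+1}=-\rho_t(\vc{\theta}_1^{t+1}-\vc{\theta}_2^t+\vc{u}^t)-\gamma_t(\vc{\theta}_1^{t+1}-\vc{\theta}_1^t)$ and, by~\eqref{eq:update_dual}, $\vc{\xi}_2^{t+1}=\rho_t\vc{u}^{t+1}$. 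Using convexity of $F^{(k)}(\cdot;\vc{x}_t)$ and of $G$ evaluated at the optimum $\vc{\theta}^*=\vc{\theta}_1^*=\vc{\theta}_2^*$ of~\eqref{eq:admm_form}, the three-point identity $2\langle\vc{a}-\vc{b},\vc{a}-\vc{c}\rangle=\|\vc{a}-\vc{b}\|_2^2+\|\vc{a}-\vc{c}\|_2^2-\|\vc{b}-\vc{c}\|_2^2$ on the $\gamma_t$- and $\rho_t$-quadratics, and $\vc{\theta}_1^{t+1}-\vc{\theta}_2^{t+1}=\vc{u}^{t+1}-\vc{u}^t$, I obtain a bound
\begin{align*}
F^{(k)}(\vc{\theta}_1^{t+1};\vc{x}_t)-F^{(k)}(\vc{\theta}^*;\vc{x}_t)+G(\vc{\theta}_2^{t+1})-G(\vc{\theta}^*)\ \le\ \Delta_t + r_t ,
\end{align*}
where $\Delta_t$ is a difference of $\gamma_t$- and $\rho_t$-weighted squared distances of $\vc{\theta}_1^{t},\vc{u}^{t}$ from $\vc{\theta}^*$ and $r_t$ collects cross terms. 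Two observations make this usable: (i) the normal-cone component of $\vc{\xi}_2^{t+1}$ drops out, since $\langle\vc{n},\vc{\theta}_2^{t+1}-\vc{\theta}^*\rangle\le 0$ for $\vc{\theta}^*\in\mathcal{C}$; (ii) the $\tfrac{h}{2}\|\cdot-\vc{\theta}^k\|_2^2$ summand of $F^{(k)}(\cdot;\vc{x}_t)$ is $h$-strongly convex and $g$ is $\beta$-strongly convex, so the two convexity inequalities actually yield extra negative terms $-\tfrac{h}{2}\|\vc{\theta}_1^{t+1}-\vc{\theta}^*\|_2^2-\tfrac{\beta}{2}\|\vc{\theta}_2^{t+1}-\vc{\theta}^*\|_2^2$.

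\emph{Step 2 (telescoping and the origin of $\log T$).} With $\gamma_t=ht$ and $\rho_t=\beta t$, the extra negative terms from Step~1 upgrade the coefficient $\gamma_t$ to $\gamma_{t+1}$ and $\rho_t$ to $\rho_{t+1}$ inside $\Delta_t$, so $\sum_{t=1}^T\Delta_t$ telescopes to $O(1)$ minus the nonnegative energy $\tfrac{\gamma_{T+1}}{2}\|\vc{\theta}_1^{T+1}-\vc{\theta}^*\|_2^2+\tfrac{\rho_{T+1}}{2}\|\vc{u}^{T+1}-\vc{u}^*\|_2^2+\cdots$, which I keep on the left. Each cross term in $r_t$ is split by Young's inequality against one of the telescoping quadratics, leaving residuals of the form $(L^2+c)/\gamma_t+(L_g^2+c)/\rho_t$; summing gives $\sum_{t=1}^T O(1/t)=O(\log T)$, the sole source of the logarithm. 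Hence, for every realization of $\{\vc{x}_t\}$, $\sum_{t=1}^T\big(F^{(k)}(\vc{\theta}_1^{t+1};\vc{x}_t)-F^{(k)}(\vc{\theta}^*;\vc{x}_t)\big)+\sum_{t=1}^T\big(G(\vc{\theta}_2^{t+1})-G(\vc{\theta}^*)\big)+\tfrac{\rho_{T+1}}{2}\|\vc{u}^{T+1}\|_2^2=O(\log T)$; in particular the retained energy gives $\|\vc{\theta}_1^{T+1}-\vc{\theta}^*\|_2^2+\|\vc{\theta}_2^{T+1}-\vc{\theta}^*\|_2^2=O(\log T/T)$, and combined with the telescoping $\vc{\theta}_1^t-\vc{\theta}_2^{t+1}=(\vc{\theta}_1^t-\vc{\theta}_1^{t+1})+(\vc{u}^{t+1}-\vc{u}^t)$, so $\bar{\vc{\theta}}_1^T-\bar{\vc{\theta}}_2^T=\tfrac1T\big((\vc{\theta}_1^1-\vc{\theta}_1^{T+1})+(\vc{u}^{T+1}-\vc{u}^1)\big)$, this yields~\eqref{eq:feas_eq}.

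\emph{Step 3 (from regret to~\eqref{eq:oadm_opt} and~\eqref{eq:oadm_opt_bound}; and the main obstacle).} The $\vc{\theta}_1$-update~\eqref{eq:update_theta1} is $(h+\gamma_t+\rho_t)$-strongly convex, i.e.\ $\Theta(t)$-strongly convex, so its minimizer is $\vc{\theta}_1^{t+1}=\hat{\vc{\theta}}_1^{t+1}-\vc{\xi}_1^{t+1}/(h+\gamma_t+\rho_t)$ with $\hat{\vc{\theta}}_1^{t+1}\triangleq(h\vc{\theta}^k+\gamma_t\vc{\theta}_1^t+\rho_t(\vc{\theta}_2^t-\vc{u}^t))/(h+\gamma_t+\rho_t)$ being $\mathcal{F}_{t-1}$-measurable and $\|\vc{\theta}_1^{t+1}-\hat{\vc{\theta}}_1^{t+1}\|_2\le L/(h+\gamma_t+\rho_t)=O(1/t)$. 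Writing $\phi_t(\vc{\theta})\triangleq\|F(\vc{\theta}^k;\vc{x}_t)+\vc{D}_{F_t}(\vc{\theta}^k)(\vc{\theta}-\vc{\theta}^k)\|_p$ and $\Phi(\vc{\theta})\triangleq\tfrac1n\sum_i\|F(\vc{\theta}^k;\vc{x}_i)+\vc{D}_{F_i}(\vc{\theta}^k)(\vc{\theta}-\vc{\theta}^k)\|_p$ (the $\tfrac h2\|\cdot\|_2^2$-parts of $F^{(k)}(\cdot;\vc{x}_t)$ and $F^{(k)}(\cdot)$ coincide, and $\E[\phi_t(\vc{\theta})]=\Phi(\vc{\theta})$ for fixed $\vc{\theta}$), $L$-Lipschitzness gives $\phi_t(\vc{\theta}_1^{t+1})=\phi_t(\hat{\vc{\theta}}_1^{t+1})+O(1/t)$, $\Phi(\vc{\theta}_1^{t+1})=\Phi(\hat{\vc{\theta}}_1^{t+1})+O(1/t)$, while $\E[\phi_t(\hat{\vc{\theta}}_1^{t+1})\mid\mathcal{F}_{t-1}]=\Phi(\hat{\vc{\theta}}_1^{t+1})$. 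Taking conditional expectations in Step~2, using $\E[F^{(k)}(\vc{\theta}^*;\vc{x}_t)]=F^{(k)}(\vc{\theta}^*)$, summing the $O(1/t)$ mismatches (again $O(\log T)$), and applying Jensen to the convex $F^{(k)}$ and $G$, yields $\E[R^T]=O(\log T/T)$, i.e.~\eqref{eq:oadm_opt}. For~\eqref{eq:oadm_opt_bound}, the only randomness not removed by conditioning is the martingale $\sum_{t=1}^T\big(\phi_t(\hat{\vc{\theta}}_1^{t+1})-\Phi(\hat{\vc{\theta}}_1^{t+1})\big)$ with increments bounded by $2L\cdot\mathrm{diam}(\mathcal{C})$; Azuma--Hoeffding gives $\mathbb{P}(\cdot\ge\epsilon)\le \exp(-\epsilon^2/(8L^2\mathrm{diam}(\mathcal{C})^2 T))$, and choosing $\epsilon$ proportional to $M\sqrt T$ and dividing by $T$ produces the stated $e^{-M^2/16}$ tail with the $k_1\log T/T+k_2 M/\sqrt T$ bound. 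The step I expect to be the main obstacle is the a priori boundedness used throughout — most delicately, boundedness of the dual iterates $\vc{u}^t$, since $\partial G$ contains the unbounded normal cone of $\mathcal{C}$ so $\vc{u}^{t+1}=\vc{\xi}_2^{t+1}/\rho_t$ is not obviously controlled; I would handle this by bootstrapping, first using $\vc{\theta}_1^{t+1}=\hat{\vc{\theta}}_1^{t+1}+O(1/t)$ and $\vc{u}^{t+1}=\vc{\theta}_1^{t+1}+\vc{u}^t-\vc{\theta}_2^{t+1}$ with $\vc{\theta}_2^{t+1}\in\mathcal{C}$ to derive a coupled linear recursion for $(\|\vc{\theta}_1^t\|_2,\|\vc{u}^t\|_2)$ that remains uniformly bounded under the prescribed $\gamma_t,\rho_t$, then feeding that crude bound into Step~2 to upgrade it to $\|\vc{\theta}_1^t-\vc{\theta}^*\|_2,\|\vc{u}^t-\vc{u}^*\|_2\to 0$. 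A secondary nuisance is the index bookkeeping — the $t\mapsto t{+}1$ shift from the optimality conditions is precisely why $\bar{\vc{\theta}}_1^T$ and $\bar{\vc{\theta}}_2^T$ are defined over offset ranges.
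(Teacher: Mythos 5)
Your overall strategy matches the paper's: a deterministic OADM-style regret bound for the splitting \eqref{eq:admm_form}, an online-to-batch conversion exploiting that $\vc{x}_t$ is independent of the past iterates, and a martingale tail bound (your Azuma step is essentially the paper's moment-generating-function/Markov argument behind \eqref{eq:oadm_opt_bound}). The executions differ, though: the paper does not re-derive the regret inequality but verifies the assumptions of Theorem~6 of \cite{wang2013online} (Lemma~\ref{lem:oadm}), whose objective bound is stated at the \emph{pre-update} iterate $\vc{\theta}_1^t$; since $\vc{\theta}_1^t$ is measurable with respect to $\vc{x}_1,\dots,\vc{x}_{t-1}$, the corrections $\delta_t$ are exact martingale differences and no surrogate point is needed. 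Your inequality is at $\vc{\theta}_1^{t+1}$, which forces the $\hat{\vc{\theta}}_1^{t+1}$ device with extra $O(1/t)$ bias terms; this is legitimate in principle, but it makes every Lipschitz and bounded-increment estimate depend on uniform boundedness of $\vc{\theta}_1^t$ and $\vc{u}^t$, which is precisely what you have not established.

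That is where the genuine gap lies. In Step~2, the claim that the retained energy gives $\|\vc{\theta}_1^{T+1}-\vc{\theta}^*\|_2^2+\|\vc{\theta}_2^{T+1}-\vc{\theta}^*\|_2^2=O(\log T/T)$ on every sample path does not follow: the per-path regret $\sum_{t}\bigl(F^{(k)}(\vc{\theta}_1^{t+1};\vc{x}_t)-F^{(k)}(\vc{\theta}^*;\vc{x}_t)+G(\vc{\theta}_2^{t+1})-G(\vc{\theta}^*)\bigr)$ can be as negative as $-\Theta(T)$ (each summand is only bounded by a constant, and $\vc{\theta}^*$ minimizes the averaged objective, not each sampled one), so moving it to the other side yields energy $O(T)$, i.e., at best boundedness of the last iterates — and even that presupposes the iterate/dual boundedness you defer to an unproven ``bootstrap.'' Because $\partial G$ contains the normal cone of $\mathcal{C}$, controlling $\vc{u}^t$ and the $(\rho_t-\rho_{t-1})\|\vc{u}^t\|_2^2$-type terms created by the increasing $\rho_t$ is exactly the delicate point, and it is left open; your derivation of \eqref{eq:feas_eq} via $\bar{\vc{\theta}}_1^T-\bar{\vc{\theta}}_2^T=\tfrac1T\bigl((\vc{\theta}_1^1-\vc{\theta}_1^{T+1})+(\vc{u}^{T+1}-\vc{u}^1)\bigr)$ therefore rests on unestablished bounds, and the same boundedness is needed for your Lipschitz/surrogate estimates and Azuma increments in Step~3. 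The paper sidesteps all of this: \eqref{eq:feas_eq} follows from the constraint-violation bound \eqref{eq:feas_bound} of Lemma~\ref{lem:oadm}, i.e., $\sum_t\|\vc{\theta}_1^t-\vc{\theta}_2^t\|_2^2+\|\vc{\theta}_2^{t+1}-\vc{\theta}_2^t\|_2^2=O(\log T)$, combined with the elementary averaging inequality, with no control of $\vc{u}^t$ or of last iterates; and the normal-cone issue is absorbed into the cited theorem, whose hypotheses are verified through bounded function-value differences (using $\chi_{\mathcal{C}}(\vc{\theta}_2^{t+1})=\chi_{\mathcal{C}}(\vc{\theta}^*)=0$) rather than bounded subgradients of $G$. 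To complete your route you would need to actually prove the boundedness bootstrap and repair the telescoping/energy claims, or else fall back on the published OADM regret and constraint-violation bounds as the paper does.
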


We prove Theorem~\ref{trm:oadm} in Appendix~\ref{append:proofoadm}. The theorem has the following important consequences. First, \eqref{eq:feas_eq} implies that the infeasibility gap between $\theta_1$ and $\theta_2$ decreases as $O(\frac{\log T}{T})$ \emph{deterministically}. Second, by \eqref{eq:oadm_opt} the residual error $R^T$ decreases as $O(\frac{\log T}{T})$ in expectation. Finally, 
~\eqref{eq:oadm_opt_bound} shows that the tail of the residual error as iterations increase is exponentially bounded. In particular, given a desirable accuracy $\epsilon_k$, \eqref{eq:oadm_opt_bound} gives the number of iterations necessary be within $\epsilon_k$ of the optimal with any probability $1-\delta$. Therefore, according to Proposition~\ref{prop:ochs}, using SADM will result in convergence of Algorithm~\ref{alg:mbo} with high probability.
Finally, we note that, although we write Theorem~\ref{trm:oadm} for updates using only one random sample per iteration, the analysis and guarantees readily extend to the case where a batch selected u.a.r. is used instead. A formal statement and proof can be found in \fullversion{App.~\ref{append:corbatch} in  \cite{our}.}{App.~\ref{append:corbatch}.} 

\section{Experiments}\label{sec:experiments}
\begin{table}[t!]
    \centering
    \caption{Time and Objective Performance. We report objective and time metrics for under different outlier ratios and different $p$-norms. We observe from the table that \mboa{} significantly outperforms other competitors in terms of objective metrics. In terms of running time, \sgd{} is generally fastest, due to fast gradient updates. However, we see that the time MBO variants take to get to the same or better objective value (i.e., $T^*$), ware comparable to running time of \sgd{}. }\label{tab:metrics}
    \begin{tiny}
    \begin{tabular}{|c|c|c|c|c|c|c||c|c|c|c|c||c|c|c|c|}\hline
        &   & \multicolumn{5}{|c||}{\mboa{}} &\multicolumn{5}{c||}{\mbob{}}& \multicolumn{4}{c|}{\sgd{}}  \\\hline
        $\outl$ & $p$ &   $F_{\texttt{NOUTL}}$ & $F_{\texttt{OBJ}}$&  $F_{\texttt{TEST}}$ &$T$(h)& $T^*$(h) & $F_{\texttt{NOUTL}}$ & $F_{\texttt{OBJ}}$&  $F_{\texttt{TEST}}$ &$T$(h)& $T^*$(h)  & $F_{\texttt{NOUTL}}$ & $F_{\texttt{OBJ}}$&  $F_{\texttt{TEST}}$ &$T$(h)\\\hline
         \multicolumn{16}{|c|}{\texttt{MNIST}}\\\hline
         0.0 & 2.0 & \textbf{2.50} & \textbf{2.51} & \textbf{2.50} & 5.69   &   0.14 & 8.08 &  8.08 & 8.12 &64.17 & 6.47   &9.21 &  9.22 & 9.30 & 9.83 \\
         0.0 & 1.5 &  \textbf{2.63} & \textbf{2.63} & \textbf{2.63}  &11.67  &0.79  & 20.19& 20.20 & 20.39& 65.67& 59.98 & 20.35& 20.36 & 20.57 & 14.71 \\
         0.0 & 1.0 & \textbf{3.46} & \textbf{3.47} & \textbf{3.44} & 17.82 & 3.09 & 102.79 & 102.80 & 104.24 & 81.53 &NA &102.44 & 102.46 & 103.89 & 11.50 \\
         0.05 & 2.0 & \textbf{3.48} & \textbf{5.35}  & \textbf{3.46} & 6.33 &0.31  & 3.89& 6.36 & 3.86 & 54.92 &   38.31  &8.03& 12.52 & 8.09& 13.96 \\
         0.05 & 1.5 &  \textbf{4.10}&  \textbf{9.74} & \textbf{4.08}&45.32 &2.08  & {5.86} &{11.70} & {5.82} & 57.03&       25.12& 20.34 & 34.69& 20.57& 14.60 \\
          0.05 & 1.0 &\textbf{5.23} &  \textbf{20.20}& \textbf{5.24}&  44.03& 5.61 & {27.68} & {73.53} & {27.56} & 32.76&  9.67 &102.40& 236.43 & 103.90 & 11.70\\
        0.1 & 2.0 & 4.27& \textbf{7.77} &4.23 & 11.67  & 1.34 & \textbf{3.56}& {7.83} & \textbf{3.54} & 64.20& 33.70 &7.02& 11.64 & 7.04 & 13.97 \\
          0.1 & 1.5 & \textbf{4.18} & \textbf{11.84} & \textbf{4.17}& 68.74&0.29 &{5.50} & {13.77} & {5.45} & 67.04 &      9.88 &20.34& 48.79 & 20.57 & 14.04\\
          0.1 & 1.0 & \textbf{5.90} & \textbf{36.02} & \textbf{5.92}& 37.73&  8.20& {30.08}& {109.79}& {30.16}& 39.77 &   6.72   &102.36& 368.22& 103.90 & 11.81\\
        0.2 & 2.0 & {4.07} & 8.97 &  {4.04} & 51.69 & 4.39 &\textbf{3.54}&  \textbf{8.23} &  \textbf{3.52} & 57.08 & 19.19 &7.48 &  16.44 & 7.51 & 14.25\\
          0.2 & 1.5 & \textbf{3.90} & \textbf{11.58} & \textbf{3.89}&195.69 & 1.56&{7.00} &  {20.63} & {6.95} & 45.46 &6.44& 20.36&  77.78 &20.59 & 15.06 \\
          0.2 & 1.0&  \textbf{3.85} & \textbf{28.25}& \textbf{3.83}& 36.98&5.15 & {40.12}& {224.47} & {40.11} & 19.71&2.13 &  102.37& 639.32 & 103.90 & 8.25\\
        0.3 & 2.0 & \textbf{3.99} & {14.21} & \textbf{3.98} &  9.83& 4.93  & 4.02&  \textbf{10.55} & 3.99 & 55.60 & 24.14  &7.46& 20.63 & 7.48 & 13.53\\
            0.3 & 1.5 & 20.55 & \textbf{22.56} &20.78&159.92 & 39.24 & \textbf{7.22} & {24.30} &\textbf{7.16}& 42.65 &    15.09 &23.90 & 58.52 &  23.89 & 16.25\\
            0.3 & 1.0 & 102.70 & \textbf{99.36} & 104.27& 51.48& 0.87&\textbf{56.60}& {438.89} &  \textbf{56.17} & 20.48 &  3.32  &102.34 & 910.68 & 103.90 & 8.52\\\hline
            \multicolumn{16}{|c|}{\texttt{Fashion-MNIST}}\\\hline
            0.0 & 2.0&  \textbf{3.51}& \textbf{3.51} & \textbf{3.51} &4.33 & 0.31 & 5.01&5.01 & 5.01 & 42.13 & 14.80& 8.72& 8.73 & 8.70 & 9.78 \\
            0.0 & 1.5 & \textbf{6.13} & \textbf{6.14} &  \textbf{6.14} &14.35 &2.18 & 8.87 & 8.88 & 8.89 &63.39& 12.80& 22.62& 22.63 & 22.56 & 14.70\\
            0.0 & 1.0 & \textbf{10.59} & \textbf{10.61}& \textbf{10.56}  & 29.69&2.63 &{41.24} & {41.26} & {41.35} & 50.41 &3.32 &224.26 & 224.28 & 224.89 & 9.72\\
             0.05 & 2.0 &\textbf{3.80} & \textbf{5.82} & \textbf{3.80} & 14.70&  1.40 & 4.53 & 6.75 & 4.54 & 67.29 &19.15& 8.30 & 11.98 & 8.27 & 9.71 \\
            0.05 & 1.5 &   \textbf{7.38}&  14.57 & \textbf{7.40}& 96.73 &2.56  &7.91 & \textbf{11.97} & {7.93} & 64.25 &  16.16  &  20.88 & 27.22 &  20.83 & 10.94\\
            0.05 & 1.0 &\textbf{16.64}& \textbf{30.51} & \textbf{16.68} &43.55&16.04&{65.01}& 109.94 & 65.31 & 29.60 &9.70& 158.65 & 227.48 & 158.27 & 9.97 \\
            0.1 & 2.0 &\textbf{4.05} & \textbf{6.73 } & \textbf{4.06}  & 14.66&2.35 &4.28 &  7.90 & 4.29 & 65.06 &16.46& 8.96 & 14.35 &8.94 & 13.67\\
            0.1 & 1.5 & 11.08 & 32.69 & 11.10 &20.07&NA &\textbf{8.46} & \textbf{15.23} &  \textbf{8.49} & 65.78&9.92 &  17.98 &  31.41 & 17.95 & 10.63\\
            0.1 & 1.0 & \textbf{9.79}& \textbf{27.96} & \textbf{9.81}& 35.50 &2.43 & {58.70} & {126.18} & {58.90} & 45.06 & 2.08 & 235.02 & 452.45 &  234.49 & 13.32\\
            0.2 & 2.0 & {6.07} & {10.19} & {6.08} &14.25 &3.67 & \textbf{4.77} &  \textbf{9.28} &  \textbf{4.77} & 69.84& 30.16 &  5.71&14.34 & 5.71 & 9.31\\
            0.2 & 1.5 & 28.51 & 53.69 & 28.49& 39.42& NA&\textbf{10.94} & \textbf{27.12} & \textbf{10.97} & 39.11 & 16.09 & 19.36& 42.97 & 19.36 & 10.87\\
            0.2 & 1.0 & \textbf{10.50}& \textbf{27.95} & \textbf{10.50}& 94.72 &6.57 &{140.00}&  390.02 & {140.08} & 17.03 & 3.33 & 204.88 &  644.99 &  205.13 & 14.72 \\
            0.3 & 2.0 & 6.63 & 23.18& 6.63 & 32.87&  NA & \textbf{5.84} & \textbf{13.04} &  \textbf{5.85} & 50.52&  29.95 & 7.45 &  20.12 & 7.46 & 13.65\\
            0.3  & 1.5 &\textbf{7.08} & \textbf{22.51} & \textbf{7.10}&86.27&30.02 &{11.09} & {24.73} & {11.12} & 52.41 &  12.08 &19.52& 58.26 &  19.56 & 11.05\\
            0.3 & 1.0 & \textbf{14.43} &  \textbf{50.91} & \textbf{14.46} & 95.08&19.48 &404.77 & 893.56 &  404.52 & 9.51 &NA & {410.82}& {522.50} & {411.84} & 10.74\\
            \hline
            \multicolumn{16}{|c|}{\texttt{SCMD1d}}\\\hline
            0.0 & 2.0 & 2.88 & 2.88 &  3.02 &1.82 &0.12 &  \textbf{2.85} & \textbf{2.85} & \textbf{2.99} &0.36 & 0.04&  3.62 & 3.63 & 3.72 & 1.37\\
            0.0 & 1.5 & 4.23 & 4.24 &  \textbf{4.39} & 7.22& 0.43&  \textbf{4.22} & \textbf{4.23} & 4.44 &0.36 & 0.04&5.47& 5.47& 5.60 & 1.58\\
             0.0 & 1.0 & \textbf{9.78} &  \textbf{9.79} & \textbf{10.18} &7.13 &0.47 & 9.86& 9.86 & 10.32 &0.37 &0.04 & 12.95 & 12.95 & 13.25 &1.22 \\
               0.05 & 2.0 & 2.88 & 3.13 &  2.99 &2.52 &0.13 & \textbf{2.86} & \textbf{3.11} & 3.00 & 0.54&0.05 & 3.64 & 3.89 &  3.71 & 1.31\\
             0.05 & 1.5 &  4.23 & 4.61 & \textbf{4.37} & 10.23& 4.61& \textbf{4.22} & \textbf{4.59} & 4.46 &0.50 & 0.05& 5.50& 5.87&  5.61 &1.23\\
           0.05 & 1.0 &  \textbf{9.69} & \textbf{10.52} & \textbf{10.09} & 0.59& 0.18&  9.86& 10.66 &10.35 &0.51 &0.05 &13.03& 13.87 & 13.29 &1.17\\
           0.1 & 2.0 & 2.90 & 3.41 &   3.01 &2.22 &0.13 & \textbf{2.84} & \textbf{3.34} & \textbf{3.00} & 0.46& 0.05& 3.63 & 4.12 & 3.69 &1.30 \\
           0.1 & 1.5 &  4.23 & 4.99 & 4.42 & 9.42& 0.68&\textbf{4.18} & \textbf{4.90} & \textbf{4.40} &0.50 &0.05 & 5.52 &6.11 &  5.62 &1.15 \\
             0.1 & 1.0 & \textbf{9.56} & \textbf{10.99} & \textbf{10.18} & 9.92&0.78 & 9.77 &  11.11 &  10.54 &0.54 & 0.07&13.09 &  13.72& 13.32 & 1.11\\
             0.2 & 2.0 & 2.93 & 3.90 &  3.03 &1.83 &0.95 & \textbf{2.86} & \textbf{3.79} &  \textbf{3.02} & 0.5& 0.3& 3.63 &  3.97 & 3.66 & 1.15\\
             0.2 & 1.5 & 4.23 &  5.60 &  \textbf{4.37} &8.17 & 3.60& \textbf{4.21} & \textbf{5.48} &  4.47 & 0.36& 0.2& 5.50 & 5.83 & 5.56 &1.17\\
             0.2 & 1.0 &  \textbf{9.46} & \textbf{11.00} &  \textbf{10.04} &6.55 & 1.50& 9.82 &  11.30 &  10.60 &0.45 & 0.20& 13.09& 13.38&  13.28 &1.11\\
             0.3 & 2.0 & 2.93 &  4.32 & 3.03 &1.80 & NA& \textbf{2.85} & 4.10 &\textbf{3.03} & 0.46& NA& 3.61 & \textbf{3.90}& 3.64 & 1.18\\
             0.3 & 1.5 & 4.25 & 6.05 & \textbf{4.44} &8.19 & NA& \textbf{4.21} & 5.73 & 4.49 &0.50 &NA & 5.43 & \textbf{5.69}& 5.43 &1.18 \\
              0.3 & 1.0 & \textbf{9.53}& 11.07 & \textbf{10.00} &6.44 & 2.72& 9.68&\textbf{11.05} &10.32&0.51 &0.28 & 12.95&13.13 & 12.96& 1.11\\
             \hline
    \end{tabular}
    \end{tiny}
    \fullversion{\vspace{-5mm}}{\vspace{-5mm}}
\end{table}

\noindent \textbf{Algorithms.} We run two variants of MBO; the first one, which we call \mboa{}, uses SADM (see Sec.~\ref{sec:oadm}) for solving the inner problems \eqref{eq:unconstainedsub}. The  second one, which we call \mbob{}, solves inner problems via a sub-gradient method. We also apply stochastic gradient descent with momentum directly to Prob.~\eqref{eq:problem}; we refer to this algorithm as \sgd{}. This corresponds to the algorithm by \cite{mai2020convergence}, applied to our setting. We also solve the problem instances with an MSE objective using \sgd{}, as the MSE is smooth and \sgd{} is efficient in this case.   Hyperparameters and implementation details are in \fullversion{App.~\ref{app:algodet} in  \cite{our}.}{App.~\ref{app:algodet}.} Our code is publicly available.\footnote{\href{https://github.com/neu-spiral/ModelBasedOptimization}{https://github.com/neu-spiral/ModelBasedOptimization}}

\noindent \textbf{Applications and Datasets.} We focus on two applications: training autoencoders and multi-target regression, with a ridge regularizer and $\mathcal{C}=\reals^d$. The architectures we use are described in \fullversion{App.~\ref{app:algodet} in  \cite{our}.}{App.~\ref{app:algodet}.} For autoencoders, We use \texttt{MNIST} and \texttt{Fashion-MNIST} to train autoencoders and \texttt{SCM1d}  \cite{Spyromitros-Xioufis2016} for multi target regression. All three datasets, including training and test splits, are described in \fullversion{App.~\ref{app:algodet} in  \cite{our}.}{App.~\ref{app:algodet}.}

\noindent \textbf{Outliers.} We denote the outliers ratio with $\outl$; each datapoint $\vc{x}_i$, $i\in[n]$, is independently corrupted with outliers with probability $\outl.$ The probability $\outl$ ranges from 0.0 to 0.3 in our experiments. In particular, we corrupt training samples by replacing them with  samples randomly drawn from a Gaussian distribution whose mean is  $\dc$ away from the original data and its standard deviation equals that of the original dataset. For \texttt{MNIST} and \texttt{FashionMNIST}, we set $\dc$ to 1.5 times the original standard deviation, while  for \texttt{SCM1d}, we set $\dc$ to 2.5 times the standard deviation. 

\noindent\textbf{Metrics.} We evaluate the solution obtained by different algorithms by using the following three metrics. The first is $F_{\texttt{OBJ}}$, the regularized objective of Prob.~\eqref{eq:problem} evaluated over the training set. The other two are: %
    $F_{\texttt{NOUTL}}\triangleq \frac{\sum_{i\notin\mathcal{S}_{\texttt{OUTL}}}\|F(\vc{\theta}; \vc{x}_i)\|_p }{n-|\mathcal{S}_{\texttt{OUTL}}|},$ and
   $ F_{\texttt{TEST}}\triangleq \frac{\sum_{i\in\mathcal{S}_{\texttt{TEST}}}\|F(\vc{\theta}; \vc{x}_i)\|_p}{|\mathcal{S}_{\texttt{TEST}}|},$
where $\mathcal{S}_{\texttt{OUTL}}$, $\mathcal{S}_{\texttt{TEST}}$ are the outlier and test sets, respectively.
Metric $F_{\texttt{NOUTL}}$ measures the robustness of algorithms w.r.t. outliers; ideally, $F_{\texttt{NOUTL}}$ should remain unchanged as the fraction of outliers increases. Metric $F_{\texttt{TEST}}$ evaluates the generalization ability of algorithms on unseen (test) data, which also does not contain outliers; ideally, $F_{\texttt{TEST}}$ be similar  $F_{\texttt{NOUTL}}$. 
Moreover, we  report total running time ($T$) of all algorithms. For the two variants of  MBO, we additionally report the time ($T^*$) until the they reach the optimal value attained by \sgd{} (N/A if never reached). 
Finally, for autoencoders, we also use dataset labels to train a logistic regression classifier over latent embeddings, and also report the prediction accuracy on the test set. Classifier hyperparameters  are described in \fullversion{App.~\ref{app:classification} in  \cite{our}.}{App.~\ref{app:classification}.}

\subsection{Time and Objective Performance  Comparison}\label{sec:time_obj_pfm}
 We  evaluate our algorithms w.r.t. both objective and time metrics, which we report for different outlier ratios $\outl$ and  $p$-norms in Table~\ref{tab:metrics}. 
 By comparing objective metrics, we see that  \mboa{} and \mbob{} significantly outperform \sgd{}. \sgd{} achieves a better $F_{\texttt{OBJ}}$ in only 2 out of 45 cases, i.e., \texttt{SCM1d} dataset for $p=1.5, 2$ and $\outl=0.3$; however, even for these two cases, \mboa{} and \mbob{} obtain better $F_{\texttt{NOUTL}}$ and $F_{\texttt{TEST}}$ values. In terms of overall running time $T$,  \sgd{} is generally faster than \mboa{} and \mbob{}; this is expected, as each iteration of \sgd{} only computes the gradient of a mini-batch of terms in the objective, while the other methods need to solve an inner-problem. Nonetheless, by comparing $T^*$, we see that the MBO variants obtain the same or better objective as \sgd{} in a comparable time. In particular, $T^*$ is less than $T$ for \sgd{} in 33 and 15 cases (out of 45) for \mboa{} and \mbob{}, respectively.

Comparing the performance between \mboa{} and \mbob{}, we first note that \mboa{} has a superior performance w.r.t.~all three objective metrics for 25 out of 45 cases. In some cases, \mboa{} obtains considerably smaller objective values; for example, for \texttt{MNIST} and $\outl=0.0, p=1,$ $F_{\texttt{NOUT}}$ is 0.03 of the value obtained by \mbob{} (also see Figures~\ref{fig:fout_scale_1} and \ref{fig:ftest_scale_1}). However, it seems that in the high-outlier setting $\outl=0.3$ the performance of \mboa{} deteriorates; this is mostly due to the fact that the high number of outliers adversely affects the convergence of SADM and it takes more iterations to satisfy the desired accuracy.



\begin{figure}[t!]
    \subfloat[Non-outliers Loss, MSE]{\includegraphics[width=0.33\textwidth]{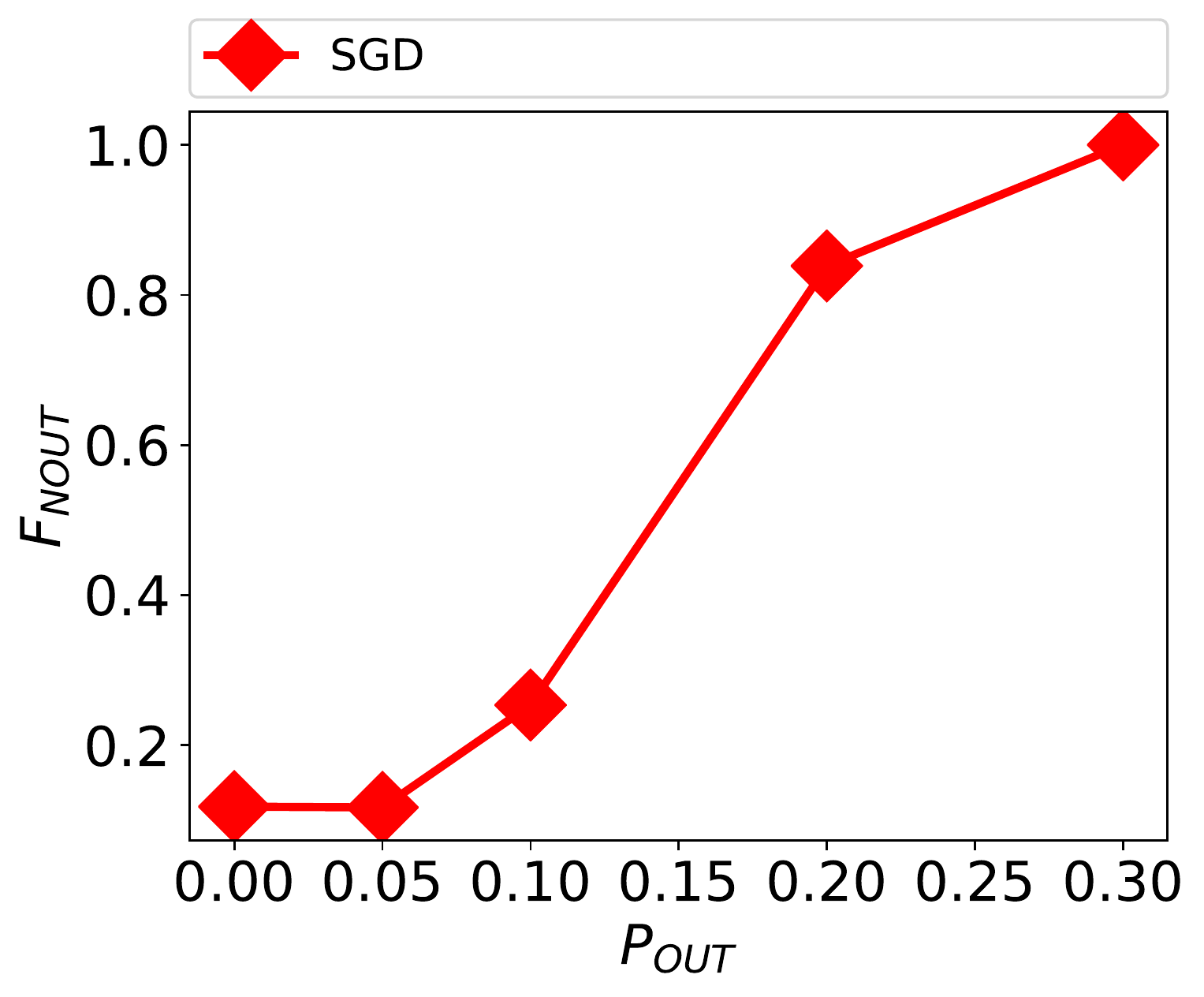}\label{fig:fout_scale_sq}}
    \subfloat[Non-outliers Loss, $p=2$]{\includegraphics[width=0.33\textwidth]{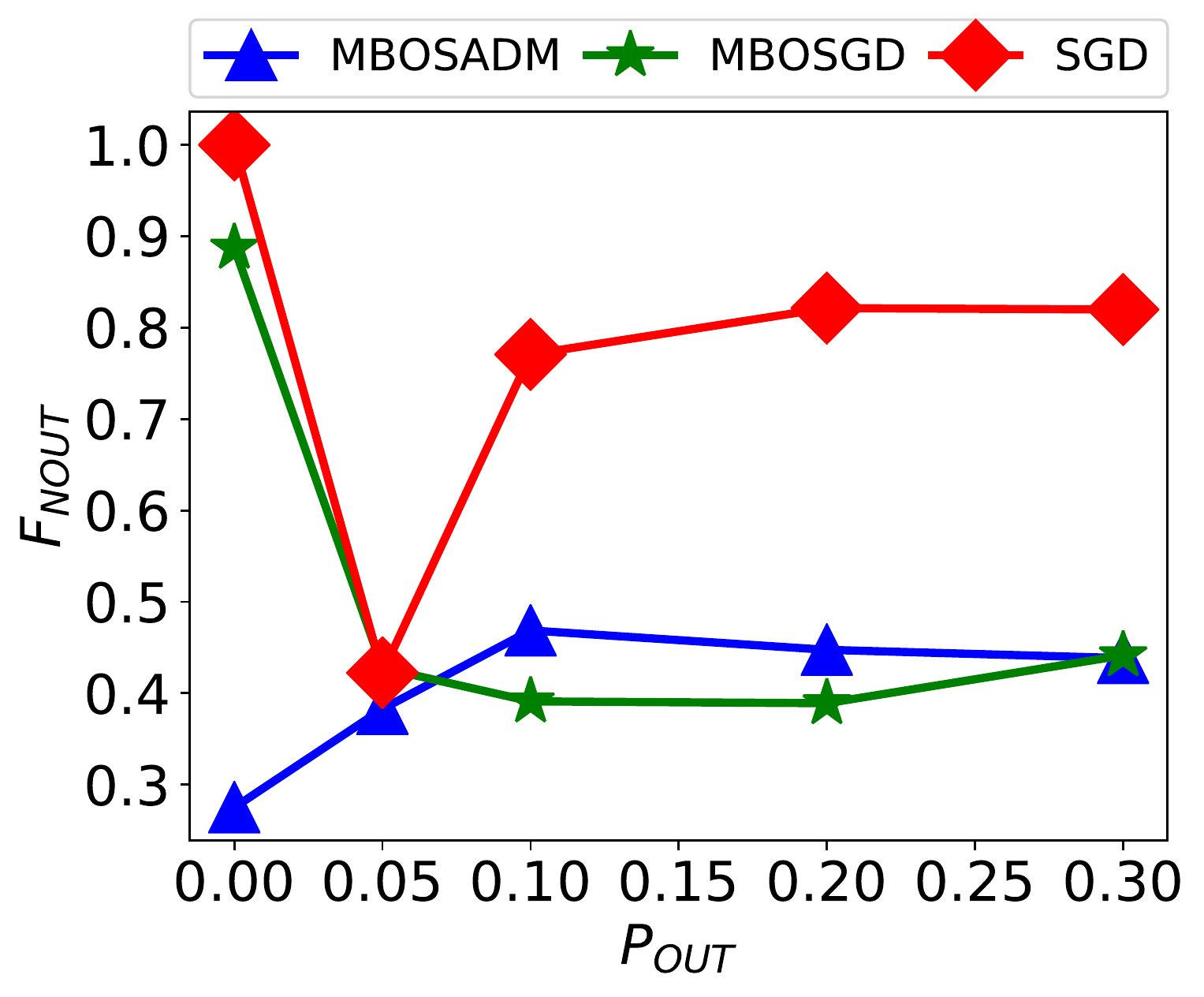}\label{fig:fout_scale_2}}
    \subfloat[Non-outliers Loss, $p=1$]{\includegraphics[width=0.33\textwidth]{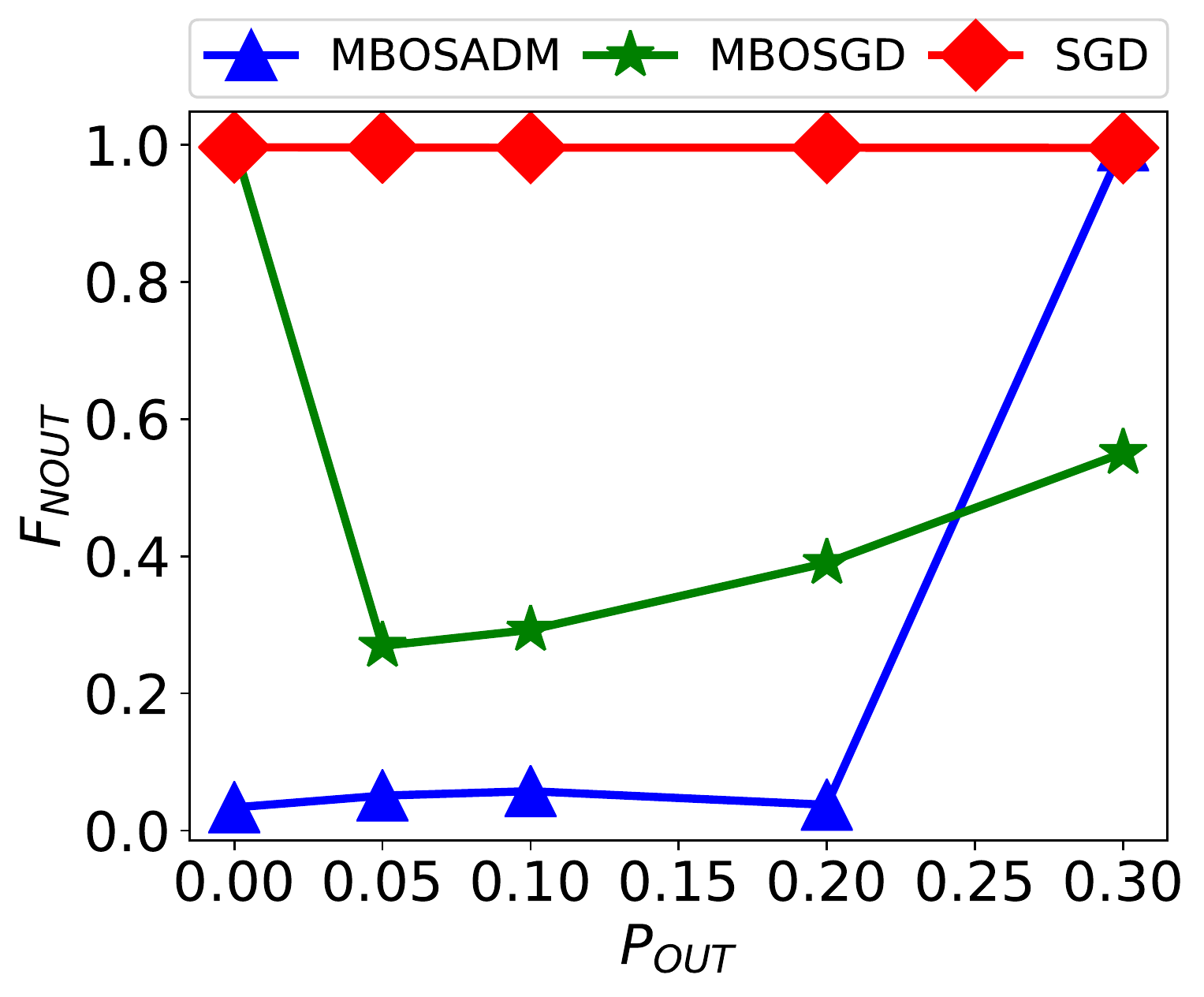}\label{fig:fout_scale_1}}
    
    \vspace{-3mm}
    \subfloat[Test Loss, MSE]{\includegraphics[width=0.33\textwidth]{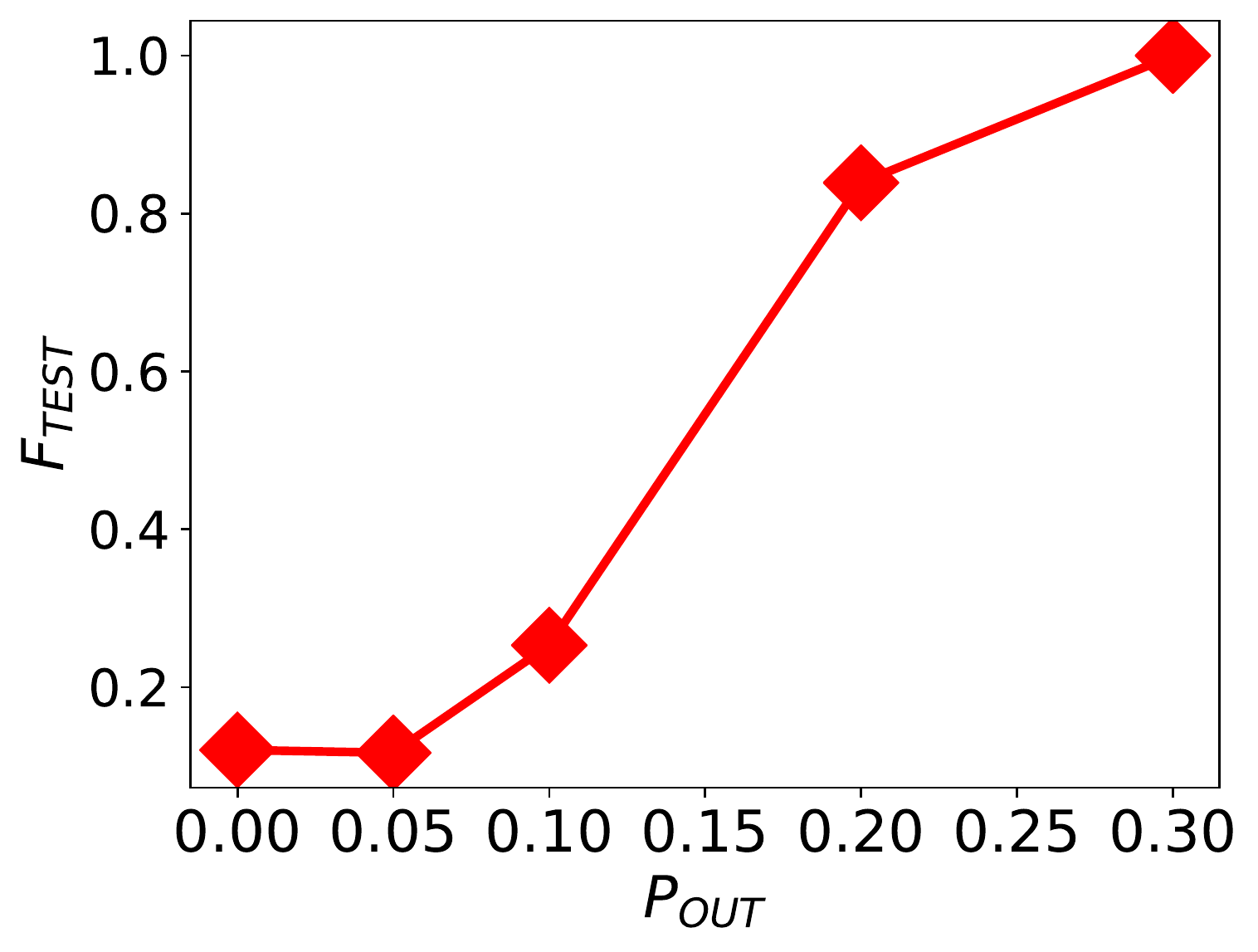}\label{fig:ftest_scale_ell2}}
    \subfloat[Test Loss, $p=2$]{\includegraphics[width=0.33\textwidth]{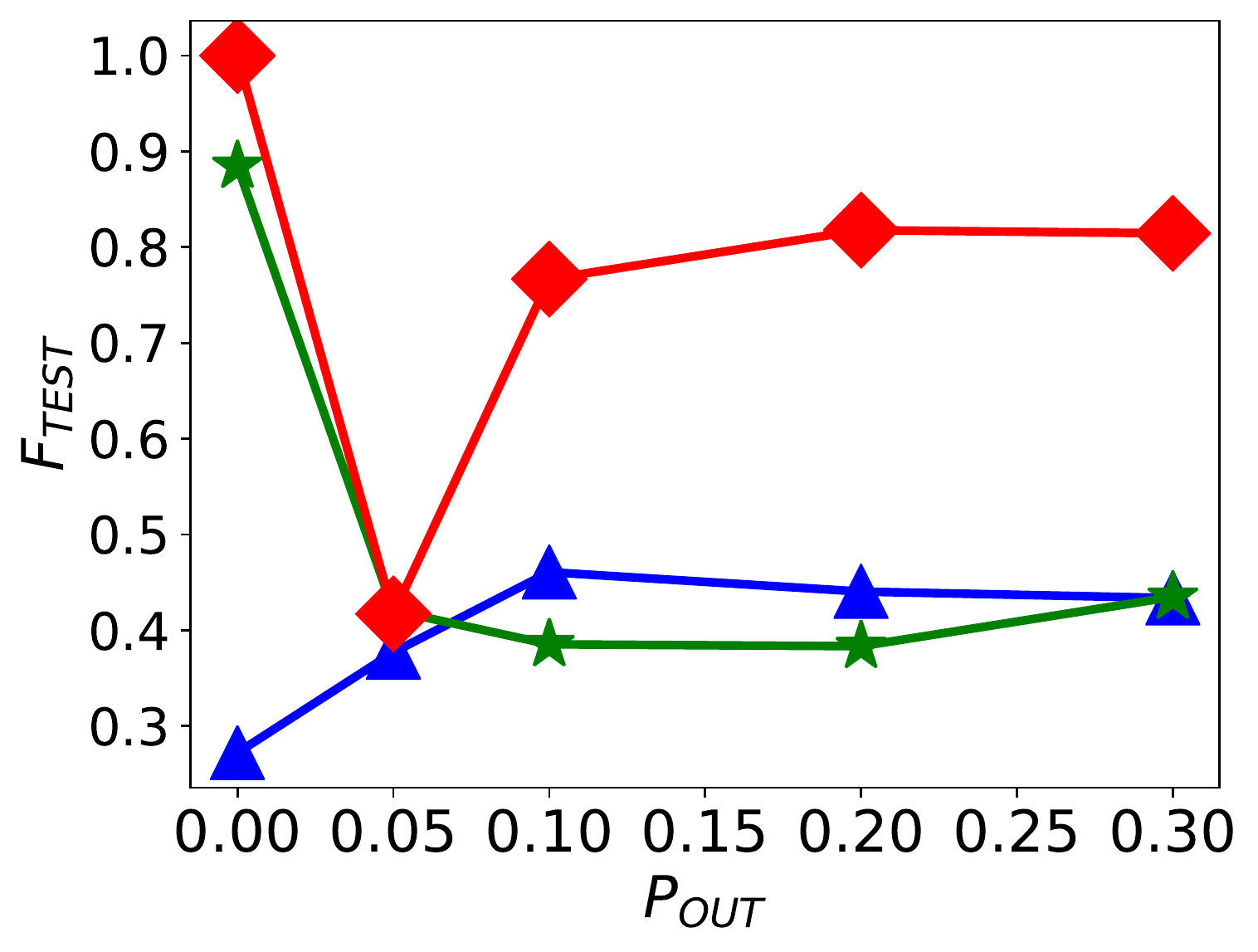}\label{fig:ftest_scale_2}}
    \subfloat[Test Loss, $p=1$]{\includegraphics[width=0.33\textwidth]{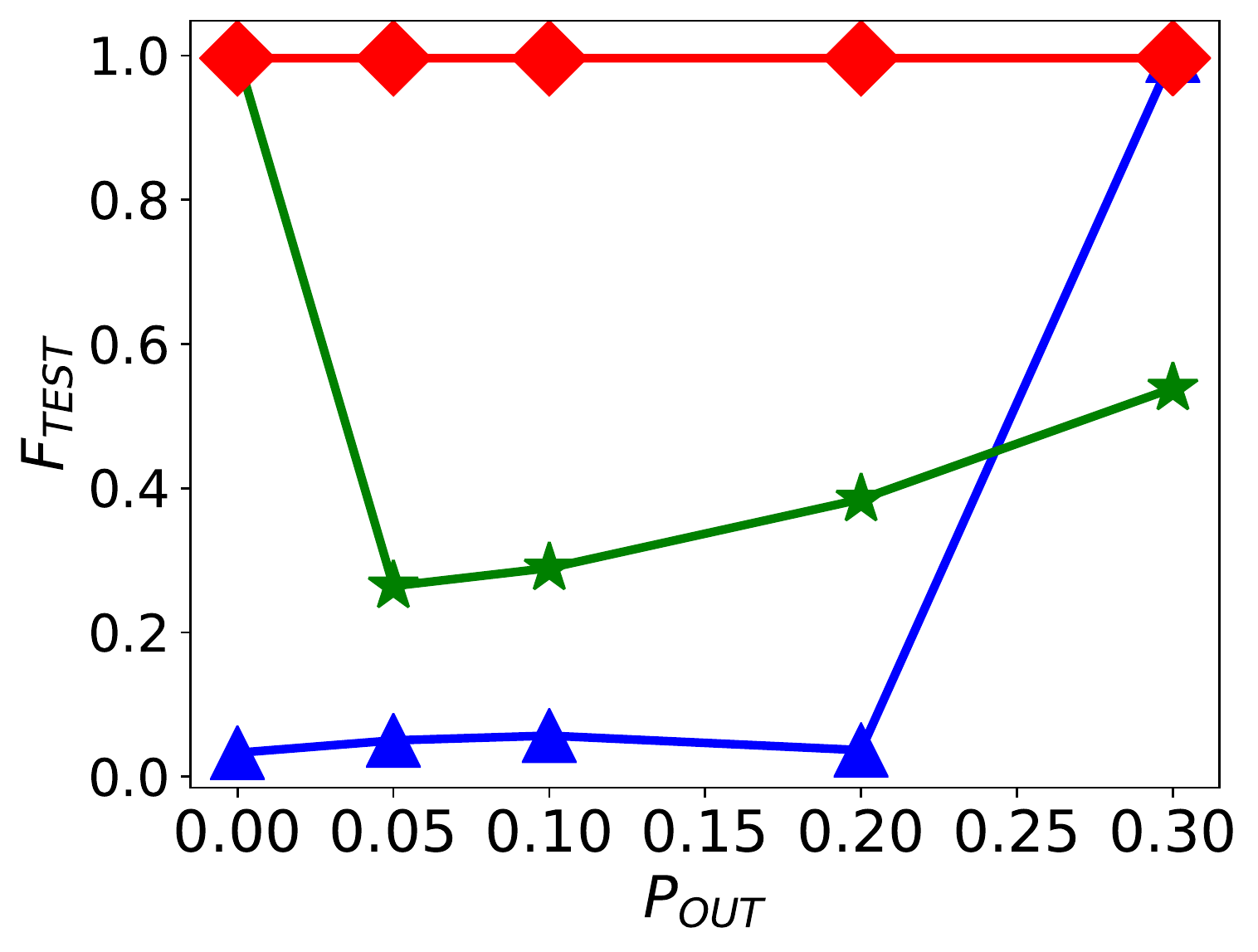}\label{fig:ftest_scale_1}}
    \caption{A   comparison of scalability  of the non-outlioers loss $F_{\texttt{NOUT}}$ and the test loss $F_{\texttt{TEST}}$  for different $p$-norms, w.r.t.,  outliers fraction $\outl.$ We normalize values in each figure by the largest observed value, to make comparisons between different objectives possible. We see that MSE in Figures~\ref{fig:fout_scale_sq} and
     \ref{fig:ftest_nout_scale} are drastically affected by outliers and scale with outliers fraction $\outl$. Other $\ell_p$ norms for different methods in Figures~\ref{fig:fout_scale_2}, \ref{fig:fout_scale_1}, \ref{fig:ftest_scale_2}, and \ref{fig:ftest_scale_1} generally stay unchanged w.r.t. $\outl.$ However, \mboa{} in the high outlier regime and $p=1$ performs poorly.}\label{fig:ftest_nout_scale}
\end{figure}

\subsection{Robustness Analysis}\label{sec:robust} We further  study   the  robustness of different $p$-norms  and MSE to the  presence of outliers. For brevity, we only report results for \texttt{MNIST} and for $p=1,2,$ and MSE. For more results refer to Fig.~\ref{fig:ftest_nout_scale_app} in \fullversion{App.~\ref{app:robust} in  \cite{our}.}{App.~\ref{app:robust}.}  We show the scaling of $F_{\texttt{NOUT}}$  and $F_{\texttt{TEST}}$ w.r.t. the   fraction $\outl$ in Fig.~\ref{fig:ftest_nout_scale}, for different  norms. To make comparisons between different objectives interpretable, we normalize all values in each figure by the largest value in that figure. 

By comparing Figures~\ref{fig:fout_scale_sq} and \ref{fig:ftest_scale_ell2}, corresponding to MSE,  with other plots in Fig.~\ref{fig:ftest_nout_scale}, we see that the loss values considerably increase by adding outliers. For other $p$-norms, we see that \sgd{} generally stays unchanged, w.r.t. outliers. However, the loss for \sgd{} is higher than MBO variants.  Loss values for \mbob{} also do not increase significantly by adding outliers. Moreover, we see that,  when no outliers are present $\outl=0.0$, \mbob{} obtains higher loss values. \mboa{} generally achieves  the lowest loss values and these values again do not increase with increasing $\outl$; however, for  the highest outliers ($\outl=0.3$), the performance of \mboa{} is considerably worse for $p=1.$ As we emphasize in Sec.~\ref{sec:time_obj_pfm}, high number of outliers  adversely affects the convergence of SADM, and hence the poor performance of \mboa{} for $\outl=0.3.$

\begin{figure}[t]
    \subfloat[\texttt{MNIST}, MSE]{\includegraphics[width=0.33\textwidth]{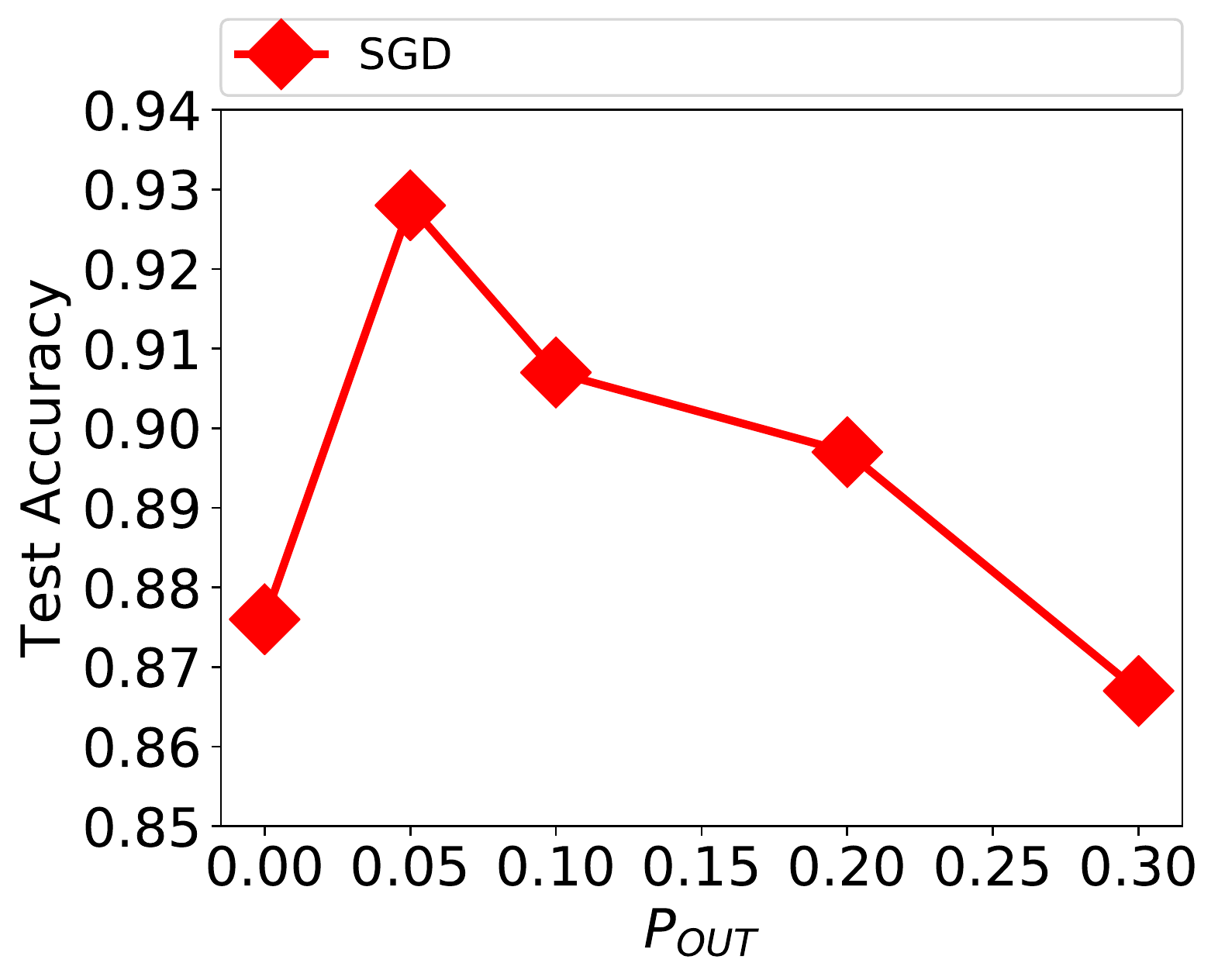}\label{fig:acc_mnist_mse}}
    \subfloat[\texttt{MNIST}, $p=2$]{\includegraphics[width=0.33\textwidth]{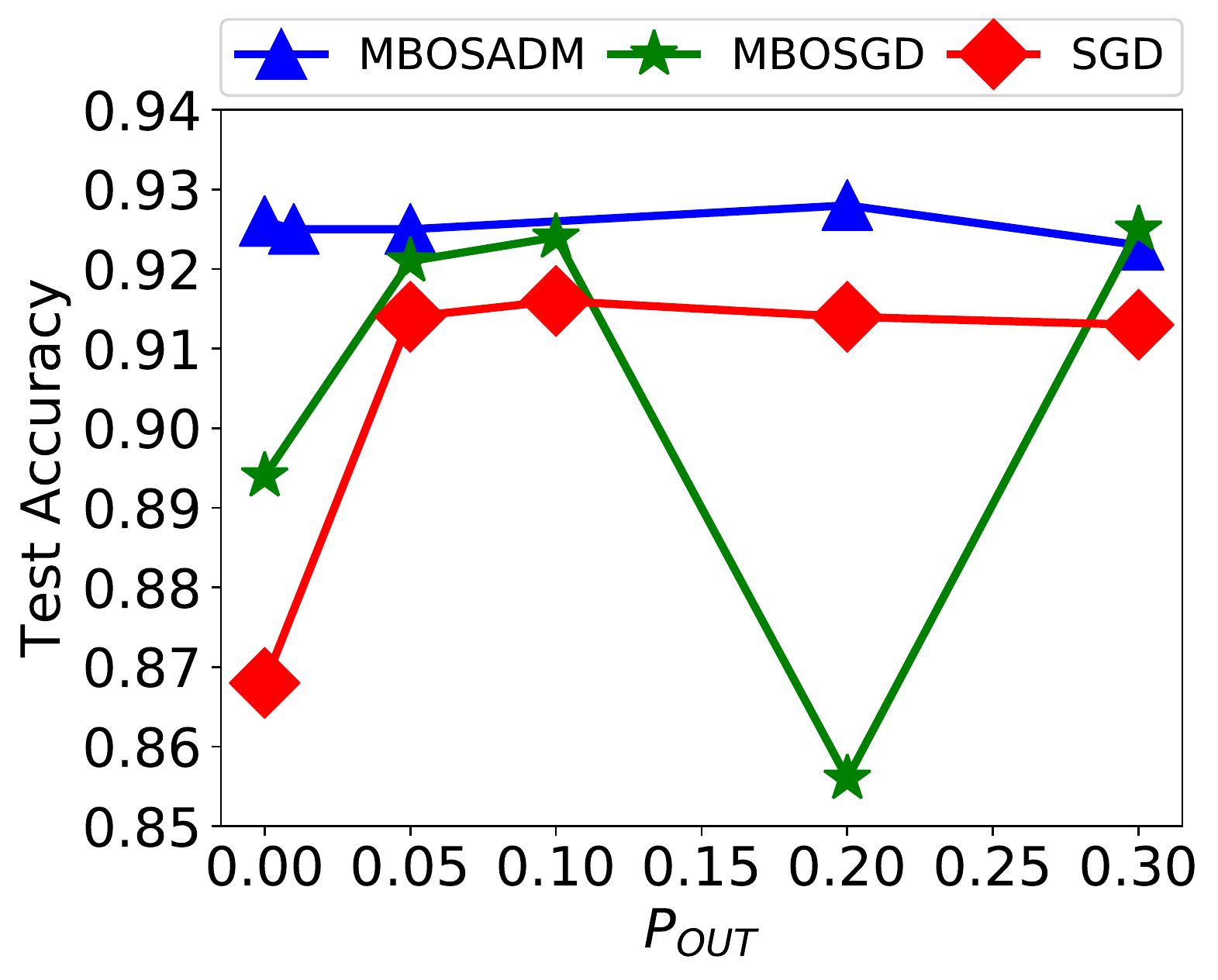}\label{fig:acc_mnist_p2}}
    \subfloat[\texttt{MNIST}, $p=1$]{\includegraphics[width=0.33\textwidth]{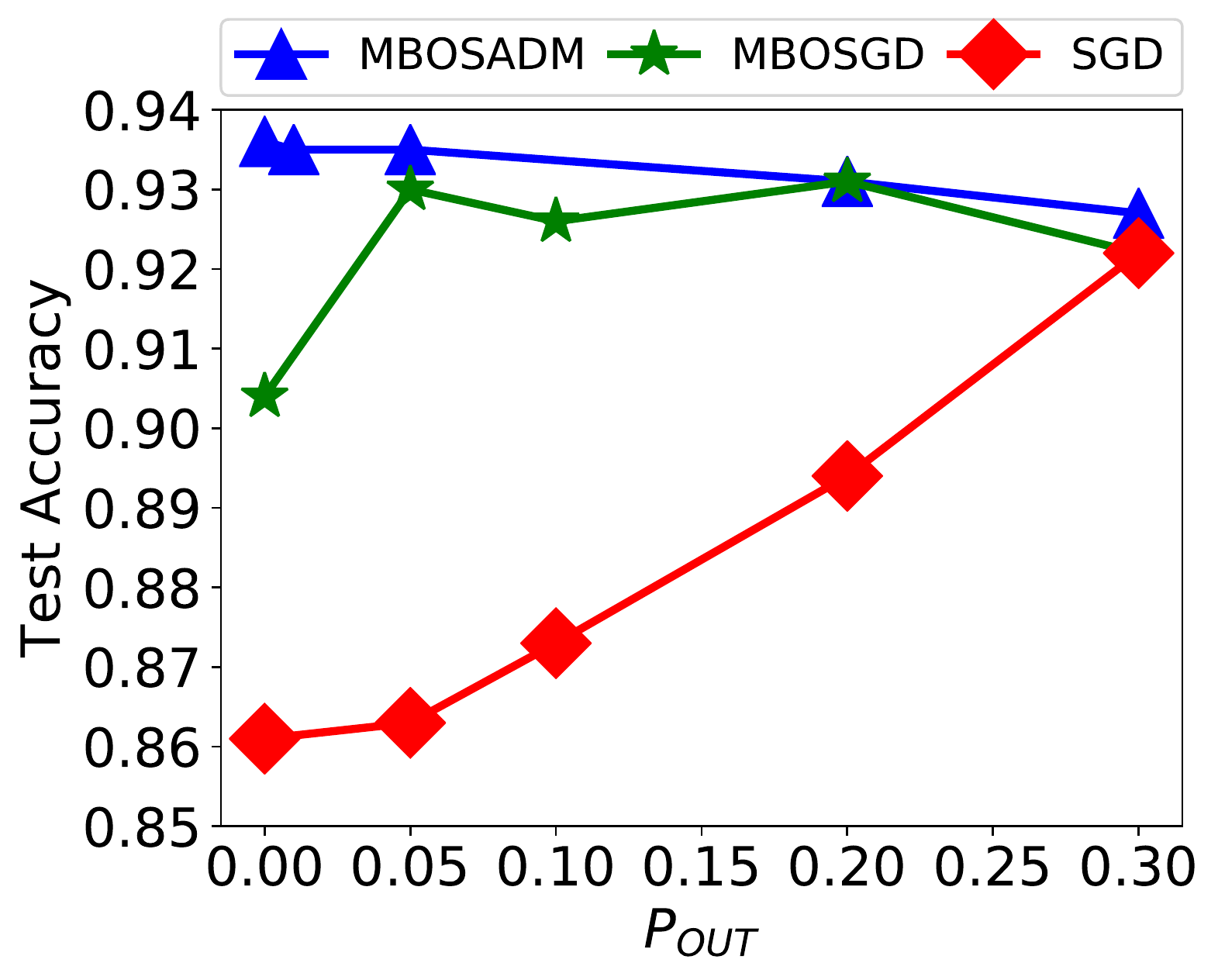}\label{fig:acc_mnist_p1}}
    
    \subfloat[\texttt{Fashion-MNIST}, MSE]{\includegraphics[width=0.33\textwidth]{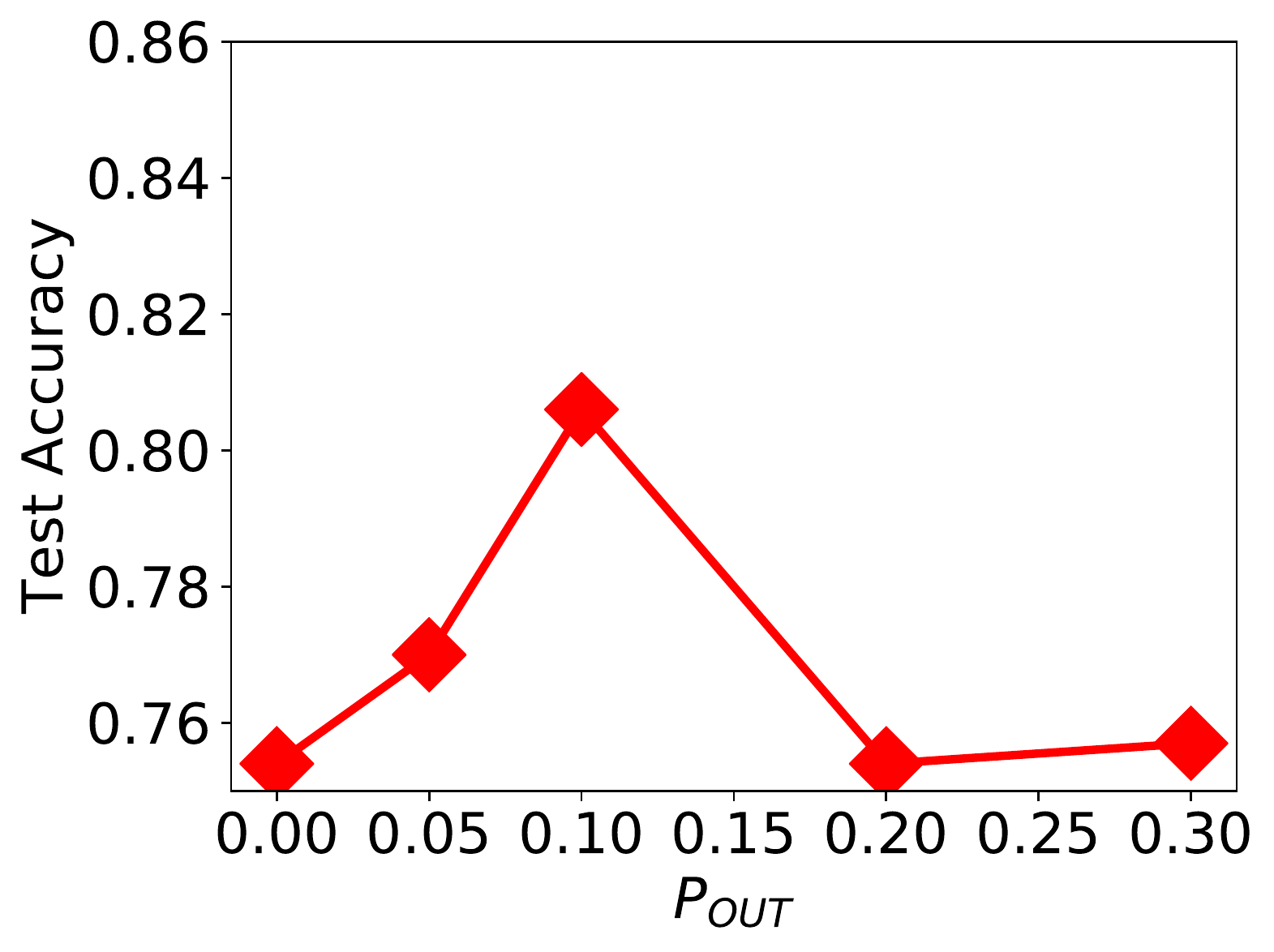}\label{fig:acc_fmnist_mse}}
    \subfloat[\texttt{Fashion-MNIST}, $p=2$]{\includegraphics[width=0.33\textwidth]{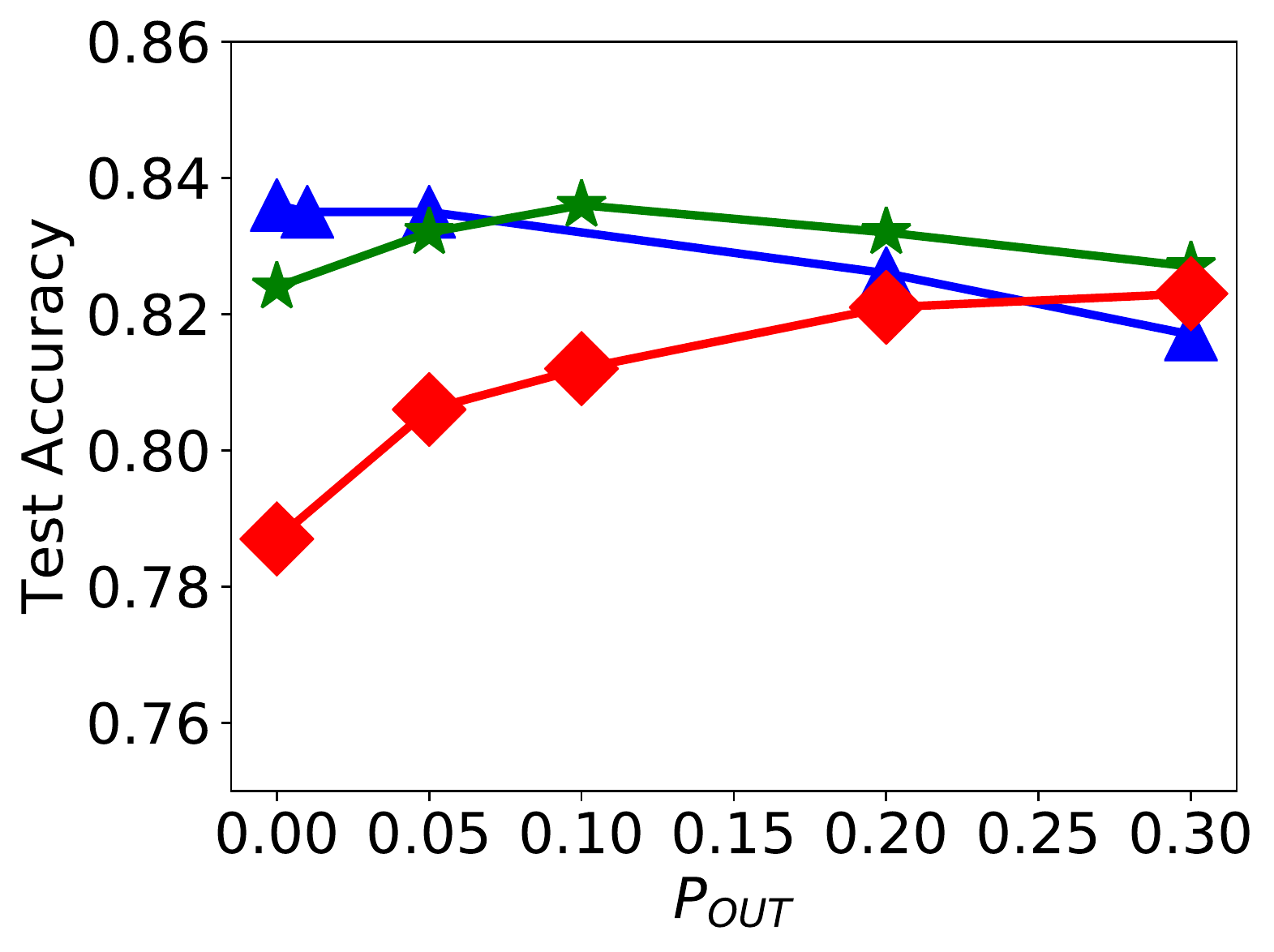}\label{fig:acc_fmnist_p2}}
    \subfloat[\texttt{Fashion-MNIST}, $p=1$]{\includegraphics[width=0.33\textwidth]{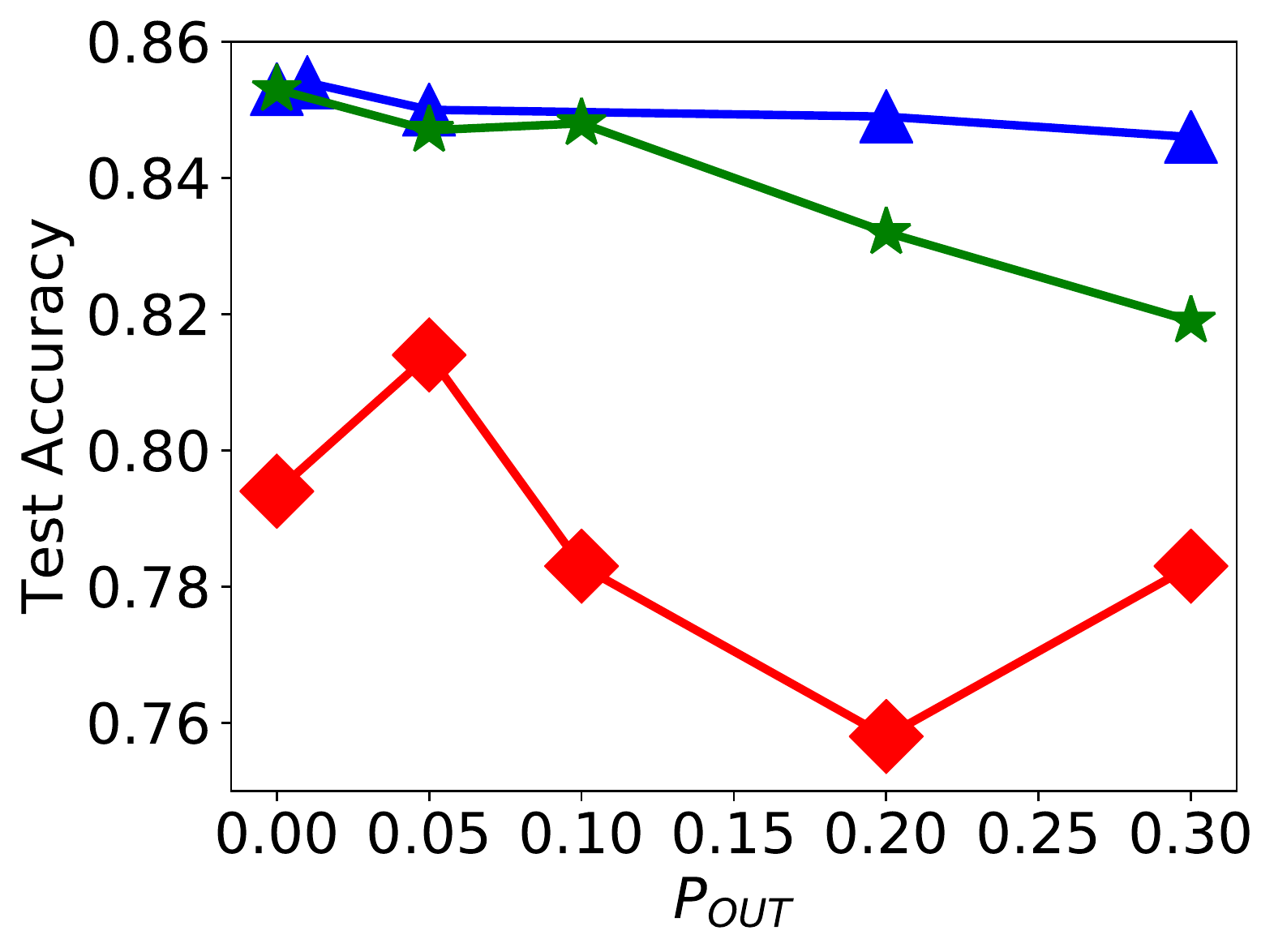}\label{fig:acc_fmnist_p1}}
    \caption{Classification performance for different methods and datasets. We use the embeddings obtained by auto-encoders trained via different algorithms to train a logistic regression model for classification. We generally observe that \mboa{} results in higher accuracy on the test sets. Moreover, we see that MSE is evidently sensitive to outliers, see Figures~\ref{fig:acc_mnist_mse} and \ref{fig:acc_fmnist_mse} for $\outl\geq 0.2.$} \label{fig:cls_task}
\end{figure}

\subsection{Classification Performance}\label{sec:exp_cls} Fig.~\ref{fig:cls_task} shows the quality of the latent embeddings obtained by different trained autoencoders on the downstream classification over  \texttt{MNIST} and \texttt{FashionMNIST}. Additional results are shown in Tables~\ref{tab:acc_mnist} and \ref{tab:acc_fmnist} in App.~\ref{app:classification}. We see that MBO variants again outperform \sgd{}. For \texttt{MNIST}, (reported in Figures~\ref{fig:acc_mnist_mse} to \ref{fig:acc_mnist_p1}), we see that \mboa{} for $p=1$ obtains the highest accuracy. Moreover, for \texttt{Fashion-MNIST} (reported in Fig.~\ref{fig:acc_fmnist_mse} to \ref{fig:acc_fmnist_p1}), we observe that again \mboa{} for $p=1$ outperforms other methods.  
We also observe that MSE (reported in Figures~\ref{fig:acc_mnist_mse} and \ref{fig:acc_fmnist_mse}) is sensitive to outliers; the corresponding accuracy drastically drops for $\outl\geq 0.1.$ An interesting  observation is that adding outliers improves the performance of \sgd{}; however, we see that \sgd{} always results in lower accuracy, except in two cases ($\outl=0.2$ in Fig.~\ref{fig:acc_mnist_p2} and $\outl=0.3$ in Fig.~\ref{fig:acc_fmnist_p2}). 


\section{Conclusion}\label{sec:conclusion}
We present a generic class of robust formulations that includes many  applications, i.e., auto-encoders, multi-target regression, and  matrix factorization. We show that  SADM, in combination with MBO, provides efficient solutions for our class of robust problems. 
 Studying  other proximal measures described by Ochs et al.~\cite{Ochs2019} is an open area. Moreover, characterizing  the sample complexity of our proposed method for obtaining a stationary point, as in  MBO variants that use gradient methods \cite{drusvyatskiy2019efficiency,davis2019stochastic}, is an interesting future direction.



\bibliographystyle{splncs04}
\bibliography{main}

\newpage

\appendix

\section{Model Based Optimization}\label{app:lsa}

Ochs et al.~\cite{Ochs2019} allows to use more general Bregman divergences for the second term (c.f. Sec.~5.2 of \cite{Ochs2019}). The specific model functions we study fall under Example 5.3 in \cite{Ochs2019}  (see also \cite{lewis2016proximal,davis2019stochastic}.) 
Moreover, Ochs et al. allow \eqref{eq:unconstainedsub} to be solved inexactly; at each iteration, the solution $\Tilde{\vc{\theta}}^k$ only needs to improve the model function value by, i.e., 
\begin{align}\label{eq:model_improve}
 \Delta_k\triangleq F_{\vc{\theta}^k}(\Tilde{\vc{\theta}}^k) + \frac{h}{2}\|\Tilde{\vc{\theta}}^k- \vc{\theta}^k\|_2^2 - F_{\vc{\theta}^k}(\vc{\theta}^k) <0.  
  \end{align}
The step-size $\eta^k$ is found via an Armijo line search algorithm. In particular, the line search algorithm finds a step-size, s.t., $\vc{\theta}^{k+1}$  improves the current objective comparable with the model improvement $\Delta^k$, i.e., 
$$\frac{1}{n} \sum_{i\in[n]} \|F(\vc{\theta}^{k+1}; \vc{x}_i)\|_p+ g(\vc{\theta}^{k+1})- \left(\frac{1}{n} \sum_{i\in[n]} \|F(\vc{\theta}^k; \vc{x}_i)\|_p + g(\vc{\theta}^k) \right) \leq  \gamma \eta^k \Delta^{k},$$
where $\gamma \in (0,1)$ is a hyper-parameter of the linear search algorithm.

The exact Line Search Algorithm from Ochs et al.~\cite{Ochs2019} is summarized in Alg.~\ref{alg:lsa}. Note that Ochs et al. prove that  LSA is guaranteed to finish within finite number of iterations. 

\begin{algorithm}[t!] 
    \caption{Line Search Algorithm (LSA)}
    \label{alg:lsa}
    \begin{algorithmic}[1] 
        \State \textbf{Input:} Solutions $\vc{\theta}^k$,  $\Tilde{\vc{\theta}}^k$, and parameters  $\delta, \gamma \in (0,1), \tilde{\eta} > 0$
        \State Initialize  $\Tilde{\vc{\theta}}:=(1-\tilde{\eta}) \vc{\theta}^k + \tilde{\eta} \Tilde{\vc{\theta}}^k$
        \While{\small{$\frac{1}{n} \sum_{i\in[n]} \|F(\Tilde{\vc{\theta}}; \vc{x}_i)\|_p+ g(\Tilde{\vc{\theta}})  > \frac{1}{n} \sum_{i\in[n]} \|F(\vc{\theta}^k; \vc{x}_i)\|_p+ g(\vc{\theta}^k) + \gamma \tilde{\eta} \big [ F_{\vc{\theta}^k}(\Tilde{\vc{\theta}}^k) - F_{\vc{\theta}^k}(\vc{\theta}^k)+ \frac{h}{2}\|\Tilde{\vc{\theta}}^k- \vc{\theta}^k\|_2^2 \big ]$}}
        \State Set  $\tilde{\eta} := \delta\tilde{\eta} $ 
        \State Set $\Tilde{\vc{\theta}}:=(1-\tilde{\eta}) \vc{\theta}^k + \tilde{\eta} \Tilde{\vc{\theta}}^k$
        \EndWhile
        	\State \textbf{Return:}  $\tilde{\eta}$
        \end{algorithmic}
\end{algorithm}

\section{Proof of Proposition~\ref{prop:ochs}} \label{append:proofpropochs}
We show that all assumptions for Theorem~4.1 of Ochs et al.~\cite{Ochs2019} are satisfied, therefore the result holds. 
We solve convex problems \eqref{eq:unconstainedsub} via OADM iterations. Since we established the $O(\frac{\log T}{T})$ convergence rate of OADM in Theorem~\ref{trm:oadm}, we can solve \eqref{eq:unconstainedsub} with an arbitrary accuracy $\epsilon$, which goes to zero for $T\to \infty$; therefore, Assumption~4.1 of \cite{Ochs2019} is  satisfied. 
Moreover, due to the choice of the Euclidean norm as our Bergman
distance function, Assumption~4.2 of \cite{Ochs2019} is satisfied (see Section 5 of \cite{Ochs2019}). Finally, our model function \eqref{eq:MF} is precisely Example~5.3 in \cite{Ochs2019}; it is written as the form $f_0+h\circ F,$ where $f_0(\vc{\theta})=\chi_{\mathcal{C}}(\vc{\theta}) + g(\vc{\theta})$, $h(\vc{F})=\|\vc{F}\|_{p,1}$, and $F:\reals^d\to\reals^{n\times N}.$ Therefore, Assumption~4.3 is also satisfied. In addition, the domain of Euclidean distance is $\mathbb{R}^n$, which implies that Condition (ii) in Theorem 4.1 of \cite{Ochs2019} is satisfied for any limit point. Thus, all the conditions in Theorem~4.1 of Ochs et al. are satisfied, and every limit point of Alg.~\ref{alg:mbo} is a stationary point.

\section{Proof of Theorem~\ref{trm:oadm}} \label{append:proofoadm}
\begin{proof}
Since we assume that the constraint set $\mathcal{C}$ is convex, closed, and bounded, we take the diameter of the set $\mathcal{C}$ to be 
\begin{align}\label{eq:diam}
D_{\mathcal{C}}=\max_{\theta_1, \theta_2\in \mathcal{C}} \|\theta_1 - \theta_2\|_2.
\end{align}
In addition, since function $F$ and its Jacobian $\vc{D}_{F_i}(\vc{\theta})$ are bounded on the set $\mathcal{X}$, for all $i\in[n]$, there exist $M_F, M_D<\infty$, s.t.,  
\begin{subequations}\label{eq:bounded}
\begin{align}
  \|F(\vc{\theta}; \vc{x})\|_{\infty}&\leq M_F~~\forall \vc{\theta}\in \mathcal{C}, \vc{x}\in\{\vc{x}_1,\ldots,\vc{x}_n\}\label{eq:bounded_F} \\
  \|\vc{D}_{F_i}(\vc{\theta})\|_{\infty} &\leq M_D ~~\forall \vc{\theta}\in \mathcal{C}, i\in [n].\label{eq:bounded_D}  
\end{align}
\end{subequations}

We now paraphrase Theorem~6 of \cite{wang2013online} as the following lemma, which we prove in Appendix~\ref{append:proof_oadm_lemma}. 
\begin{lemma}[Theorem 6 in \cite{wang2013online}]\label{lem:oadm}
If the assumptions of Theorem~\ref{trm:oadm} hold,  for the sequence $\{\vc{\theta}^t_1, \vc{\theta}^t_2, \vc{u}^t\}, t\in[T]$ generated by OADM algorithm for any sequence of the variables $\vc{x}_t\in \{\vc{x}_1,\ldots,\vc{x}_n\}, t \in [T]$ the  following holds
\begin{subequations}\label{eq:bounds_oadm}
\begin{align}\label{eq:obj_bound}
     & \sum_{t=1}^T  \left(F^{(k)}(\vc{\theta}_1^t; \vc{x}_t) + G(\vc{\theta}_2^{t+1})\right)  -   \sum_{t=1}^T   \left(  F^{(k)}(\vc{\theta}^*; \vc{x}_t )+ G(\vc{\theta}^*)
    \right) \nonumber \\
    &\leq \frac{(N^{1/p}d M_{D} + hD_{\mathcal{C}})^2}{2h}\log(T+1) + \frac{\beta+h}{2} D_{\mathcal{C}}^2 \\
    &\sum_{t=1}^T \|\vc{\theta}_1^{t+1} - \vc{\theta}_2^{t+1}\|_2^2 + \|\vc{\theta}_2^{t+1} - \vc{\theta}_2^t\|_2^2 \nonumber \\
    &\leq \frac{2}{\beta} (\sqrt{d} M_G + h D_{\mathcal{C}} + L_G) \log(T+1) + (1+ \frac{h}{\beta})D_{\mathcal{C}}^2,
    \label{eq:feas_bound}
\end{align} 
\end{subequations}
where $\vc{\theta}^* = \vc{\theta}_1^*= \vc{\theta}_2^*$ is the optimal solution for \eqref{eq:admm_form}, and $L_G$ is the Lipschitz coefficient for $g(\cdot)$, and
$M_G\triangleq N^{1/p}\sqrt{d} M_{D}$.
\end{lemma}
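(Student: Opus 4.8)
The plan is to recognize Lemma~\ref{lem:oadm} as an instance of the OADM regret analysis of Wang and Banerjee, so the proof splits into two parts: (i) match our inner subproblem \eqref{eq:admm_form} and the updates \eqref{eq:update_theta1}--\eqref{eq:update_dual} to the \emph{strongly convex} regime of OADM and verify its hypotheses; and (ii) either quote Theorem~6 of \cite{wang2013online} directly, or reproduce its regret computation while carefully tracking constants so as to land exactly on \eqref{eq:obj_bound}--\eqref{eq:feas_bound}. Concretely, \eqref{eq:admm_form} is a consensus problem ($\vc{\theta}_1=\vc{\theta}_2$) whose online loss at round $t$ is $f_t\triangleq F^{(k)}(\cdot;\vc{x}_t)$, whose fixed component is $G$, and whose proximal and penalty coefficients are $\gamma_t=ht$ and $\rho_t=\beta t$; these grow linearly precisely so as to exploit strong convexity and yield a $\log T$ rate rather than a $\sqrt T$ one.

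The one substantive hypothesis to check is a uniform bound on the subgradients of the (non-smooth) per-round losses. Writing $f_t(\vc{\theta})=\|F(\vc{\theta}^k;\vc{x}_t)+\vc{D}_{F_t}(\vc{\theta}^k)(\vc{\theta}-\vc{\theta}^k)\|_p+\tfrac h2\|\vc{\theta}-\vc{\theta}^k\|_2^2$, every subgradient of the first term is of the form $\vc{D}_{F_t}(\vc{\theta}^k)^{\top}\vc{v}$ with $\vc{v}$ in the unit ball of the dual norm, so it has Euclidean norm at most $N^{1/p}dM_D$ by the Jacobian bound \eqref{eq:bounded_D} and norm equivalence; the gradient of the quadratic term is $h(\vc{\theta}-\vc{\theta}^k)$, bounded by $hD_{\mathcal{C}}$ once we use that the iterates stay bounded (which follows from the strong convexity of the updates \eqref{eq:update_theta1}--\eqref{eq:update_theta2} together with the boundedness of $F$, of the Jacobians, and of $\mathcal{C}$). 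Hence $\|\partial f_t\|_2\le G_f\triangleq N^{1/p}dM_D+hD_{\mathcal{C}}$, while $f_t$ is itself $h$-strongly convex through its quadratic term. Together with $G=g+\chi_{\mathcal{C}}$ being closed, $\beta$-strongly convex, and $L_G$-Lipschitz on the bounded set $\mathcal{C}$ of diameter $D_{\mathcal{C}}$, with the subproblems strongly convex (hence uniquely solved), and with $\vc{\theta}^*=\vc{\theta}_1^*=\vc{\theta}_2^*$ existing and unique, all requirements of the OADM analysis are met.

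The regret computation then follows the standard template. From the first-order optimality of the $\vc{\theta}_2$-update together with $\beta$-strong convexity of $G$ one obtains $G(\vc{\theta}_2^{t+1})-G(\vc{\theta}^*)\le\rho_t\langle\vc{u}^{t+1},\vc{\theta}_2^{t+1}-\vc{\theta}^*\rangle-\tfrac\beta2\|\vc{\theta}^*-\vc{\theta}_2^{t+1}\|_2^2$ (using $\vc{u}^{t+1}=\vc{u}^t+\vc{\theta}_1^{t+1}-\vc{\theta}_2^{t+1}$); from the optimality of the $\vc{\theta}_1$-update, convexity and $h$-strong convexity of $f_t$, and Young's inequality on the bounded-subgradient term $\langle\partial f_t(\vc{\theta}_1^t),\vc{\theta}_1^t-\vc{\theta}_1^{t+1}\rangle\le\tfrac{G_f^2}{2\gamma_t}+\tfrac{\gamma_t}{2}\|\vc{\theta}_1^t-\vc{\theta}_1^{t+1}\|_2^2$, one obtains a matching per-round bound for $f_t(\vc{\theta}_1^t)-f_t(\vc{\theta}^*)$. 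Adding the two, summing over $t$, and reorganizing the dual-variable and consecutive-difference cross terms via the identity $2\langle a,a-b\rangle=\|a\|_2^2-\|b\|_2^2+\|a-b\|_2^2$, the linearly growing $\rho_t,\gamma_t$ are exactly tuned to the moduli $\beta,h$ so that the adverse quadratics are dominated; what survives is a telescoping sum bounded by $D_{\mathcal{C}}^2$-type constants, the nonnegative byproduct terms $\|\vc{\theta}_1^{t+1}-\vc{\theta}_2^{t+1}\|_2^2+\|\vc{\theta}_2^{t+1}-\vc{\theta}_2^t\|_2^2$ (which one moves to the left-hand side to form \eqref{eq:feas_bound}), and the harmonic partial sum $\sum_{t\le T}\tfrac{G_f^2}{2ht}=O(\log T)$, the source of the $\log(T+1)$ factors. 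Substituting $G_f$, $\beta$, $h$, $L_G$, $M_G$, and $D_{\mathcal{C}}$ then yields exactly \eqref{eq:obj_bound} and \eqref{eq:feas_bound}.

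I expect the main obstacle to be precisely the uniform subgradient bound and iterate boundedness of the second paragraph: this is the point at which the $\ell_p$-non-smoothness could defeat a bounded-subgradient OADM argument, and it is saved only because the non-smooth part sees $\vc{\theta}$ through an affine map with bounded Jacobian. Everything after that --- the per-round inequalities, the telescoping, and pinning down the constants in the exact stated form --- is bookkeeping that mirrors \cite{wang2013online}; if one is content to invoke Theorem~6 of \cite{wang2013online} as a black box, only the hypothesis verification of the second paragraph remains to be done.
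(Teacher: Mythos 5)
Your proposal matches the paper's proof in essence: the paper likewise establishes the lemma by verifying the hypotheses of Wang and Banerjee's OADM result (bounded subgradients of the per-round losses via the affine composition with the bounded Jacobian, yielding the same constant $\sqrt{d}M_G+hD_{\mathcal{C}}=N^{1/p}dM_D+hD_{\mathcal{C}}$, together with the $h$- and $\beta$-strong convexity of $F^{(k)}$ and $G$, the bounded diameter $D_{\mathcal{C}}$, the initialization, and the Lipschitz continuity of $g$) and then invokes Theorem~6 of \cite{wang2013online} as a black box. Your additional sketch of the internal regret telescoping is consistent with that theorem but not needed; the hypothesis-verification step you identify as the substantive content is exactly what the paper's proof carries out.
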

From \eqref{eq:feas_bound} we obtain the following 
\begin{subequations}\label{eq:cor:eq2}
\begin{align}
\sum_{t=1}^T \|\vc{\theta}_1^{t} - \vc{\theta}_2^{t}\|_2^2 = O(\log T)\\
\sum_{t=1}^T \|\vc{\theta}_2^{t+1} - \vc{\theta}_2^{t}\|_2^2=  O(\log T).
\end{align}
\end{subequations}
Now we derive the result
\begin{align*}
    \|\bar{\vc{\theta}}^T_1 - \bar{\vc{\theta}}^T_2 \|^2_2&= \|\frac{1}{T} \sum_{t=1}^T (\vc{\theta}_1^t - \vc{\theta}_2^{t+1})\|^2_2 \\
  & = \|\frac{1}{T}\sum_{t=1}^T (\vc{\theta}_1^t - \vc{\theta}_2^{t} + \vc{\theta}_2^{t}  - \vc{\theta}_2^{t+1} )\|^2_2 \\
  &\leq \|\frac{2}{T}\sum_{t=1}^T (\vc{\theta}_1^t - \vc{\theta}_2^{t} )\|_2^2 + \|\frac{2}{T}\sum_{t=1}^T (\vc{\theta}_2^{t}  - \vc{\theta}_2^{t+1} )\|^2_2\\
    &{\leq} \frac{2}{T} \sum_{t=1}^T \|\vc{\theta}_1^t - \vc{\theta}_2^t\|^2_2 + \frac{2}{T} \sum_{t=1}^T \|\vc{\theta}_2^{t}  - \vc{\theta}_2^{t+1}\|^2_2\\
    &\stackrel{\mbox{\eqref{eq:cor:eq2}}}{=} O(\frac{\log T}{T}),
\end{align*}
where in deriving the first inequality we have used the fact that $$\|\vc{x}+\vc{y}\|_2^2 \leq 2\|\vc{x}\|_2^2 +2\|\vc{y}\|_2^2~ \forall \vc{x},\vc{y}\in\reals^d.$$

Now we prove the second part of theorem about the optimality of solutions. 
Using convexity of $F^{(k)}$ and $G$ we have that 
\begin{align}\label{eq:proof}
    &F^{(k)}(\bar{\vc{\theta}}^T_1) + G(\bar{\vc{\theta}}^{T}_2)- F^{(k)}(\vc{\theta}^*)- G(\vc{\theta}^*) \nonumber \\ 
    &\leq \frac{1}{T}\sum_{t=1}^T\left(F^{(k)}(\vc{\theta}_1^t)  - F^{(k)}(\vc{\theta}^*)+G(\vc{\theta}_2^{t+1})  - G(\vc{\theta}^*)\right)\nonumber\\
    &= \frac{1}{T}\sum_{t=1}^T\left(F^{(k)}(\vc{\theta}_1^t; \vc{x}_t)  - F^{(k)}(\vc{\theta}^*; \vc{x}_t)+G(\vc{\theta}_2^{t+1})  - G(\vc{\theta}^*)\right)\nonumber\\ &+  \frac{1}{T}\sum_{t=1}^T\left(F^{(k)}(\vc{\theta}_1^t) - F^{(k)}(\vc{\theta}_1^t; \vc{x}_t) -F^{(k)}(\vc{\theta}^*) +  F^{(k)}(\vc{\theta}^*; \vc{x}_t)\right)\nonumber\\&\stackrel{{\eqref{eq:obj_bound}}}{\leq}
     \frac{(N^{1/p}d M_{D} + hD_{\mathcal{C}})^2}{2hT}\log(T+1) + \frac{\beta+h}{2T} D_{\mathcal{C}}^2 +\frac{1}{T}\sum_{t=1}^T \delta_t,
\end{align}
where the first inequality is due to the Jensen's inequality and $$\delta_t\triangleq F^{(k)}(\vc{\theta}_1^t) - F^{(k)}(\vc{\theta}_1^t; \vc{x}_t) -F^{(k)}(\vc{\theta}^*) +  F^{(k)}(\vc{\theta}^*; \vc{x}_t).$$ 
As the variables $\vc{x}_t$ and $\vc{\theta}_1^t$ are independent, we have 
\begin{align*}
    &\E[\delta_t|\vc{x}_1,\ldots\vc{x}_{t-1}] \\
    &= \E\left[F^{(k)}(\vc{\theta}_1^t) - F^{(k)}(\vc{\theta}_1^t; \vc{x}_t) -F^{(k)}(\vc{\theta}^*) +  F^{(k)}(\vc{\theta}^*; \vc{x}_t)|\vc{x}_1,\ldots\vc{x}_{t-1}\right] = 0.
\end{align*}
Therefore, we obtain
\begin{align}\label{eq:zeromean}
\E_{\vc{x}_t, t\in[T]} [\delta_t] = 0. 
\end{align}
Now taking expectations of both sides of \eqref{eq:proof} w.r.t. the sequence $\zeta_t, t\in[T]$ and noting \eqref{eq:zeromean} we have that:
\begin{align*}
    &\E_{\vc{x}_t, t\in [T]}\left[F^{(k)}(\bar{\vc{\theta}}^T_1) + G(\bar{\vc{\theta}}^{T}_2)- F^{(k)}(\vc{\theta}^*)- G(\vc{\theta}^*)\right] \\
    &\leq \frac{(N^{1/p}d M_{D} + hD_{\mathcal{C}})^2}{2hT}\log(T+1) + \frac{\beta+h}{2T} D_{\mathcal{C}}^2\\ 
    &=O(\frac{\log T}{T})
\end{align*}

We first derive the following for all $\vc{\theta}\in \mathcal{C}, \vc{x}_i\in\{\vc{x}_1,\ldots,\vc{x}_n\}$:
{\small
\begin{align}\label{eq:bound_fi}
    &|F^{(k)}(\vc{\theta}) - F^{(k)}(\vc{\theta}; \vc{x}_i)| \nonumber \\
    &= \left|F_{\vc{\theta}^{(k)}}(\vc{\theta}; \vc{x}_i) - \frac{1}{n}\sum_{j=1}^n F_{\vc{\theta}^{(k)}}(\vc{\theta}; \vc{x}_j)\right| \nonumber\\ 
    &\leq \frac{1}{n}\sum_{j\neq i}\left|F_{\vc{\theta}^{(k)}}(\vc{\theta}; \vc{x}_i)  -F_{\vc{\theta}^{(k)}}(\vc{\theta}; \vc{x}_j) \right|\nonumber\\
    &= \frac{1}{n}\sum_{j\neq i} \bigm| \|F(\vc{\theta}^{(k)}; \vc{x}_i)+ \vc{D} _{F_i}(\vc{\theta}^{(k)})(\vc{\theta}-\vc{\theta}^{(k)}))\|_p\nonumber \\
    &- \|F(\vc{\theta}^{(k)}; \vc{x}_j)+ \vc{D} _{F_j}(\vc{\theta}^{(k)})(\vc{\theta}-\vc{\theta}^{(k)})\|_p\bigm| \nonumber\\
    &\leq \frac{1}{n}\sum_{j\neq i}\|F(\vc{\theta}^{(k)}; \vc{x}_i)+ \vc{D} _{F_i}(\vc{\theta}^{(k)})(\vc{\theta}-\vc{\theta}^{(k)})) -
    F(\vc{\theta}^{(k)}; \vc{x}_j)+ \vc{D} _{F_j}(\vc{\theta}^{(k)})(\vc{\theta}-\vc{\theta}^{(k)})\|_p \nonumber\\
    &\leq \frac{1}{n}\sum_{j\neq i} \|F(\vc{\theta}^{(k)}; \vc{x}_i) - F(\vc{\theta}^{(k)};\vc{x}_j)\|_p + 
    \frac{1}{n}\sum_{j\neq i}\left\|\left(\vc{D}_{F_i}(\vc{\theta}^{(k)})- \vc{D} _{F_j}(\vc{\theta}^{(k)})\right)(\vc{\theta}-\vc{\theta}^{(k)})\right\|_p \nonumber\\ &\stackrel{\eqref{eq:bounded_F}}{\leq} 
    \sqrt[p]{N}M_F +\frac{1}{n}\sum_{j\neq i}\left(\sum_{i'=1}^N\bigm | \left(\vc{D}_{F_i}(\vc{\theta}^{(k)})- \vc{D} _{F_j}(\vc{\theta}^{(k)})\right)_{i'}^\top(\vc{\theta}-\vc{\theta}^{(k)})\bigm| \mid^p\right)^{1/p}\nonumber\\ &\leq \sqrt[p]{N}M_F+ \frac{1}{n}\sum_{j\neq i}\left(\sum_{i'=1}^N\left(\left\|\left(\vc{D}_{F_i}(\vc{\theta}^{(k)})- \vc{D}_{F_j}(\vc{\theta}^{(k)})\right)_{i'}\right\|_2\|\vc{\theta}-\vc{\theta}^{(k)}\|_2\right)^p\right)^{1/p}\nonumber  \\ &\stackrel{\eqref{eq:bounded_D}, \eqref{eq:diam}}{\leq}
    \sqrt[p]{N}M_F+\sqrt[p]{N}\sqrt{d}M_DD_{\mathcal{C}}.
\end{align}
}

In order to show the rest of the results we first show that the variance of $\delta_t$ are bounded for all $t\in [T]$
\begin{align}
\delta_t^2\nonumber 
&= 
\left(F^{(k)}(\vc{\theta}_1^t) - F^{(k)}(\vc{\theta}_1^t, \zeta_t) -F^{(k)}(\vc{\theta}^*) +  F^{(k)}(\vc{\theta}^*; \vc{x}_t)\right)^2\\\nonumber
&\leq 2\left(F^{(k)}(\vc{\theta}_1^t) - F^{(k)}(\vc{\theta}_1^t; \vc{x}_t)\right)^2 + 2\left( -F^{(k)}(\vc{\theta}^*) +  F^{(k)}(\vc{\theta}^*; \vc{x}_t)\right)^2\nonumber \\&\stackrel{\eqref{eq:bound_fi}}{\leq}
 4 \left(\sqrt[p]{N}M_F+\sqrt[p]{N}\sqrt{d}M_DD_{\mathcal{X}}\right)^2 \nonumber\\
 &\equiv \sigma^2.
\label{eq:squared}
\end{align}
From \eqref{eq:squared} it is obvious that  
\begin{align}\label{eq:exp}
\left(\frac{\delta_t^2}{\sigma^2}\right) \leq \exp(1)~~ \forall t\in [T].
 \end{align}
Now we show that the following holds
\begin{align}\label{eq:exp_gamma}
   \E[\exp(\alpha \delta_t)|\vc{x}_1, \ldots, \vc{x}_{t-1} ]\leq \exp(\alpha^2 \sigma^2)~~\forall\alpha>0.  
\end{align}
To show \eqref{eq:exp_gamma}, we follow  results from \cite{nemirovski2009robust}; similar to them, 
   we use the fact that $\exp(x) \leq x+ \exp(x^2).$ Then we have that 
    \begin{align*}
         \E[\exp(\alpha \delta_t)|\vc{x}_1, \ldots, \vc{x}_{t-1} ]& \leq \E[\exp(\alpha^2 \delta_t^2)|\vc{x}_1, \ldots, \vc{x}_{t-1} ]\\&=
         \E\left[\left(\exp( \frac{\delta_t^2}{\sigma^2})\right)^{\alpha^2 \sigma^2} \bigm| \vc{x}_1, \ldots, \vc{x}_{t-1} \right]\\ &\stackrel{\eqref{eq:exp}}{\leq} \exp(\alpha^2 \sigma^2)
    \end{align*}
Now for the sum $\sum_{t=1}^T\delta_t$ we have that
\begin{align}\label{eq:sum_exp}
 \E\left[\exp(\alpha \sum_{t=1}^T\delta_t)\right]& = \E\left[\exp(\alpha \sum_{t=1}^{T-1}\delta_t)\exp(\alpha\delta_T)\right] \nonumber\\ &=
 \E_{\vc{x}_1,\ldots,\vc{x}_{T-1}}\left[\exp(\alpha \sum_{t=1}^{T-1}\delta_t) \E_{\vc{x}_{T}}[\alpha\delta_T|\vc{x}_1,\ldots, \vc{x}_{T-1}]\right]\nonumber\\ & \stackrel{\eqref{eq:exp_gamma}}{\leq}
  \exp(\alpha^2 \sigma^2)\E\left[\exp(\alpha \sum_{t=1}^{T-1}\delta_t)\right].
 \end{align}
 Having \eqref{eq:sum_exp} for all $T$ and $\E[\exp(\alpha\delta_1)]\leq \exp(\alpha^2 \sigma^2)$ by induction  we obtain that:
 \begin{align}\label{eq:all_sum}
 \E\left[\exp(\alpha \sum_{t=1}^T\delta_t)\right] \leq \exp(T\alpha^2\sigma^2).
 \end{align}
 Now applying the Markov's inequality we have that for all $\alpha>0, M_{\delta}$:
 \begin{align}\label{eq:delta_bound}
 p(\sum_{t=1}^T\delta_t\geq M_\delta) &\leq \frac{\E\left[\exp(\alpha \sum_{t=1}^T\delta_t)\right]}{\exp(\alpha M_\delta) } \stackrel{\eqref{eq:all_sum}}{\leq} \frac{\exp(T\alpha^2\sigma^2)}{\exp(\alpha M_\delta)}.
 \end{align}
 Now for deriving bounds for our solution we have that for any $M>0$:
 
 \begin{align}\label{eq:tail}
    &P\left({\scriptsize F^{(k)}(\bar{\vc{\theta}}^T_1) + G(\bar{\vc{\theta}}^{T}_2)- F^{(k)}(\vc{\theta}^*)- G(\vc{\theta}^*) \geq \frac{(N^{1/p}d M_{D} + hD_{\mathcal{C}})^2}{2hT}\log(T+1) + \frac{\beta+h}{2T} D_{\mathcal{C}}^2 + \frac{M}{\sqrt{T}}}\right) \nonumber\\
    &\stackrel{\eqref{eq:proof}}{\leq} P\left(\frac{1}{T}\sum_{t=1}^T \delta_t\geq \frac{M}{\sqrt{T}}\right) \nonumber\\
    &= P\left(\sum_{t=1}^T \delta_t\geq {M\sigma\sqrt{T}}\right)\nonumber\\ &\stackrel{\eqref{eq:delta_bound}}{\leq} \exp\left(-\frac{M^2}{4}\right),
 \end{align}
 where for deriving the last inequality we  set $M_\delta=M\sigma\sqrt{T}$ and $\alpha = \frac{M}{2\sigma\sqrt{T}}$ in \eqref{eq:delta_bound}. Eq.~\eqref{eq:tail} is equivalent to \eqref{eq:oadm_opt_bound}, where we set
 \begin{subequations}\label{eq:k1k2k3}
 \begin{align}
 k_1 & \triangleq \max\left( \frac{(N^{1/p}d M_{D} + hD_{\mathcal{C}})^2\log(3)}{2h\log(2)},  \frac{\beta+h}{2} D_{\mathcal{C}}^2\right)\\
 k_2 & \triangleq M.
  \end{align}
 \end{subequations}

\end{proof}

\section{Proof of Lemma~\ref{lem:oadm}}\label{append:proof_oadm_lemma}
\begin{proof}
We show that Assumption 3 of \cite{wang2013online} is satisfied, thus the results follow from Theorem 6 of  \cite{wang2013online}. For case (a), we need to show that the subgradient of the functions $F^{(k)}(\vc{\theta}; \vc{x}_t)$ are bounded. For any subgradient $\vc{g}\in \partial \left(\|F(\vc{\theta}^{(k)}; \vc{x}_t) + \vc{D}_t^{(k)}(\vc{\theta} -\vc{\theta}^{(k)}) \|_p \right)$ and for all $\vc{\theta}\in \reals^d$, we have
\begin{align*}
  \vc{g}^\top (\vc{\theta} - \vc{\theta}_1) &\leq \|F(\vc{\theta}^{(k)};\vc{x}_t) + \vc{D}_t^{(k)}(\vc{\theta} -\vc{\theta}^{(k)}) \|_p -   \|F(\vc{\theta}^{(k)};\vc{x}_t) + \vc{D}_t^{(k)}(\vc{\theta}_1 -\vc{\theta}^{(k)}) \|_p \\
  &\leq \| \vc{D}_t^{(k)}(\vc{\theta} - \vc{\theta}_1) \|_p \\
  &= \left(\sum_{i=1}^N |\vc{D}_i^\top (\vc{\theta} - \vc{\theta}_1)|^p\right)^{1/p} \\
  &\stackrel{\mbox{\tiny{Cauchy–Schwarz Ineq.}}}{\leq} \left(\sum_{i=1}^N (\|\vc{D}_i\|_2 \|\vc{\theta} - \vc{\theta}_1\|_2)^p\right)^{1/p}\\
  &\stackrel{\mbox{\eqref{eq:bounded_D}}}{\leq} N^{1/p}\sqrt{d} M_{D} \|\vc{\theta} - \vc{\theta}_1\|= M_G\|\vc{\theta} - \vc{\theta}_1\|,
\end{align*}
where  $\vc{D}_i$ is the $i$-th row of $\vc{D}_t^{(k)}$.
Now given that the above holds for all $\vc{\theta}$, we can show that for every element $i\in [d]$, $\vc{g}_i$ is bounded by $M_G$; to see this set $\vc{\theta} = \vc{\theta}_1 + e_i$ and it follows that $\vc{g}_i\leq M_G.$ 
We  therefore conclude that  the subgradients $\partial F^{(k)}(\vc{\theta}, \zeta_t) = \{\vc{g} + h (\vc{\theta} - \vc{\theta}^{(k)})| \vc{g}\in \partial F_{\vc{\theta}^{(k)}}(\vc{\theta}_1, \zeta_t)\}, \vc{\theta}\in \mathcal{C}$ are bounded:
\begin{align}\label{eq:bound_subgrad}
 \|\vc{g}_F\|_2 = \|\vc{g} + h (\vc{\theta} - \vc{\theta}^{(k)})\|_2 &\leq \|\vc{g}\|_2 + h \|(\vc{\theta} - \vc{\theta}^{(k)})\|_2 \nonumber \\
 &\leq \sqrt{d} M_G + h D_{\mathcal{C}}~~\forall \vc{g}_F\in \partial F^{(k)}(\vc{\theta}; \vc{x}_t),   \end{align}
 where the last inequality is due to the fact that $D_{\mathcal{C}}=\max_{\theta_1, \theta_2\in \mathcal{C}} \|\theta_1 - \theta_2\|_2.$
 
 Case (b) is satisfied with the choice of $\ell_2$ norm squared for Bregman distance, i.e., the term $\gamma\|\vc{\theta}_1 - \vc{\theta}_1^t\|_2^2$. Since we assumed the constraint set $\mathcal{C}$ is convex, closed and bounded, we take the diameter of the set $\mathcal{C}$ to be $$D_{\mathcal{C}}=\max_{\theta_1, \theta_2\in \mathcal{C}} \|\theta_1 - \theta_2\|_2.$$
 Thus, case (c) is satisfied as a result of initialization ($\vc{\theta}_1^1=\vc{\theta}_2^1=\vc{u}^1=0$), and the fact that $\|\vc{\theta}_1^1 - \vc{\theta}^*\|_2\leq D_{\mathcal{C}}$ and $\|\vc{\theta}_2^1 - \vc{\theta}^*\|_2\leq D_{\mathcal{C}}$. Case (d) is directly included in the assumption of Theorem~\ref{trm:oadm}. Finally, for  Case (e) we have
 \begin{align*}
     &\left|F^{(k)}(\vc{\theta}_1^{t+1}; \vc{x}_{t}) +G(\vc{\theta}_2^{t+1}) - \left(F^{(k)}(\vc{\theta}^*; \vc{x}_t) +G(\vc{\theta}^*  )\right)\right| \\
     &\leq \left|F^{(k)}(\vc{\theta}_1^{t+1}, \zeta_{t}) - F^{(k)}(\vc{\theta}^*, \zeta_t)| + |G(\vc{\theta}_2^{t+1}) - G(\vc{\theta}^*  )\right|\\&\stackrel{\mbox{(a)}}{\leq} 
     (\sqrt{d} M_G + h D_{\mathcal{C}})\|\vc{\theta}_1^{t+1} - \vc{\theta}^*\|_2 + L_G\|\vc{\theta}_2^{t+1} - \vc{\theta}^*\|_2 + \\& |\chi_{\mathcal{C}}(\vc{\theta}_2^{t+1})-\chi_{\mathcal{C}}(\vc{\theta}^*)|\\&\stackrel{\mbox{(b)}}{\leq} 
     (\sqrt{d} M_G + h D_{\mathcal{C}} + L_G)D_{\mathcal{C}},
 \end{align*}
 where we derive {(a)} using the  Lipschitz continuity of $g(\cdot)$ along with Lemma~2.6 from \cite{shalev2011online}, which states that $F^{(k)}$ is Lipschitz continuous if and only if some $\ell_p$ norm of its subgradients is bounded (that we showed in \eqref{eq:bound_subgrad}). Moreover, in deriving {(b)} we use the fact that the constraint set $\mathcal{C}$ is convex, closed, and bounded, and that $\chi_{\mathcal{C}}(\vc{\theta}_2^{t+1})=\chi_{\mathcal{C}}(\vc{\theta}^*)=0$, as $\vc{\theta}_2^{t+1}, \vc{\theta}^*\in \mathcal{C}.$ 
 So far we have shown that all cases in 
 {Assumption 3} of \cite{wang2013online} are satisfied. Also, both $F^{(k)}$ and $G$ are $h$ and $\beta$ strongly convex, respectively; the former is due to the quadratic term and the latter is explicitly stated in our assumptions. Therefore, we have shown that all assumptions in Theorem 6 \cite{wang2013online} are satisfied and the results in \eqref{eq:bounds_oadm} follow from the theorem.  
\end{proof}

\section{Batch of Independently Random Samples} \label{append:corbatch}
\begin{corollary}(Batch Setting)
Assume the assumptions of Theorem~\ref{trm:oadm} and let the sequence $\{\vc{\theta}_1^t, \vc{\theta}_2^tm \vc{u}^t\}, t\in[T]$ be generated by OADM algorithm, i.e.,  \eqref{eq:oadm}, where the step \eqref{eq:update_theta1} is replaced with the following step:
\begin{align}
    \label{eq:update_theta1_batch}
    \vc{\theta}_1^{t+1} :=& \argmin_{\vc{\theta_1}} \frac{1}{J} \sum_{j=1}^J F^{(k)}(\vc{\theta}_1; \vc{x}^j_t) + \frac{\rho}{2}\|\vc{\theta}_1 - \vc{\theta}_2^t +\vc{u}^t\|_2^2 + \frac{\gamma}{2} \|\vc{\theta}_1 - \vc{\theta}_1^t\|_2^2,
\end{align}
where at each iteration $t\in[T]$, the points $\vc{x}^j_t\in \{\vc{x}_1,\ldots, \vc{x}_n\}$ are i.i.d. samples drawn uniformly at random. Then all the results of Theorem~\ref{trm:oadm} in \eqref{eq:oadm_conv} hold.
\end{corollary}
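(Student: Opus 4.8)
The plan is to reduce the batch corollary to Theorem~\ref{trm:oadm} with essentially no new work. Writing $F^{(k)}(\vc{\theta};\vc{X}_t)\triangleq\frac{1}{J}\sum_{j=1}^{J}F^{(k)}(\vc{\theta};\vc{x}^j_t)$ for the batch-averaged loss, the update \eqref{eq:update_theta1_batch} is exactly the single-sample update \eqref{eq:update_theta1} with the per-iteration loss $F^{(k)}(\cdot;\vc{x}_t)$ replaced by $F^{(k)}(\cdot;\vc{X}_t)$. So the strategy is to re-run the proof of Theorem~\ref{trm:oadm} verbatim after this substitution, checking only that every structural property the proof relies on is inherited by $F^{(k)}(\cdot;\vc{X}_t)$.

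First I would re-invoke Lemma~\ref{lem:oadm}, which is a deterministic statement valid for any sequence of per-iteration losses satisfying Assumption~3 of \cite{wang2013online}. The averaged loss $F^{(k)}(\cdot;\vc{X}_t)$ is a convex combination of the functions $F^{(k)}(\cdot;\vc{x}^j_t)$, hence it is $h$-strongly convex (each summand carries the same $\frac{h}{2}\|\vc{\theta}-\vc{\theta}^k\|_2^2$ term from \eqref{eq:newFrandom}), and any of its subgradients lies in $\frac{1}{J}\sum_{j}\partial F^{(k)}(\cdot;\vc{x}^j_t)$, so the bound $\sqrt{d}M_G+hD_{\mathcal{C}}$ established in the proof of Lemma~\ref{lem:oadm} — which is uniform over samples — is preserved. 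Cases (b)–(e) of Assumption~3 involve only the Bregman term, the diameter of $\mathcal{C}$, strong convexity, and a Lipschitz-type gap that again holds uniformly over samples, so they are unchanged. Thus the deterministic bounds \eqref{eq:obj_bound}–\eqref{eq:feas_bound} hold with $\vc{x}_t$ replaced by $\vc{X}_t$, and in particular \eqref{eq:feas_eq} follows exactly as in the single-sample argument.

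For the probabilistic statements I would set $\delta_t\triangleq F^{(k)}(\vc{\theta}_1^t)-F^{(k)}(\vc{\theta}_1^t;\vc{X}_t)-F^{(k)}(\vc{\theta}^*)+F^{(k)}(\vc{\theta}^*;\vc{X}_t)$. Since the batch $\vc{X}_t$ is drawn independently of $\vc{\theta}_1^t$ (a function of $\vc{X}_1,\dots,\vc{X}_{t-1}$) and each $\vc{x}^j_t$ is uniform, $\E[F^{(k)}(\vc{\theta};\vc{X}_t)\mid\vc{X}_1,\dots,\vc{X}_{t-1}]=F^{(k)}(\vc{\theta})$ for $\vc{\theta}\in\{\vc{\theta}_1^t,\vc{\theta}^*\}$, so $\{\delta_t\}$ remains a martingale-difference sequence exactly as in \eqref{eq:zeromean}. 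For the magnitude bound, $|F^{(k)}(\vc{\theta})-F^{(k)}(\vc{\theta};\vc{X}_t)|\le\frac{1}{J}\sum_{j}|F^{(k)}(\vc{\theta})-F^{(k)}(\vc{\theta};\vc{x}^j_t)|\le\sqrt[p]{N}M_F+\sqrt[p]{N}\sqrt{d}M_DD_{\mathcal{C}}$ by applying \eqref{eq:bound_fi} termwise, so $\delta_t^2\le\sigma^2$ with the same $\sigma$ as in \eqref{eq:squared}. With conditional mean zero and this uniform bound in hand, the sub-Gaussian moment-generating-function estimate \eqref{eq:exp_gamma}, the induction leading to \eqref{eq:all_sum}, the Markov-inequality step, and the final tail bound carry over word for word, yielding \eqref{eq:oadm_opt} and \eqref{eq:oadm_opt_bound} with the same constants $k_1,k_2$.

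There is no genuine obstacle here: the whole content is the observation that a convex combination of the per-sample losses preserves (i) $h$-strong convexity, (ii) the uniform subgradient bound $M_G$, and (iii) the uniform per-sample gap bound of \eqref{eq:bound_fi}, while independence of the batches preserves the martingale structure of $\{\delta_t\}$; every other step in the proof of Theorem~\ref{trm:oadm} is oblivious to whether one or $J$ samples are used per iteration. If a sharper statement were desired, one could note that $\var(\delta_t)$ in fact decreases like $1/J$, which would tighten the constant in \eqref{eq:oadm_opt_bound}, but this refinement is not needed for the claim as stated.
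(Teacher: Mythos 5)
Your proposal is correct and follows essentially the same route as the paper: the paper's proof simply redefines the per-iteration loss as the batch average (its Eq.~\eqref{eq:newFrandomJsamples_suppl}) and asserts that the argument of Theorem~\ref{trm:oadm} carries over verbatim. Your write-up just makes explicit the checks the paper leaves implicit (preservation of $h$-strong convexity, the uniform subgradient and gap bounds under averaging, and the martingale-difference structure of $\delta_t$), which is consistent with and no different in substance from the paper's reduction.
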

\begin{proof}
We only need to adapt the definition in \eqref{eq:newFrandom} to this batch setting as follows:
\eqref{eq:newFrandom} as follows
\begin{equation} \label{eq:newFrandomJsamples_suppl}
    F^{(k)}(\vc{\theta};[\vc{x}^j]_{j\in [J]}) \triangleq \frac{1}{J} \sum_{j=1}^{J} F_{\vc{\theta}^{(k)}}(\vc{\theta}, \vc{x}^j) + \frac{h}{2} \|\vc{\theta} - \vc{\theta}^{(k)}\|_2^2.
\end{equation}
Then all the results follow similar to the proof presented in App.~\ref{append:proofoadm}.
\end{proof}

\section{Experimental Details}\label{app:algodet}

 \noindent\textbf{Algorithm Hyperparameters and Stopping Criteria.} For all algorithms we use a batch size of 8; we avoid using larger batch sizes as the computation time increases.
Additional details and stopping criteria used for each algorithm are as follows:
\begin{itemize}
    \item \mboa{}: We run Alg.~\ref{alg:MBO} for 20 iterations, however we stop earlier if we do no see an improvement in the objective. In the $k$-th iteration of Alg.~\ref{alg:MBO}, for solving the sub-problems (Line~\ref{alg_line:inner_prob}), we run iterations of  SADM \eqref{eq:oadm} for a maximum  of 200 rounds or  until the primal and dual residuals are less then $0.95 ^k \epsilon,$ where we set $\epsilon=0.1$ and $0.001$ for training auto-encoders and multi-target regression, respectively. 
    \item \mbob{}: Again, we run Alg.~\ref{alg:MBO} for 20 iterations and stop earlier if the objective does not improve. For solving the sub-problems (Line~\ref{alg_line:inner_prob}), we run the stochastic gradient descent (SGD) with the learning rate of $10^{-5}$ and $10^{-3}$, respectively, for training auto-encoders and multi-target regression. In all cases we  set the momentum parameter to 0.9 and run SGD for 500 iterations. 
    \item \sgd{}: We run the stochastic gradient descent (SGD) algorithm with momentum with the learning rate of $10^{-6}$ and  the momentum parameter set to 0.9. This corresponds to the algorithm by Mai and Johanssen \cite{mai2020convergence} applied to our setting.  We run \sgd{} for $10^3$ and $10^4$ iterations for  training auto-encoders and multi-target regression, respectively; we observe that the algorithm achieves its minimum within this number of iterations. At each iteration, \sgd{} evaluates the objective $F_{\texttt{OBJ}}$ and outputs the solution for the best observed objective. 
\end{itemize}

\noindent \textbf{Implementation.} We implement all the algorithms in Python 3.7 and using the  PyTorch backend.  We run  all algorithms  on CPU machines that have  Intel(R) Xeon(R) CPUs (E5-2680 v4) with 2.4GHz clock speed.

\noindent \textbf{Applications.} For both applications, we set the regularizer as $g(\vc{\theta})=\frac{0.001}{2} \|\vc{\theta}\|_2^2$ and do not consider a constraint set, i.e., $\mathcal{C}=\reals^d.$
\begin{itemize}
    \item \textbf{Training Autoencoders}. We use a neural network with two convolutional and two de-convolutional layers; the convolutional layers have 8 and 4 output channels, respectively, and $3\times 3$ kernel weights. The de-convolutional layers exactly mirror the convolutional layers. We do not apply zero padding or dilation for any of the layers and use a convolution step size of 1. We apply a soft-plus activation function after each layer. 
    
    \item \textbf{Multi-target Regression}. We use a network with two layers; the first layer is a 1-dimensional convolutional layer with the kernel size of 3 and no zero padding or   dilation and the step size of 1. The second layer is a fully-connected layer with  278 hidden units and output size of 16 (the target size). We again apply the soft-plus activation after each layer. 
\end{itemize}
Note that we choose soft-plus activation, i.e., a smooth version of the ReLu, to make sure that the functions $F(\vc{\theta};\vc{x})$ are smooth. 

\noindent \textbf{Datasets.} For each of the applications we use two datasets:
\begin{itemize}
    \item \textbf{Multi-Target Regression.} We use \texttt{SCM1d}, from the collection of regression data made available by  \cite{Spyromitros-Xioufis2016}. 
    \texttt{SCM1d}\footnote{\href{http://users.auth.gr/espyromi/mtr/mtr-datasets.zip}{http://users.auth.gr/espyromi/mtr/mtr-datasets.zip}} is a supply chain management dataset comprising 9803 samples with 280 predictors and 16 targets. 
    We use 80 percent of data for training, i.e., solving \eqref{eq:problem}, and the rest of it as the test set. 
    \item \textbf{Auto-encoders.} \sloppy We use two well-known datasets \texttt{MNIST}\footnote{\href{https://pytorch.org/vision/stable/datasets.html\#mnist}{https://pytorch.org/vision/stable/datasets.htm\#mnist}} and \texttt{Fashion-MNIST}\footnote{\href{https://pytorch.org/vision/stable/datasets.html\#fashion-mnist}{https://pytorch.org/vision/stable/datasets.html\#fashion-mnist}}. They both have greyscale $28\times 28$ images with 60,000  training samples and 10,000 testing samples. \texttt{MNIST} contains handwritten digits with 10 classes and \texttt{Fashion-MNIST} contains images of clothing items from 10 classes. 
\end{itemize}

\fussy 

\section{Classification Task}\label{app:classification}
 For autoencoder tasks, we train a  logistic regression model, where the input features are the outputs of the first two convolutional layers used for encoding data. We set  the parameters of the encoder to be the solutions obtained by the respective autoencoder training algorithm. Using the encoder output for training  sets and the corresponding target labels, we  train the logistic regression, using the $\ell_2$ regularizer. We train the latter with  different regularizer coefficients (0.01, 0.1, and 1.0) and report the best observed accuracy on the test sets in Fig.~\ref{fig:cls_task}. Full experimental results for \texttt{MNIST} and  \texttt{Fashion-MNIST} are reported in Tables~\ref{tab:acc_mnist} and \ref{tab:acc_fmnist},  respectively.


\begin{table}[t]
\parbox{0.5\textwidth}{
\centering

\resizebox{0.49\textwidth}{!}{
\begin{tiny}
    \begin{tabular}{|c|c|c|c|c|}\hline
         $\outl$ & $p$ &\multicolumn{3}{|c|}{\texttt{Acc.}}\\\hline
                     &     & \mboa{} & \mbob{} &\sgd{}\\\hline
          0.0 & $\ell_2^2$ & -&- &0.876\\
          0.0 & 2.0 &{0.926}& 0.894 &0.868\\
          0.0 & 1.5 & {0.930} &  0.882& 0.861\\
         0.0&  1.0 & \textbf{0.936} &0.904& 0.861\\\hhline{|=|=|=|=|=|}
        0.05 & $\ell_2^2$ & -  & - &0.928\\
         0.05 & 2.0 & {0.925}& 0.921 & 0.914\\
        0.05 & 1.5 &0.928 &{0.929}& 0.867\\
          0.05 & 1.0 & \textbf{0.935}  &0.930&0.863 \\\hhline{|=|=|=|=|=|}
            0.1& $\ell_2^2$ &  -& - &0.907\\
            0.1&  2.0 & {0.925}&0.924&0.916\\
            0.1&  1.5 & 0.927 &{0.930}& 0.877\\
           0.1&  1.0 & \textbf{0.935} &0.926& 0.873\\\hhline{|=|=|=|=|=|}
            0.2& $\ell_2^2$ & - &-  &0.897\\
            0.2&  2.0 & {0.928} &0.856&0.914\\
            0.2&  1.5 &  {0.928}&{0.928}& 0.892\\
           0.2&  1.0 & \textbf{0.931} &\textbf{0.931}& 0.894\\\hhline{|=|=|=|=|=|}
            0.3& $\ell_2^2$ & - & - &0.867 \\
            0.3&  2.0 & 0.923 &{0.925} &0.913\\
            0.3&  1.5 &0.927 &\textbf{0.929}&0.914\\
            0.3&  1.0 &{0.927} &0.922&0.918\\
            \hline
        \end{tabular}
    \end{tiny}
        }
        \caption{Accuracy of the classifiers for \texttt{MNIST} dataset.}
    \label{tab:acc_mnist}
}
   \parbox{0.5\textwidth}{
\centering
\resizebox{0.49\textwidth}{!}{
    \begin{tiny}
        \begin{tabular}{|c|c|c|c|c|}\hline
        
         $\outl$ & $p$ &\multicolumn{3}{|c|}{\texttt{Acc.}}\\\hline
                     &     & \mboa{} & \mbob{} &\sgd{}\\\hline   
           0.0 & $\ell_2^2$ & -& -&{0.754}\\
            0.0 & 2.0 &{0.836} & 0.824 &0.787\\
             0.0 & 1.5 &  {0.848}&  0.843& 0.793\\
         0.0&  1.0 &\textbf{0.853}  &\textbf{0.853}& 0.794\\\hhline{|=|=|=|=|=|}
         0.05 & $\ell_2^2$ &-  & - &{0.770}\\
        0.05 & 2.0 & {0.835} & 0.832 & 0.806\\
        0.05 & 1.5 &0.841 &{0.847}& 0.803\\
         0.05 & 1.0 & \textbf{0.850} &0.847&0.814 \\\hhline{|=|=|=|=|=|}
        0.1& $\ell_2^2$ &-  &  -&{0.806}\\
        0.1&  2.0 &0.835 &{0.836}&0.812\\
        0.1&  1.5 & 0.837 &{0.846}& 0.811\\
        0.1&  1.0 &\textbf{0.854}  &0.848& 0.783\\\hhline{|=|=|=|=|=|}
        0.2& $\ell_2^2$ &-  &-  &{0.754}\\
        0.2&  2.0 & 0.826 &{0.832}&0.821\\
        0.2&  1.5 & 0.815 &{0.837}& 0.815\\
        0.2&  1.0 & \textbf{0.849} &0.832& 0.758\\\hhline{|=|=|=|=|=|}
        0.3& $\ell_2^2$ &-  & - &{0.757} \\
        0.3&  2.0 & 0.817 &{0.827} &0.823\\
        0.3&  1.5 & {0.838}&0.837 & 0.826\\
        0.3&  1.0 & \textbf{0.846}& 0.819& 0.783\\
            \hline
        \end{tabular}
    \end{tiny}
    }
    \caption{Accuracy of the classifiers for \texttt{Fashion-MNIST} dataset.}
    \label{tab:acc_fmnist}
   }
\vspace{-5mm}
\end{table}

\section{Robustness Analysis}\label{app:robust}
Here we report non-outliers loss and test loss for \texttt{MNIST} for $p=1.5$. Similar to results for $p=1,2$ in Fig.~\ref{fig:ftest_nout_scale}, we see that both \mboa{} variants and \sgd{} generally stay constant w.r.t $\outl$, for $p=1.5$ as well. Moreover, MBO variants again obtain lower loss values. We also see that in the high outlier regime, $\outl=0.3,$ \mboa{} performs poorly, similar to Figures~\ref{fig:fout_scale_1} and \ref{fig:ftest_scale_1}.

\begin{figure}
    \subfloat[Non-outliers Loss, $p=1.5$]{\includegraphics[width=0.5\textwidth]{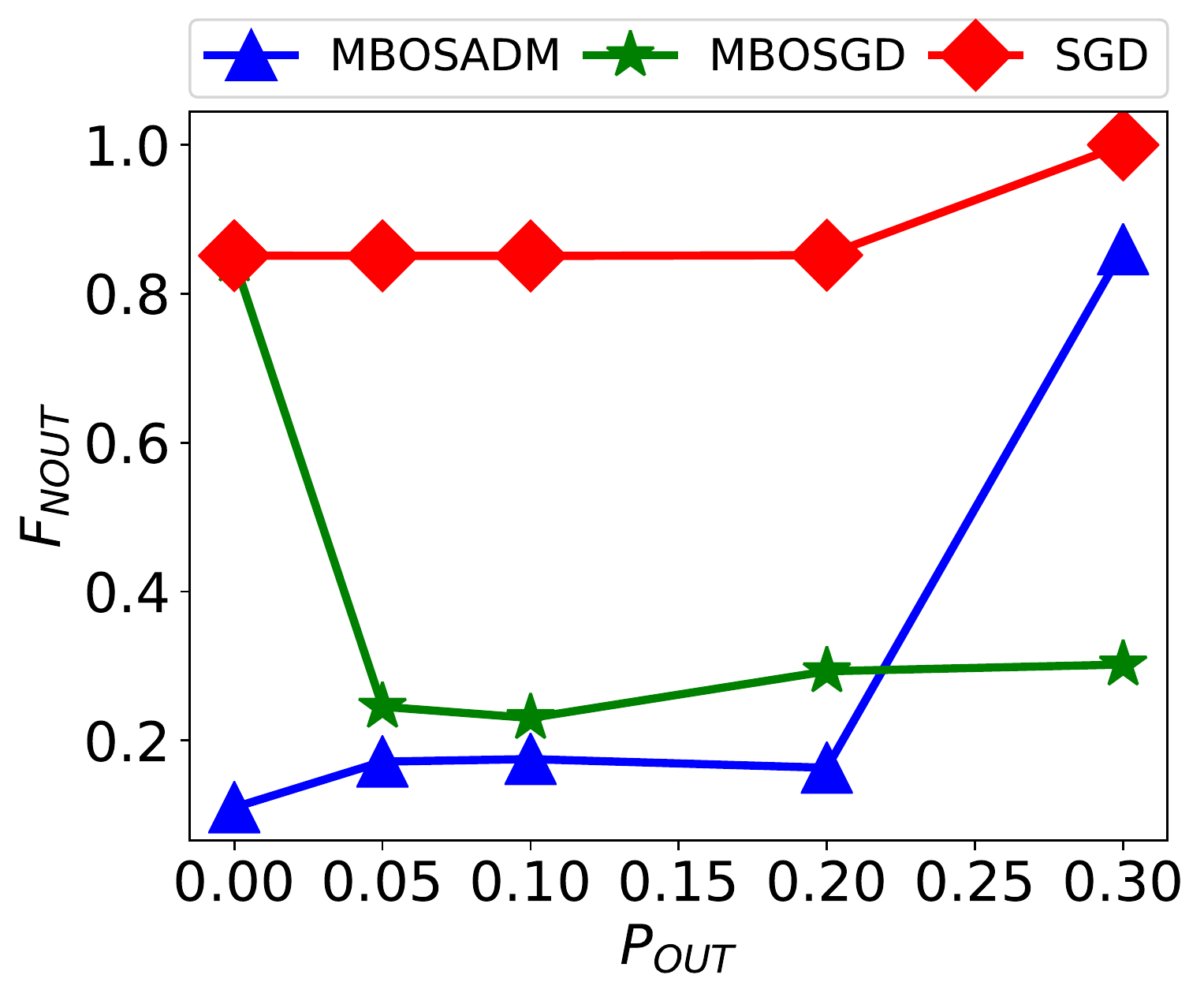}}
    \subfloat[Test Loss, $p=1.5$]{\includegraphics[width=0.5\textwidth]{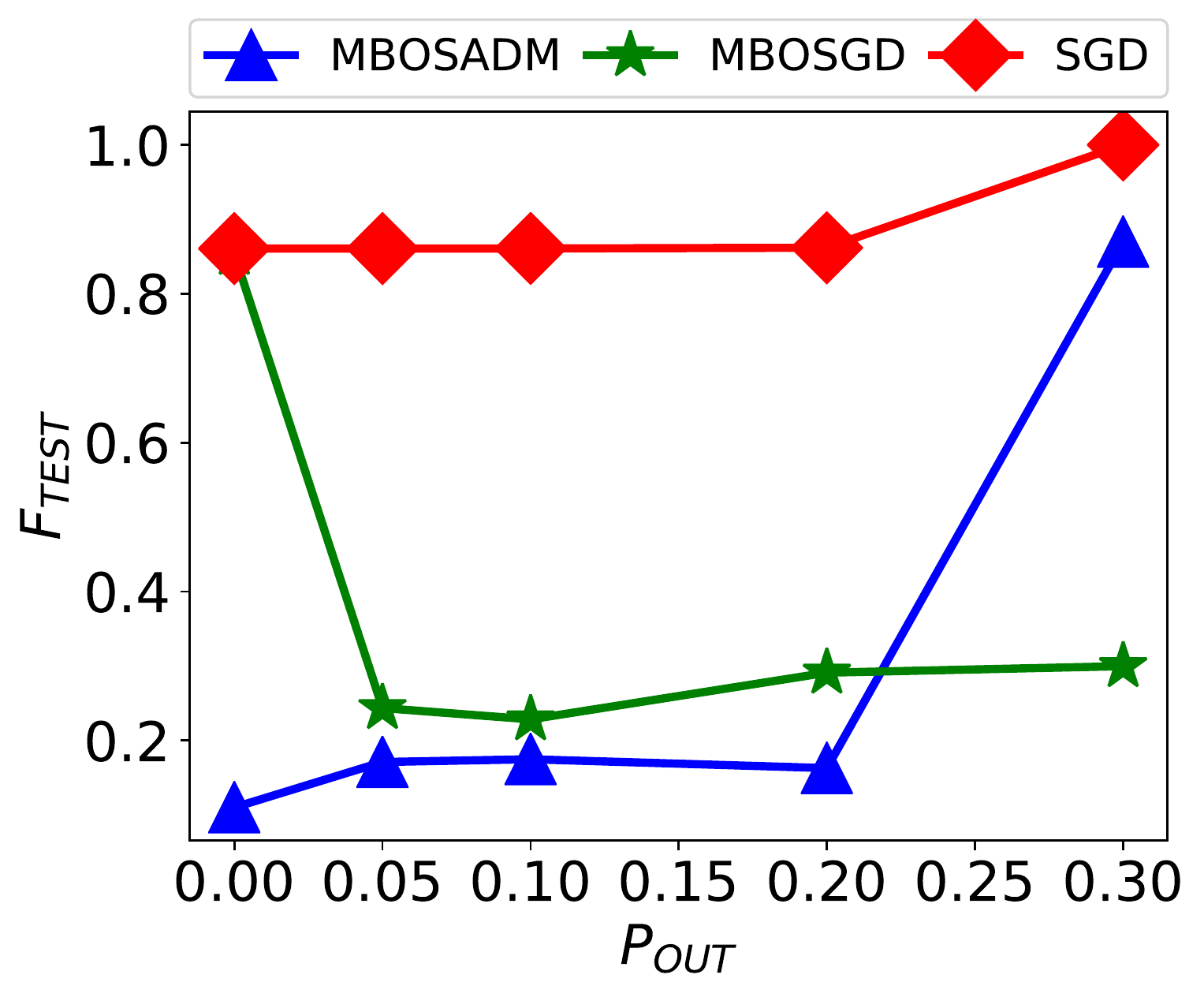}}
    \caption{Scalability  of the non-outliers loss $F_{\texttt{NOUT}}$ and the test loss $F_{\texttt{TEST}}$ for $p=1.5.$ Similar to Fig.~\ref{fig:ftest_nout_scale}, we see that both \mboa{} variants and \sgd{} generally stay constant w.r.t $\outl$ when for $p=1.5.$ We see that MBO variants obtain lower loss values. We also see that in the high outlier regime, $\outl=0.3,$ \mboa{} performs poorly, similar to Figures~\ref{fig:fout_scale_1} and \ref{fig:ftest_scale_1}.}\label{fig:ftest_nout_scale_app}
\end{figure}

\section{Proximal Operator for $\ell_p$ norms}\label{app:prox_op}
The proximal operator for $\ell_p$ norms ($p\geq1$), is the following:
\begin{align}\label{eq:POnorm}
\vspace{-2mm}
\textstyle 
\min_{\vc{u}\in\reals^{d}} \|\vc{u}\|_p+ \frac{\rho}{2} \|\vc{u}- \vc{w}\|_2^2,
\end{align}
for a given $\vc{w}\in \reals^{d}$.   Liu and Ye \cite{liu2010efficient}  define  a non-negative vector $\hat{\vc{w}}$ via
\begin{align}\hat{w}_i =\rho |w_i|~~\forall i\in[d].\label{eq:resc}\end{align}
They then consider the following simpler problem:
\begin{align}\textstyle\min_{\vc{u} \in \reals_+^d} \|\vc{u}\|_p +\frac{1}{2} \|\vc{u}- \hat{\vc{w}}\|_2^2. \label{eq:newprob} \end{align}
Note that this differs from Prob.~\eqref{eq:POnorm} in that  (a) $\rho=1$ and (b) vector $\vc{w}\in\reals^d$ replaced with non-negative vector $\vc{w}\in \reals^{d}_+$, and (c) optimization happens over $\vc{u}\in \reals_+^d$. Nevertheless,  Prob.~\eqref{eq:POnorm} is equivalent to Prob.~\eqref{eq:newprob} \cite{moharrer2020massively}.

\begin{algorithm}[!t]
    \caption{$p$-norm Prox. Operator}\label{alg:PO}
	\begin{footnotesize}

     \begin{algorithmic}[1]
     \State \textbf{Input:} $\vc{w}\in \reals^d$, $p\geq 1$, $\rho>0$, $\varepsilon>0$
     \State {Set $\hat{w}_i \leftarrow \rho |w_i|$ for $i=1,\ldots,d$.}
     \If{$\|\hat{\vc{w}}\|_q\leq 1$} \label{lin:norm1}
     \State {\textbf{Return}} $\vc{u}^*\leftarrow \vc{0}$
     \EndIf 
     \State Set ${\vc{u}}\leftarrow \vc{0}$,  $s_L \leftarrow 0$, and $s_U \leftarrow \|\hat{\vc{w}}\|_p$
     \For{$k=1,\ldots,\log_2\left\lceil \frac{1}{\varepsilon}\right\rceil $}
         \State Set $s \leftarrow (s_L + s_U)/2$
         \State Compute ${u}_i \leftarrow \hat{w}_i g\left(s \cdot (\hat{w}_i)^{\frac{2-p}{p-1}}\right)$ for all  $i\in [d]$;\label{enum:algo_PO_norm_step_A}
         \State Compute $\|{\vc{u}}\|_p$;\label{lin:norm2}
         \If{ ${\|{\vc{u}}\|_p} < s$}
         \State Set $s_U \leftarrow s$
         \Else 
         \State Set $s_L \leftarrow s$
         \EndIf
     
     \EndFor 
      \State Set $u_i^* \leftarrow \frac{\sign w_i}{\rho} {u}_i$ for $i=1,\ldots,d$.
     \State {\textbf{Return}}$\vc{u}^*$ 
     \end{algorithmic}
	\end{footnotesize}
\end{algorithm}

Liu and Ye  \cite{li2010l1} and Moharrer et al. \cite{moharrer2020massively} then solve  Prob.~\eqref{eq:newprob}. To do so, they  define first an auxiliary function as follows. 
Given $\alpha\in(0,\infty)$, they define the function $\alpha\mapsto g(\alpha)$,  as the unique solution of the following equation over $x \geq 0$: 
\begin{align} \label{eq:simple_equation_PO_norm}
\left({x}/{\alpha}\right)^{p-1} + x - 1=0,
\end{align}
They extend $g$ to $[0,\infty)$ by setting $g(0)\equiv 0$ for $\alpha=0$, by definition. 
Having defined $g$, given a  vector $\hat{\vc{w}}\in \reals_+^d$, they also define functions $g_i:\reals_+\rightarrow \reals_+, i\in[d]$ as: 
\begin{align}\label{eq:gi}g_i(s)=\hat{w}_i \cdot g\big(s\cdot (\hat{w}_i)^{\frac{2-p}{p-1}}\big),\end{align}
as well as function $h:\reals_+ \rightarrow \reals$ as:
\begin{align}\label{eq:funch}
h(s) = \textstyle\big( \sum_{i=1}^d g_i(s)^p \big)^{\frac{1}{p}} -s.
\end{align}

The bisection algorithm \cite{liu2010efficient,moharrer2020massively} for solving \eqref{eq:newprob}, summarized in Alg.~\ref{alg:PO}, proceeds as follows: given $\hat{\vc{w}}\in \reals_+^d$,  it  tests whether the condition $\|\hat{\vc{w}}\|_q\leq 1$ holds; if so, it returns $\vc{u}^*=\vc{0}$. Otherwise, 
Liu and Ye \cite{liu2010efficient} show that the solution is a function of $s^*,$ i.e., the root of the equation $h(s^*)=0.$ 
Alg.~\ref{alg:PO} finds a root 
$s^*$ via bisecting $[0,\|\hat{\vc{w}}\|_p]$. That is, at each iteration, it maintains an upper ($s_U$) and lower ($s_L$) bound on $s^*$, initialized at the above values. By construction, function $h$ alternates signs on each of the two bounds: i.e., $h(s_L)h(s_U)\leq 0;$ at each iteration, the method  (a) computes the average $s=0.5(s_L+s_U)$, between the two bounds, (b) find the sign of $h$ on this average, and then (c) update the bounds accordingly. 
For more information on the guarantees of the algorithm see Theorems~4.2 and 4.3 in \cite{moharrer2020massively}. 

\end{document}